\renewcommand{\phi}{\varphi}
\renewcommand{\P}{\mathbb{P}}
\newcommand{\E}{\mathbb{E}}
\newcommand{\N}{\mathbb{N}}
\newcommand{\cB}{\mathcal{B}}
\def\ds1{\mathds{1}}
\renewcommand{\epsilon}{\varepsilon}
\renewcommand{\tilde}{\widetilde}
\newlength{\minipagewidth}
\newcommand{\beq}{\begin{equation}}
\newcommand{\eeq}{\end{equation}}
\newcommand{\beqa}{\begin{eqnarray}}
\newcommand{\eeqa}{\end{eqnarray}}
\newcommand{\beqan}{\begin{eqnarray*}}
\newcommand{\eeqan}{\end{eqnarray*}}
\def\ba#1\ea{\begin{align*}#1\end{align*}} 
\def\banum#1\eanum{\begin{align}#1\end{align}} 
\def \cR {\mathcal{R}}
\newtheorem{theorem}{Theorem}
\newtheorem{lemma}{Lemma}
\newtheorem{remark}{Remark}
\newtheorem{proposition}{Proposition}
\newcommand{\BlackBox}{\rule{1.5ex}{1.5ex}}  
\newenvironment{proof}{\par\noindent{\bf Proof\ }}{\hfill\BlackBox\\[2mm]}
\begin{document}

\title{Non-Stochastic Multi-Player Multi-Armed Bandits: \\
Optimal Rate With Collision Information, Sublinear Without}

\author{S\'ebastien Bubeck \\
Microsoft Research
\and Yuanzhi Li \thanks{Part of this work was done while Y. Li, Y. Peres, and M. Sellke were at Microsoft Research.} \\
Stanford University
\and 
Yuval Peres \footnotemark[1]
\and
Mark Sellke \footnotemark[1]\\
Stanford University}
\date{\today}

\maketitle

\abstract{We consider the non-stochastic version of the (cooperative) multi-player multi-armed bandit problem. The model assumes no communication at all between the players, and furthermore when two (or more) players select the same action this results in a maximal loss. We prove the first $\sqrt{T}$-type regret guarantee for this problem, under the feedback model where collisions are announced to the colliding players. Such a bound was not known even for the simpler stochastic version. We also prove the first sublinear regret guarantee for the feedback model where collision information is not available, namely $T^{1-\frac{1}{2m}}$ where $m$ is the number of players.
}

\section{Introduction}
We consider a {\em decentralized/multi-player} version of the adversarial multi-armed bandit problem \citep{ACFS03} and its generalization to multiple plays \citep{UNK10}. Let us first describe the classical centralized version: At each time step $t=1, \hdots, T$, a centralized agent selects a set $S_t \subset [K]$, $|S_t| = m$, of $m$ actions, and simultaneously an adversary selects a loss for each action $\ell_t : [K] \rightarrow [0,1]$. The player's feedback is the set of suffered losses $(\ell_t(a))_{a \in S_t}$ (so-called semi-bandit feedback \citep{ABL14}). The player has access to external randomness, and can select her set of actions $S_t$ based on the history $(S_s, \ell_s(a))_{a \in S_s})_{s<t}$. The agent's perfomance at the end of the game is measured through the 
{\em pseudo-regret} (the expectation is with respect to the randomness in her strategy) :
\[
R_T = \max_{S \subset [K], |S| = m} \E \sum_{t=1}^T \left(\sum_{a \in S_t} \ell_t(a) - \sum_{a \in S} \ell_t(a)\right) \,.
\]
The optimal regret in this centralized setting is known to be $\Theta(\sqrt{K T m})$ \citep{ABL14}. We refer to \cite{BC12, LS19} for more background on bandit problems.
\newline

In this paper we are interested in the {\em decentralized} version of this problem, where there are $m$ independent players chosing the actions instead of a single agent choosing all $m$ actions at once. We assume that each player observes only its own loss, and that there is no communication at all between the players. Moreover when two or more players select the same action at a given round they all get a loss of $1$ instead of the true underlying loss of that action (as well as a signal that a collision occured). This decentralized setting with collision was first introduced, roughly at the same time, in \citet{LJP08, LZ10, AMTS11}, motivated by cognitive radio applications. The no-communication aspect was emphasized in \citet{AM14, RSS16}. More recently an even more challenging setting was proposed, where in case of a collision the players do not even get the information that a collision occured (they only see a loss of $1$) \citet{BBMKP17, LM18, BP18}. All of the works mentioned so far have focused on the classical {\em stochastic} version of the problem \citep{Rob52} where the loss sequence $(\ell_t)_{t \in [T]}$ is assumed to be i.i.d. (in this stochastic setting, the centralized multiple plays problem discussed above go back to \citet{AVW87}). The non-stochastic version that we study here was mentioned as an open problem in \cite{RSS16} with collision information, and in \cite{LM18} without collision information. We note this non-stationary model is particularly appropriate in the context of cognitive radio applications. Very recently a first result for the collision information case was posted on arXiv \citep{ALK19}, with a suboptimal $T^{2/3}$ regret. We note that even with the (much) stronger stochastic assumption, no $\sqrt{T}$-regret is known for this multi-player multi-armed bandit problem.
\newline

In this paper we prove that with collision information the players can actually obtain the optimal $\sqrt{T}$-regret. Furthermore without collision information we propose the first sublinear strategy, although with regret degrading rapidly as the number of player increases, namely $T^{1-\frac{1}{2 m}}$. 
These results are proved for an {\em oblivious} adversary, that is the entire loss sequence $(\ell_t)_{t \in [T]}$ is chosen at the beginning of the game. 
We show that that this assumption is necessary to obtain sublinear regret, that is we prove that an adaptive adversary can induce a worst-case regret of $\Omega(T)$ (even if the players have access to collision information). 
This gap between no non-trivial guarantee for adaptive adversaries and sublinear regret for oblivious adversaries is reminiscent of bandit with switching cost \citep{DDKP14}. However, interestingly, in the latter case the oblivious minimax regret is $\tilde{\Theta}(T^{2/3})$ while here the gap is even more striking as we achieve the optimal $\tilde{O}(\sqrt{T})$ regret against oblivious adversaries.
\newline


For sake of clarity in this preliminary version we primarily focus on the two players case. We briefly discuss the generalization to $m>2$ players with collision information in Section \ref{sec:discussion}, and we give more details for the no-collision case in Section \ref{sec:withoutmany}.

\section{Model and main results} 
We consider two players, Alice and Bob. At each time step $t=1, \hdots, T$, Alice chooses an action $A_t \in [K]$ and Bob chooses $B_t \in [K]$, possibly using external sources of randomness (i.e., uniform random variables in $[0,1]$) $R_t^A$ and $R_t^B$. In addition to the fresh randomness, these actions are chosen based on their respective past history $H_t^A=(R_s^A, \max(\ell_s(A_s), \ds1\{A_s = B_s\}))_{s < t}$ and $H_t^B=(R_s^B, \max(\ell_s(B_s), \ds1\{A_s = B_s\}))_{s < t}$ (note in particular that in the event of a collision, both players observe a loss of $1$). Moreover in the case where collision information is available, we add $\ds1\{A_s = B_s\}$ to the history of both players, e.g., $H_t^A=(R_s^A, \max(\ell_s(A_s), \ds1\{A_s = B_s\}), \ds1\{A_s = B_s\})_{s < t}$.
Whether it is with or without collision information, our notion of regret is as follows:
\[
R_T = \max_{a \neq b, a,b \in [K]} \E \sum_{t=1}^T \bigg(\max(\ell_t(A_t), \ds1\{A_t = B_t\}) + \max(\ell_t(B_t), \ds1\{A_t = B_t\}) - (\ell_t(a) + \ell_t(b))  \bigg) \,.
\]

\subsection{Sources of randomness} \label{sec:randomness}
We consider two models for the external sources of randomness: (a) {\em shared randomness} where $R_t^A = R_t^B$, and (b) {\em non-shared randomness} where $R_t^A$ and $R_t^B$ are independent. Our two core results are for the non-shared randomness model, namely that one can get the classical and optimal $\sqrt{T}$-regret with collision information, and that sublinear regret is actually achievable even without collision information. We introduce the shared randomness model for two reasons. First it is quite natural, especially from the minimax perspective, but also for algorithm design. Indeed one can for example easily get a sublinear regret strategy without collision information but with shared randomness: simply run two versions of Exp3, one for Alice and one for Bob, and couple the draws so as to minimize the number of collisions (we note that such a strategy, while sublinear, cannot possibly achieve $\sqrt{T}$-regret because Alice and Bob's history are diverging too quickly, leading to many collisions). Second (and most important for us), our $\sqrt{T}$-regret strategy is actually more easily described assuming shared randomness. We then show a simple argument (related to pseudorandom generators) to ``derandomize" the shared randomness part by using the collision information.



\subsection{With collision information} \label{subsec:with}
Our main result is the first ever $\sqrt{T}$-regret guarantee (even for the stochastic model\footnote{By making even stronger assumptions in the stochastic model, e.g., bounded gaps or average losses bounded away from $1$, $\sqrt{T}$-regret was derived respectively in \cite{RSS16} and \cite{LM18}.}) for this multi-player multi-armed bandit problem:

\begin{theorem} \label{thm:with}
Consider the model with collision information and no shared randomness. There exists a two players strategy such that against any oblivious adversary one has $R_T = O(K^2 \sqrt{T \log(K) \log(T)})$.
\end{theorem}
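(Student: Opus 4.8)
The overall plan is in two stages: first exhibit a strategy achieving the $\sqrt T$‑rate in the easier \emph{shared randomness} model (where the collision signal is also available), and then convert it to a no‑shared‑randomness strategy using the collision signal, losing only $\poly(K)$ and polylogarithmic factors.

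\textbf{Stage 1 (shared randomness).} With a common random tape the two players can, at every round, draw the same unordered pair $\{a,b\}$ of distinct arms from a common distribution $p_t$ over the $\binom K2$ pairs, and use one more shared bit to decide who plays $a$ and who plays $b$. Because $a\neq b$ they never collide, so their joint play is \emph{stable} and is exactly that of a single centralized agent choosing $2$‑subsets under semi‑bandit feedback, for which the optimal regret is $\Theta(\sqrt{KT})$. I would run an exponential‑weights (Exp3/OMD) update on $p_t$ over the $\binom K2$ ``super‑arms''. The one genuine difficulty is that the feedback is \emph{split}: when pair $\{a,b\}$ is played, one player sees $\ell_t(a)$ and the other $\ell_t(b)$, so neither alone can form the unbiased estimate of $\ell_t(a)+\ell_t(b)$ that the update needs. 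To supply the missing half I would use the collision channel itself: since each player can, on demand, transmit a bit to the other by choosing to collide or not collide on a pre‑agreed arm, the players can reconstruct the round's combined loss, to the precision Exp3 actually requires, at the price of a controlled number of ``signalling'' rounds. The scheduling of these signalling rounds --- enough feedback to keep $p_t$ from becoming too stale, few enough that the signalling overhead stays $\tilde O(\sqrt T)$ --- is what has to be done with care.

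\textbf{Stage 2 (removing shared randomness).} The common tape is used only to sample the pair and the ownership bit each round, so at most $\poly(K)\cdot T$ random bits are consumed over the whole game. I would replace the tape by the output of a pseudorandom generator expanded from a short seed: one player draws a seed $s$ of length $\poly(K,\log T)$ from its private randomness and sends it once to the other over $O(|s|)$ signalling rounds at the start (symmetrically, the two exchange seeds and XOR them), after which both expand $s$ deterministically. Since the regret of an Exp3‑type algorithm against an \emph{oblivious} loss sequence is a bounded, low‑complexity functional of the sampling bits, it should be fooled up to a constant factor by a standard PRG --- indeed plausibly even by a limited‑independence hash family with seed length $O(\log(KT))$ --- so the $\sqrt T$‑rate survives. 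Putting the two stages together, and charging the signalling rounds and the $\binom K2$ super‑arms, yields the stated $O(K^2\sqrt{T\log K\log T})$ bound.

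\textbf{Where the difficulty is.} The heart of the argument is Stage 1, and specifically obtaining the \emph{correct} exponent: the naive implementation in which $p_t$ is frozen over blocks and the blocks' combined losses are exchanged at block ends only gives $T^{2/3}$ (matching the prior arXiv bound), because the feedback delay induced by the block structure inflates the Exp3 regret too much. Pushing this down to $\sqrt T$ requires the signalling to be woven into the learning much more tightly --- so that updates happen essentially every round while the total number of rounds ``spent'' on signalling is only $\tilde O(\sqrt T)$ and the comparator pair is never badly disadvantaged on those rounds --- and verifying that the collision‑corrupted, delayed feedback still yields unbiased, low‑variance estimates is the main technical work. A smaller obstacle is pinning down precisely which pseudorandom family is strong enough in Stage 2 that the oblivious adversary cannot benefit from the reduced entropy.
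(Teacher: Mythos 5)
Your outer architecture matches the paper's: first solve the problem in a model with shared randomness and explicit communication, then implement communication through deliberate collisions and derandomize the shared tape with a pseudorandom generator fooling the (finitely many, oblivious-adversary-indexed) regret test functions. Stage 2 of your proposal is essentially the paper's Sections on removing shared randomness and on the communication-to-collision reduction, and is fine in spirit.

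The genuine gap is in Stage 1, and it is exactly the part you flag as ``what has to be done with care.'' Your plan --- update an exponential-weights distribution on $\binom{K}{2}$ super-arms essentially every round, using the collision channel to ship the missing half of the feedback --- cannot work at the $\sqrt{T}$ rate: every bit transmitted through the collision channel costs $\Omega(1)$ regret (a deliberate collision incurs loss $2$ for that round), so per-round exchange of losses costs $\Omega(T)$, and you yourself note that batching the exchange into blocks only recovers $T^{2/3}$. The paper's resolution is structurally different and is the actual content of the theorem: communication is forced to be rare (about $K^{3/2}\sqrt{T}$ rounds total), which forces one player (Alice) to be low-switching, and the two nontrivial ideas that make this compatible with a $\sqrt{T}$ bound are (i) a new loss estimator in which a shared random bit decides \emph{which} of the two players records its observed loss each round, tuned so that Alice's action at time $t$ remains correctly distributed even conditioned on her distributions at times $t$ and $t+1$ (the paper's equation \eqref{eq:seeyouyesterday}); this is what lets a shrinking-dartboard/filtering argument (Lemma \ref{lem:shrinking}) be run with bandit feedback, despite the known impossibility of low-switching single-player bandits; and (ii) a non-uniform assignment rule $P_t$ deciding who plays which arm of the sampled pair, without which the variance of the estimator is unbounded (the uniform assignment fails when one weight dominates). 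Between syncs Bob updates locally using only his own observations, which requires verifying that $p_{t+1}^B(\cdot\mid A_t)$ does not depend on $\ell_t(A_t)$. None of this machinery is present or replaceable by your sketch, so as written the proposal does not yield the $\sqrt{T}$ exponent.
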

We prove the above result in Section \ref{sec:with}.
\newline

The broad strokes of our $\sqrt{T}$-strategy can be summarized as follows:
\begin{enumerate} 
\item Alice plays a low-switching strategy (i.e., Alice changes actions only every $\sqrt{T}$ rounds --roughly--), inspired from the ``shrinking dartboard" strategy of \citet{GVW10}.
\item During a phase where Alice remains constant, Bob plays an algorithm such as Exp3 \citep{ACFS03} on the remaining actions.
\item When Alice decides to switch actions, she first engages in a communication protocol with Bob to sync their histories. Such a communication is easily achieved using the collision information.
\end{enumerate}
Our actual strategy is significantly more complicated than the above summary, and for good reasons as one has to overcome the following obstacles:
\begin{enumerate}
\item First of all the mixing of information between Alice and Bob is absolutely crucial, as it is known that a low-switching strategy with bandit information cannot achieve $\sqrt{T}$-regret \citep{DDKP14}. On the other hand it is also known that typically information from an ``off-policy" distribution cannot be used with Exp3 to obtain a $\sqrt{T}$-regret, see e.g. [Theorem 4.3, \cite{BC12}]. In other words we will need to reason about the {\em joint} distribution of Alice and Bob. Concretely this comes into play to control the {\em variance} of the unbiased estimators, and it will lead to non-trivial joint decision making of Alice and Bob at communication times.
\item Another reason that one needs to argue about the joint distribution of Alice and Bob is that running Exp3 on an adversarially chosen $K-1$ subset (so-called sleeping expert setting \citep{KNS10}) does not typically achieve low-regret even against the second best arm. Thus again the collaboration between Alice and Bob will be crucial here.
\item Next is perhaps the most difficult conceptual point in our work, namely the idea of doing a filtering strategy (as in \citep{GVW10}) with bandit-type information. Indeed the basic filtering idea is to say that if a random action is currently distributed from $p$, and the next target distribution $q$ does not change by more than $(1-\eta)$ multiplicatively, then one can afford to stay put with probability $(1-\eta)$, while still ensuring to be correctly distributed as the next time step (provided that in the event of a switch one resamples from an appropriately modified distribution). However with bandit feedback the next distribution actually depends on the current action, so the filtering argument has to be significantly more involved. This is the part of the argument where assuming shared randomness makes the description much easier.
\item Finally one needs to ``derandomize" the algorithm, that is to explain how to reduce the shared randomness/collision information model to non-shared randomness/collision information.
\end{enumerate}

\subsection{Without collision information}
Next we give the first sublinear regret bound for the case where collision information is not available. The extra difficulty here is that when the players see a loss of $1$, they don't know if the action was truly bad, or if the loss comes from a collision.

\begin{theorem} \label{thm:without}
Consider the model with neither collision information nor shared randomness. There exists a two players strategy such that against any oblivious adversary one has $R_T = \tilde O(KT^{3/4})$.
\end{theorem}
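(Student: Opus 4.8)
The plan is to give up on communication altogether --- without collision information a loss of $1$ is indistinguishable from a collision, and the round counter is the only shared reference, so any signalling scheme seems doomed --- and instead to exploit a \emph{slow/fast asymmetry} between the players. Alice runs a low-switching bandit: she partitions $[T]$ into $S$ blocks of length $T/S$, plays a single arm per block, and updates a batched Exp3 only between blocks. Bob runs an ordinary Exp3, restarted at each of the $S$ (publicly known) block boundaries. Both players feed their algorithm the observed loss $\max(\ell_t(\cdot),\ds1\{A_t=B_t\})$ and use independent randomness. The key point is that Bob never needs to know Alice's arm: a collision registers simply as a loss of $1$ on that arm, which Exp3 treats exactly like a bad arm and abandons.

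For the analysis, let $a_1,a_2$ be the two arms minimizing $L_a:=\sum_t\ell_t(a)$ (the best pair is this one). Observe that the per-round joint loss in the definition of $R_T$ is exactly the sum of the two losses the players' bandits observe, and that the observed total loss of any arm $a$ exceeds $L_a$ by at most the number of rounds the \emph{other} player visits $a$. Conditioning on one player's randomness turns the loss sequence faced by the other into an oblivious one, so each player's single-agent regret bound applies; taking comparator arm $a_1$ for one player and $a_2$ for the other (in either order) then gives
\[
R_T\ \le\ \mathrm{Reg}^{\mathrm{slow}}_T+\mathrm{Reg}^{\mathrm{fast}}_T+\min\!\Big(\,\E N^B_T(a_1)+\E N^A_T(a_2)\,,\ \E N^B_T(a_2)+\E N^A_T(a_1)\,\Big),
\]
where $N^A_T(a),N^B_T(a)$ count Alice's and Bob's visits to $a$, $\mathrm{Reg}^{\mathrm{slow}}_T=\tilde O(T\sqrt{K/S})$ is the regret of a batched bandit with $S$ blocks, and $\mathrm{Reg}^{\mathrm{fast}}_T=\tilde O(\sqrt{KTS})$ is the total regret of $S$ Exp3 runs over blocks of length $T/S$. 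The bracketed term is small exactly when the players reach a collision-free ``consistent assignment'': one of them essentially owns $a_1$ and the other $a_2$.

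Proving that is the heart of the matter and the step I expect to be the main obstacle. The idea is that within a block where Alice is pinned to an arm $\hat a$, Bob's fast Exp3 faces the loss sequence ``$\ell_t$ with $\hat a$ raised to $1$'', so up to its within-block regret $\tilde O(\sqrt{KT/S})$ it places vanishing mass on $\hat a$ and stays on the complement; hence sustained collisions never happen, and block by block Alice's own observed losses equal $\ell_t$ with at most one arm corrupted. Feeding this back into Alice's block-level guarantee shows she tracks $a_1$, which forces Bob onto $a_2$ (the best remaining arm for his feedback). To rule out the one bad scenario --- both players endlessly chasing whatever arm currently looks best and so colliding forever --- I would add a vanishing index-based tie-break (Alice breaks ties toward low indices, Bob toward high indices, with a perturbation $\ll 1/T$ that cannot affect the regret) so that the two are never simultaneously indifferent between $a_1$ and $a_2$. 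The delicacy is that the two independent bandits influence each other only through the nonlinear $\max$ in the feedback, so this genuinely requires controlling their \emph{joint} law --- the obstacle also flagged for Theorem~\ref{thm:with}.

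Granting the consistent-assignment bound, balancing $T\sqrt{K/S}$ against $\sqrt{KTS}$ at $S\approx\sqrt{T}$ yields $R_T=\tilde O(KT^{3/4})$. For $m>2$ players the construction nests recursively: player $i$ runs a batched low-switching bandit inside the blocks of player $i-1$, with every level split into $\approx T^{1/m}$ blocks and player $m$ running plain Exp3; the same balancing then gives $R_T=\tilde O(T^{1-1/(2m)})$ up to $\poly(K)$ factors.
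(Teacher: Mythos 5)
Your skeleton matches the paper's (a slow block-playing Alice, a fast per-block Exp3 Bob, feedback in which Alice's arm simply looks like a loss-$1$ arm to Bob), and you correctly identified the ``consistent assignment'' step as the crux. But the fix you propose for that step does not work, and this is a genuine gap rather than a detail. The problem is that an \emph{external}-regret guarantee for Alice (batched Exp3) is simply the wrong invariant: it does not prevent Alice from splitting her time among several near-optimal arms in a way that makes every assignment bad. The paper makes this concrete with a three-arm counterexample: losses $(0,1,1)$ on the first third of rounds, $(1,0,1)$ on the second, $(0,0,1)$ on the last; Alice plays arm $1$, then $2$, then $3$, incurring total loss $T/3$ and hence \emph{zero} external regret with only two switches, yet the pair's regret against $\{1,2\}$ is at least $T/3$ no matter what Bob does. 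Plugging this into your own decomposition, $N^A_T(a_1)$ and $N^A_T(a_2)$ are both $\Theta(T)$, so the $\min(\cdot,\cdot)$ term is linear; and no $o(1/T)$ tie-breaking perturbation can help, because Alice is not ``indifferent'' here --- her trajectory is fully consistent with a correct low-switching external-regret algorithm. The missing idea is that the slow player must guarantee low \emph{swap (internal) regret} over blocks: the paper compares Alice against $f(A_r)$ and Bob against $g(A_r)$, where $f_{a,b}(i)=a$ for $i\neq b$, $f_{a,b}(b)=b$ and $g_{a,b}(i)=b$ for $i\neq b$, $g_{a,b}(b)=a$, so that $\{f(A_r),g(A_r)\}=\{a,b\}$ in every block regardless of where Alice sits. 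Swap regret against the maps $\Phi_{a,b}$ is exactly the guarantee that repairs your argument, and it is a qualitatively stronger requirement than external regret.

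A secondary, smaller gap: you bound the cross-contamination only through visit counts at $a_1,a_2$, but you also need the \emph{total} number of collisions to be small, since every collision inflates the realized loss on the algorithm side and (more importantly) corrupts the feedback driving Alice's block-level learner. With Bob running plain Exp3 over all $K$ arms, the number of times he lands on Alice's arm within a block is controlled only via his regret divided by the loss gap, which can be $\Omega(T/S)$ per block when the best available arm has loss close to $1$. The paper avoids this by reserving a safe arm ($1$, which Alice never plays) and having Bob maintain a growing \emph{active set}: an arm enters only after it is observed with loss $<1$ (hence provably not Alice's arm that block), so Bob collides only on explicit exploration rounds, giving an unconditional $O(\sqrt{KRT})$ bound on collisions independent of gaps. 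With these two modifications --- swap regret for the slow player and the safe-arm/active-set mechanism for the fast one --- your decomposition and the $S\approx\sqrt{T}$ balancing go through and yield the claimed $\tilde O(KT^{3/4})$.
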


This second result is proved in Section \ref{sec:without}. Broadly speaking the strategy we propose has a similar skeleton as the collision information strategy, namely Alice is a ``slow" player while Bob is a ``fast" player. An important modification is that we now reserve a ``safe" arm for Bob (in the sense that no collision can happen by playing that arm). We also ignore all the intricacies that resulted from sharing information between the players in the collision information case, as it is not clear at all how to implicitly communicate without collision information (this is also why it seems impossible to obtain a $\sqrt{T}$-regret strategy in this setting). The algorithm is summarized as follows:

\begin{enumerate} 
\item Alice plays a low-switching strategy on the subset of arms $\{2, 3, \cdots, K \}$. In particular Alice never plays arm $1$. The low-switching is implemented by playing in blocks. In particular the times at which Alice switches action are known to Bob.
\item During a phase between two switches of Alice, Bob plays an algorithm such as Exp3 \citep{ACFS03} on a growing subset of arms $\mathcal{S}_t$. Initially, at the start $t_0$ of a new phase, $\mathcal{S}_{t_0} = \{1\}$. That is Bob starts by focusing on the safe arm $1$. During the phase Bob will regularly explore the set of Alice's actions $\{2,3,\cdots,K\}$, and when he encounters an arm with loss $<1$ he adds it to his active pool of arms $\mathcal{S}_t$ (indeed Bob now knows that Alice cannot be on this arm, for otherwise the loss would have been $1$ due to collision). On the other hand if exploring an arm always result in a loss of $1$, it means that either Alice is sitting at that action for that phase, or that this arm is actually bad, so Bob does not need to consider it to guarantee low regret.
\end{enumerate}

One difficulty here is Obstacle 2 mentioned in Section \ref{subsec:with}. With collision information we alluded to the fact that this obstacle will be resolved by careful collaboration between Alice and Bob. However in this no-collision information case such collaboration cannot happen. Instead we propose a more sophisticated argument that requires Alice to play a strategy with low {\em internal regret} \citep{Sto05, BM07}. See Section \ref{sec:without} for the details.
\newline

We also explain in Section \ref{sec:withoutmany} how to generalize this approach to $m>2$ players, and we show that in this case the regret worsens to $T^{1- \frac{1}{2m}}$.

\subsection{Adaptive adversaries}
The above results are restricted to oblivious adversaries. This is partially justified by the following result, which shows that, at least without shared randomness, one cannot obtain any non-trivial guarantees.

\begin{proposition} \label{thm:lowerbound}
Let $K=3$. For any two players strategy without shared randomness, there exists an adaptive adversary such that $R_T \geq T / 32$.
\end{proposition}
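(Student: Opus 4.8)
The plan is to fix an arbitrary pair of player strategies and build an adaptive adversary tailored to them. Write $\cH_{t-1}=(A_s,B_s,\ell_s)_{s<t}$ for the public history up to time $t$, and let $\ell_t^\star:=\max(\ell_t(A_t),\ds1\{A_t=B_t\})+\max(\ell_t(B_t),\ds1\{A_t=B_t\})$ be the players' instantaneous loss. The first observation is structural and is exactly where the absence of shared randomness enters: conditionally on $\cH_{t-1}$, the actions $A_t$ and $B_t$ are independent. Indeed, given the strategies, $A_t$ is a deterministic function of Alice's private randomness and of quantities that are $\cH_{t-1}$-measurable (her own past actions, the collision indicators, her observed losses), and similarly for $B_t$; hence the conditioning event factorizes across the two players' a-priori-independent randomness, and the conditional law of $(A_t,B_t)$ is a product distribution $p_t^A\otimes p_t^B$ on $[3]^2$ that the adversary can compute from $\cH_{t-1}$. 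The adversary then always places its whole loss on a single arm: it sets $\ell_t(c_t)=1$ and $\ell_t(j)=0$ for $j\ne c_t$, where $c_t\in\argmax_{j\in[3]}\P(\text{exactly one player plays }j\mid\cH_{t-1})$, breaking ties with a fixed rule that keeps $c_t$ on as few arms as possible (preferring $c_{t-1}$, otherwise the least index). This $c_t$ is exactly the arm maximizing the conditional expected loss, since $\E[\ell_t^\star\mid\cH_{t-1},c_t=j]=2\,\P(A_t=B_t\mid\cH_{t-1})+\P(\text{exactly one player plays }j\mid\cH_{t-1})$.

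Two elementary computations then organize the proof. First, on the no-collision event exactly two arms are occupied by one player each, so $\sum_j\P(\text{exactly one player plays }j\mid\cH_{t-1})=2\,\P(A_t\ne B_t\mid\cH_{t-1})$, and therefore the maximizing arm gives $\E[\ell_t^\star\mid\cH_{t-1}]\ge\tfrac23+\tfrac43\,\P(A_t=B_t\mid\cH_{t-1})$; summing, with $X:=\sum_t\P(A_t=B_t\mid\cH_{t-1})$ the expected number of collisions, $\E\sum_t\ell_t^\star\ge\tfrac23T+\tfrac43\E[X]$. Second, since $\ell_t(a)+\ell_t(b)=\ds1\{c_t\in\{a,b\}\}$, a fixed pair $\{a,b\}$ has expected cumulative loss $T-\E[N_e]$ where $e$ is the arm not in $\{a,b\}$ and $N_j:=\#\{t\le T:c_t=j\}$, so the best comparator costs $T-\max_j\E[N_j]$. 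Combining, and using $\sum_j\E[N_j]=T$ so that $\max_j\E[N_j]\ge T/3$,
\[
R_T\;\ge\;\Big(\max_j\E[N_j]-\tfrac T3\Big)\;+\;\tfrac43\,\E[X]\;\ge\;0 .
\]
Thus if the players collide a constant fraction of the time ($\E[X]\ge 3T/128$) we are done; the remaining and harder case is $\E[X]$ small. Here I would use the following fact about product distributions on three atoms: if $\sum_j p_t^A(j)p_t^B(j)$ is small then $\max_j(p_t^A(j)+p_t^B(j))$ is close to $1$ — with two players and three arms one cannot avoid collisions without some arm being occupied almost surely (by exactly one player). In particular, on most rounds one of the two players is "camped" (plays some arm with probability near $1$), the adversary's $c_t$ lies in the at-most-two-element set of camped arms, and $\E[\ell_t^\star\mid\cH_{t-1}]$ is near $1$.

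The crux — and the step I expect to be the main obstacle — is to upgrade this to $\max_j\E[N_j]-T/3=\Omega(T)$ when collisions are rare. The dichotomy is: if the set of camped arms does not move (the same arm stays near-fully occupied), the sticky tie-break keeps $c_t$ equal to that arm, so $\max_j\E[N_j]\approx T$ and the comparator avoiding it is essentially free against a player loss $\approx T$; if instead the camped arms move, then a camped arm can only be relocated to the single currently-unoccupied arm without creating collisions, so the pair of camped arms rotates among the three arms, and under the least-index tie-break the resulting $c_t$ is confined to $\{1,2\}$ (no two-element subset of $[3]$ has minimum $3$), hence again $\max_j\E[N_j]\ge T/2$ while some comparator among $\{1,3\},\{2,3\}$ costs at most $T/2$. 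Making this airtight requires controlling the "in between" regimes — partially camped players, the brief windows of extra collisions needed to slide mass during relocations, rounds where the occupancy argmax is strict rather than tied — and tracking constants carefully so that the surviving slack is at least $T/32$. This is precisely the quantitative statement that, without shared randomness, the players cannot reorganize their joint action distribution over three arms without either colliding or ceding the adversary a persistently over-occupied target; it is the lower-bound face of the coordination obstacle that the strategy of Theorem~\ref{thm:with} resolves using collision information, and it mirrors the oblivious/adaptive gap for bandits with switching cost noted above. Finally, the adversary just described may randomize its tie-breaking, but a standard argument fixes the randomness to produce a deterministic adaptive adversary.
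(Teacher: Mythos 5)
Your structural setup is sound: conditional on the full public history $(A_s,B_s,\ell_s)_{s<t}$, the lack of shared randomness does make $A_t$ and $B_t$ independent, and your frequent-collision branch is correct (if $\E[X]\ge 3T/128$ then the $\tfrac43\E[X]$ term alone gives $R_T\ge T/32$, since $\max_j\E[N_j]-T/3\ge 0$). But the rare-collision branch is a genuine gap, and you flag it yourself: you never prove $\max_j\E[N_j]-T/3=\Omega(T)$ there, and the ``camped arms'' dichotomy does not close it. In particular, with three arms the two players can rotate their camped positions through the free arm \emph{without ever colliding} (Alice at $1$, Bob at $2$, Alice slides to $3$, Bob slides to $1$, \ldots), so there are no ``brief windows of extra collisions'' to charge; the claim that your tie-breaking rule confines $c_t$ to $\{1,2\}$ is unjustified once the camped pair itself rotates; and partially-camped or mixed regimes are not handled. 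Since in this regime the entire bound must come from the occupancy term, the proof is incomplete exactly where the difficulty lies.

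The paper's proof avoids all of this by giving the adversary one more move: the \emph{all-zero} loss vector. The dichotomy is per round and requires no dynamics. If some player puts mass $\ge 3/4$ on some arm, the adversary loads that arm with loss $1$, and the players pay $\ge 3/4$ in expectation. Otherwise every arm has mass $<3/4$ for both players, so each player's top two arms each carry mass $>1/8$; with $K=3$ the two top-two sets must intersect in some arm $i$, hence $p_t^A(i)p_t^B(i)\ge 1/64$ and the collision alone costs the players $\ge 2/64=1/32$ in expectation --- while the comparator pays \emph{nothing}, because the loss vector is identically zero. Letting $\tau$ count the nonzero-loss rounds, the players pay $\ge 3\tau/4+(T-\tau)/32$ while the best of the three pairs pays $\le 2\tau/3$ (each loaded round contributes $2$ summed over the three pairs), and $3\tau/4-2\tau/3-\tau/32+T/32\ge T/32$. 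If you want to salvage your approach, the fix is precisely to add the zero loss vector to your adversary's repertoire on non-concentrated rounds; insisting that the adversary always charge one arm is what forces the unresolved global bookkeeping.
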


\begin{proof}
Fix a round $t$, and consider the distributions $p_t^A$ and $p_t^B$ from which $A_t$ and $B_t$ are sampled from (conditionally on respectively $H_t^A$ and $H_t^B$). If there exists a player, say $A$, and an action, say $1$, such that $p_t^A(1) \geq 3/4$, then the adversary can play the loss $\ell_t=(1,0,0)$ which induces a loss for the players of at least $3/4$. On the other hand if for both players all actions have probability less than $3/4$, then it must be that there exists an action $i$ such that  $p_t^A(i) p_t^B(i) \geq 1/64$ (indeed, the probability of both top two actions for both players must be at least $1/8$, and since there are only $3$ actions there must be a common action in their two top actions). In this case the adversary simply plays the loss $\ell_t=(0,0,0)$, which results in an expected loss for the players of at least $1/32$ (coming from the event of a collision). 

Denote $\tau$ for the number of rounds where $\ell_t \neq 0$. Then the player's total loss is at least $3/4 \cdot \tau + (T-\tau) / 32$, while on the other hand it is easy to see that there is a pair of actions whose total loss is at most $\tau \cdot 2/3$.
\end{proof}

\subsection{Open problems} \label{sec:discussion}
A number of questions remain open:
\begin{enumerate}
\item Most intriguing of all is whether one could prove {\em lower bounds} in the most challenging scenario (no collision information, no shared randomness). We believe that in this case the optimal regret with $2$ players is $\Omega(T^{2/3})$. Moreover the exponent could possibly degrade as the number of players increases (indeed our best upper bound in this case is $T^{1-\frac{1}{2m}}$, see Section \ref{sec:withoutmany}).
\item We briefly mentioned in Section \ref{sec:randomness} that with no collision information but with shared randomness one could achieve a sublinear regret. In this case we believe that one can achieve a regret of $O(T^{4/5})$ for any number of players. This points to a significant difficulty for proving that the regret degrades with the number of players in the most challenging scenario. Indeed such an argument would then need to rest on the fact that there is no shared randomness. We note that from a game-theoretic point of view the shared randomness case is the easiest to reason about (as one can think of choosing a single distribution over a profile of $m$ deterministic strategies, and thus the minimax theorem applies between the set of $m$ players and the oblivious adversary).
\item The generalization of Theorem \ref{thm:with} to $m>2$ players does not present any major obstacles, although there are a number of technical complications. We believe that the dependency on $T$ remains optimal (i.e., $\sqrt{T}$) but it is unclear at this point if the dependency on $m$ is polynomial or exponential. We will address this point in the full version of the paper.
\item What can be said about the multi-player version of combinatorial (semi) bandits \citep{CL11,ABL14}?
\end{enumerate}


\section{Proof of Theorem \ref{thm:with}} \label{sec:with}
Our goal here is to present a $\sqrt{T}$-regret strategy for the case with collision information and no shared randomness. To simplify the presentation we first give a $\sqrt{T}$-regret algorithm in a different model, where the players can communicate, and communication happens instantaneously. Moreover we also assume shared randomness.
In this new model, in addition to the standard regret, we will also control the number of times that such communication occurs and the total number of bits exchanged. Our main result then reads as follows:
\begin{theorem} \label{thm:withcom}
There exists a strategy for two players multi-armed bandit with communication and shared randomness such that:
\begin{enumerate}
\item There is no collision at all between the players.
\item The regret is $2^9 K^{3/2} \log(K) \sqrt{T}$.
\item The number of times the players communicate is (in expectation) $K^{3/2} \sqrt{T}$.
\item The number of bits exchanged at each communication step is $O(K \log(T))$.
\end{enumerate}
\end{theorem}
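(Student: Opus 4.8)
The plan is to build the communication-model strategy around an asymmetric ``slow Alice / fast Bob'' pair. Alice runs exponential weights (Hedge) over the full arm set $[K]$ with a single step size $\eta$, but realizes her distribution through the shrinking-dartboard coupling of \citet{GVW10}: at round $t$ she keeps $A_t=A_{t-1}$ with probability $w_t(A_{t-1})/w_{t-1}(A_{t-1})\le 1$ (the ratio of her current to previous Hedge weight on that arm), and otherwise resamples $A_t$ from a suitably tilted version of $p_t\propto w_t$. Every resample triggers a \emph{communication}: Alice tells Bob her new arm, and Bob sends back the batch of loss estimates he has accumulated since the previous communication. Between communications Alice's arm is frozen at some $i$, and Bob runs an Exp3-type algorithm \citep{ACFS03} over $[K]\setminus\{i\}$, carrying his weights across phases (rather than restarting, which is precisely why the sync is needed) and using the communicated information to keep his and Alice's views of the loss vector consistent. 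Since Bob always knows Alice's current arm he never plays it, so there are literally no collisions — item~1 is automatic — and $O(K\log T)$ bits per communication is just the cost of an arm index plus a $K$-vector of losses at resolution $1/T$.

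For the regret I would fix the optimal pair $(a,b)$ (WLOG $a$ the overall best arm, $b$ the second best) and decompose $\sum_t(\ell_t(A_t)+\ell_t(B_t)-\ell_t(a)-\ell_t(b))$ into (i) Alice's regret against $a$, (ii) Bob's regret against $b$ along the changing ``sleeping'' sets $[K]\setminus\{A_t\}$, and (iii) a coupling/bias term for the rounds on which $A_t\in\{a,b\}$ pushes Bob off a benchmark arm. Term (i) is the Hedge bound $\frac{\log K}{\eta}+\eta\sum_t\E[\text{variance of }\tilde\ell_t]$, where Alice's loss estimator $\tilde\ell_t$ equals the exactly observed $\ell_t(A_t)$ on her own arm and Bob's importance-weighted $\hat\ell_t(j)=\ell_t(B_t)\IND\{B_t=j\}/q_t(j)$ on the others, so it is unbiased once Bob communicates it; term (ii) is the Exp3 bound for Bob, summed over phases but with weights carried over; term (iii) is where the sleeping-expert obstacle (the second one listed in Section~\ref{subsec:with}) lives, and I would dissolve it by having Bob — who knows $A_t$ — always hold whichever of $\{a,b\}$ Alice has vacated, so the pair still covers $\{a,b\}$ on every round and the residual loss is bounded by how often Alice is off $a$, which term (i) already controls.

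The crux — and where essentially all the work will go — is simultaneously controlling the variance of $\tilde\ell_t$ and making the bandit version of the dartboard step valid; this is the first listed obstacle intertwined with the third. Since the variance of Alice's coordinate-$j$ estimator scales like $1/q_t(j)$, a naive Exp3 for Bob (which can drive some $q_t(j)$ down exponentially) would blow up the $\eta\sum_t\E[\text{var}]$ term. I would therefore couple Alice's and Bob's draws — this is exactly where shared randomness buys simplicity — so that Bob's distribution $q_t$ dominates a constant fraction of Alice's $p_t$ on every arm, forcing each $p_t(j)/q_t(j)=O(K)$ and a variance sum polynomial in $K$ and $T$. The same coupling must be threaded through the dartboard step: unlike in \citet{GVW10}, the next target $p_{t+1}$ (hence the weight ratio deciding whether Alice stays) depends on the realized pair $(A_t,B_t)$ through $\tilde\ell_t$, so ``stay with the weight ratio, else resample from the tilted law'' no longer mechanically reproduces the marginal $A_t\sim p_t$; one must define the tilt and the joint law of $(A_t,B_t,A_{t+1})$ with care and verify the marginal by conditioning. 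Bounding the expected number of resamples (hence communications) in terms of $\eta$ and $\sum_t\ell_t(A_t)$, then tuning $\eta$ to balance $\frac{\log K}{\eta}$, the variance term, and that resample count, yields the stated $2^9K^{3/2}\log(K)\sqrt T$ regret with $K^{3/2}\sqrt T$ communications; once the coupling is in place, the remaining steps — plugging into the Hedge and Exp3 bounds, summing over phases, and absorbing constants — are routine bookkeeping.
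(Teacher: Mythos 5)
Your proposal correctly identifies the two hard points (the bandit-feedback dartboard step and the variance of the importance-weighted estimator), but it is missing the one idea that the paper's proof actually turns on, and the estimator you write down is incompatible with the filtering step you want to perform. You set $\tilde{\ell}_t(A_t)=\ell_t(A_t)$ (Alice records her own loss exactly) and $\tilde{\ell}_t(j)=\ell_t(B_t)\IND\{B_t=j\}/q_t(j)$ on the other coordinates. With this estimator, observing $\tilde{\ell}_t$ (equivalently, observing $w_{t+1}$ and hence $p_{t+1}^A$) essentially reveals which arm Alice played: the coordinate carrying an un-inflated loss is hers. The shrinking-dartboard coupling needs exactly the opposite: for ``stay with probability given by the weight ratio, else resample from the tilted law'' to reproduce the marginal $p_{t+1}^A$, one needs the posterior of $A_t$ given the weight update to equal the prior, i.e.\ property \eqref{eq:seeyouyesterday}, $\P(A_t=a\mid p_t^A,p_{t+1}^A)=p_t^A(a)$. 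The paper manufactures this property with a new estimator (Section~\ref{sec:newunbiased}): a shared random bit decides that \emph{at most one} of the two players records a loss at time $t$, Alice with probability $\Xi_t(A_t)$ and Bob with probability $\Xi_t(B_t)/p_t^B(B_t\mid A_t)$, the normalizations tuned precisely so that every realization of $\tilde{\ell}_t$ leaves the conditional law of $A_t$ untouched (Lemma~\ref{lem:critical}, three cases). Saying that one must ``define the tilt and the joint law with care and verify the marginal by conditioning'' names the problem but does not solve it; with your estimator no tilt can fix it, because the information leak is in the estimator itself.

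Two further structural gaps. First, the paper does not run Hedge for Alice on $[K]$ and Exp3 for Bob on the complement; it draws the \emph{unordered pair} from $Q_t(\{a,b\})\propto w_t(a)w_t(b)$ and then applies an assignment rule $P_t$ deciding who plays which arm. This is what dissolves the sleeping-expert obstacle: the regret is bounded directly against the best pair, and no term of your type (iii) ever appears. Your proposed fix for (iii) --- Bob ``holds whichever of $\{a,b\}$ Alice has vacated'' --- is not an algorithm, since $(a,b)$ is the unknown comparator. Second, your variance fix ($q_t$ dominating a constant fraction of $p_t$) does not address the failure mode the paper isolates: when one weight dominates all others, a symmetric assignment makes the variance ratio as large as $w_t(1)/w_t(2)$, and the cure is the asymmetric assignment of the top arm to Alice with probability $1-\epsilon_t$ for $\epsilon_t\approx w_t(2)/(2w_t(1))$ (Lemma~\ref{lem:variancecontrol}). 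So while your list of obstacles matches the paper's, the proof as proposed would fail at the consistency of Alice's marginal, and the regret decomposition would leave an uncontrolled term.
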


We show in Section \ref{sec:PRG} how to remove the shared randomness assumption in the above result, and then we also show in Section \ref{sec:reduction} how to replace the communication assumption by collision information. This then completes the proof of Theorem \ref{thm:with}. Until Section \ref{sec:PRG} we focus on proving Theorem \ref{thm:withcom}.


\subsection{Notation}
We denote $H_t$ for the randomness up to time $t$, consisting of $(R_s^A,R_s^B)$ for $s\leq t$. We recall that $A_t$ (respectively $B_t$) is the action played by Alice (respectively Bob) at time $t$. We denote by $Q_t$ (respectively $P_t$) the probability distribution of $\{A_t, B_t\}$ (respectively $(A_t, B_t)$) conditionally on $H_{t-1}$. In other words $Q_t$ is the distribution over the {\em unordered pair} of actions $\{A_t, B_t\}$, while $P_t$ is the distribution over {\em ordered pairs} $(A_t, B_t)$, where the ordering simply means that the actions are assigned respectively to Alice and Bob. Finally we denote $p_t^A$ to be the marginal of the first coordinate of $P_t$, and $p_t^B( \cdot | a)$ to be the distribution of the second coordinate of $P_t$ conditionally on the first one being $a$.
\newline

It will be convenient for us to explicitly design some formulas for $Q_t$ and $P_t$, as well as some sampling strategy for $A_t$ and $B_t$. Part of the proof will be to show {\em consistency}, that is that the proposed sampling strategy is such that $Q_t$ and $P_t$ have the meaning ascribed above (namely the distribution of the unordered pair of actions, and the distribution of the ordered pair of actions).

\subsection{Communication and filtering}
We will design a strategy such that $P_t(a,a)=0$ for all time $t$ and all actions $a$. In other words there will never be any collision.
On the other hand at the beginning of each round $t$, with some probability the players will communicate to sync their history (when such communication does occur we refer to $t$ as a \emph{random communication round}). The players will also communicate with probability one at times $t \in \lfloor \sqrt{T/K} \rfloor \cdot \N$ (we refer to such $t$ as \emph{fixed communication rounds}). Moreover we will ensure that $A_t$ remains constant between any two syncing. In fact, the random communication rounds will exactly corresponds to the rounds where $A_t \neq A_{t-1}$. In this way Bob will always know where is Alice, and will always select his action from the remaining arms to avoid collision. Moreover, the probability $P_t$ will be designed such that at each time step Bob can ensure that, conditionally on $A_t$ and $H_{t-1}$ one has:
\begin{equation} \label{eq:needed0}
B_t \sim p_t^B(\cdot | A_t) \,. 
\end{equation}
In other words we will need to ensure that if $t$ is not a communication round then $p_{t + 1}^B( \cdot | A_t)$ does not depend on $\ell_t(A_t)$ (so that Bob does not need to know the loss of Alice's arm to update his probability). This will be proved in Lemma \ref{lem:Bobisfine}.

To ensure that the random communication rounds are not too frequent we need Alice to implement some kind of low switching strategy. Taking inspiration from the ``shrinking dartboard" algorithm \citep{GVW10} we propose the following simple ``filtering" lemma:
\begin{lemma}\label{lem:shrinking}
Let $q$ and $p$ two probability distributions such that $q(i) \geq (1-\epsilon_i) p(i)$. Define the probability distribution $r$ via $r(i) = \frac{q(i) - (1-\epsilon_i) p(i)}{\sum_{j} p(j) \epsilon_j}$. Let $b_i \sim \mathrm{Ber}(\epsilon_i)$, $I \sim p$, and $J \sim r$. Then the distribution of $(1-b_I) I + b_I J$ is $q$.
\end{lemma}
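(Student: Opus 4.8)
The statement is a direct distributional computation, so the plan is simply to compute the probability mass that the random variable $Z := (1-b_I) I + b_I J$ assigns to an arbitrary action $k$, and check that it equals $q(k)$. First I would note that $r$ is a genuine probability distribution: each term $q(i) - (1-\epsilon_i) p(i)$ is nonnegative by the hypothesis $q(i) \geq (1-\epsilon_i) p(i)$, and the denominator $\sum_j p(j) \epsilon_j$ normalizes it, since $\sum_i \big(q(i) - (1-\epsilon_i)p(i)\big) = 1 - \sum_i p(i) + \sum_i \epsilon_i p(i) = \sum_j p(j)\epsilon_j$. (If this denominator is zero then all $\epsilon_i$ with $p(i)>0$ vanish, forcing $q=p$ on the support of $p$, and the statement is trivial; I would dispatch that degenerate case in one line.)

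The main computation conditions on the value of $I$. Given $I = i$, the event that $Z = i$ requires $b_i = 0$, which has probability $1-\epsilon_i$ (recall $b_i \sim \mathrm{Ber}(\epsilon_i)$ and $b_I$ means $b_i$ for the drawn index, with the $b_i$ drawn independently of $I$ and $J$). Given $I = i$ but $b_i = 1$, we have $Z = J$ with $J \sim r$ independently. Hence, summing over the two ways to land on $k$ — either $I = k$ and we stay, or $I = i$ for some $i$, we move, and $J = k$ — we get
\[
\P(Z = k) = p(k)(1-\epsilon_k) + \Big(\sum_i p(i)\epsilon_i\Big) r(k) = p(k)(1-\epsilon_k) + \big(q(k) - (1-\epsilon_k)p(k)\big) = q(k),
\]
where the middle equality substitutes the definition of $r(k)$ and cancels the normalizing constant against $\sum_i p(i)\epsilon_i$. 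This is exactly the claim.

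There is essentially no obstacle here; the only things to be careful about are (i) the independence structure — the collection $(b_i)_i$, the index $I\sim p$, and $J\sim r$ are mutually independent, so that $\P(b_I = 1 \mid I=i) = \epsilon_i$ and $J$'s law is unaffected by conditioning on $I$ — and (ii) the well-definedness of $r$ and the degenerate case above. One may also remark for intuition that in the application $p = p_t$ is Alice's current distribution, $q = p_{t+1}$ is her target, $\epsilon_i$ measures the multiplicative shrinkage of coordinate $i$, and the lemma says she can resample (switch) only with probability $\sum_i p(i)\epsilon_i$ on average while remaining exactly on target, which is the source of the low-switching guarantee.
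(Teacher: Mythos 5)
Your computation is exactly the paper's proof: condition on $I$, note that staying at $k$ contributes $(1-\epsilon_k)p(k)$ and switching contributes $\bigl(\sum_j \epsilon_j p(j)\bigr) r(k)$, and cancel the normalizer. The extra checks (that $r$ is a genuine distribution and the degenerate case) are correct and harmless additions the paper omits.
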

\begin{proof}
The probability that $(1-b_I) I + b_I J$ is equal to some $i$ is given by $(1-\epsilon_i) p(i) + \sum_{j} \epsilon_j p(j) r(i)$, which is equal to $q(i)$ by definition of $r$.
\end{proof}
In words the lemma says that, if Alice's action $A$ is currently distributed from $p$, and if she wants to now be distributed according to $q$ such that $q(i) \geq (1-\epsilon_i) p(i)$ for all $i$, then she can afford to remain on $A$ with probability $1-\epsilon_A$ (provided that when she switches she resamples from an appropriate distribution as indicated in the lemma). A major difficulty in applying this lemma to the bandit setting is that typically the next action distribution $q$ depends on the current action $A$ being played (through the unbiased loss estimator), rendering the lemma all but useless. Such difficulty is to be excepted, as a low switching strategy provably does not exist for single-player multi-armed bandit \citep{DDKP14}. A major conceptual contribution of our work is to leverage the fact that there are {\em multiple players} to go around this difficulty. Namely in the next section we propose a new unbiased estimator based on a shared random bit so that \textbf{Alice's action at time step $t$ is correctly distributed even when conditioned on Alice's distribution at time $t+1$}, or in other words:
\begin{equation} \label{eq:seeyouyesterday}
\P(A_t = a | p_t^A(\cdot), p_{t+1}^A(\cdot)) = p_t^A(a) \,.
\end{equation}
In particular it should be that, given $p_{t+1}^A$ and $p_t^A$, there is some uncertainty remaining on what action Alice played (this sentence is clearly not true for classical single-player multi-armed bandit strategies).
Equipped with \eqref{eq:seeyouyesterday} one can use Lemma \eqref{lem:shrinking} to implement a (time-wise) marginally correct distribution for Alice while also having low-switching (given a control on the multiplicative updates of the distribution, see \eqref{eq:needed1} below). The exact sampling algorithm that we propose is described in Algorithm \ref{alg:sampling}, 
where $L$ is some constant to be defined (we will have $L=O(K)$), and $\Xi_t(a), a \in [K]$ is a carefully chosen set of parameters to be defined later. In particular $\Xi_t(a)$ only depends on the information exchanged at the \emph{fixed communication rounds} as well as the losses observed since then by plays of arm $a$. Moreover $\Xi_t(a)$ will verify:
\begin{eqnarray} \label{eq:needed1}
p_{t+1}^A(a) \geq \left(1- \eta L - \frac{\eta}{\Xi_t(a)} \right) p_t^A(a) \,, \\
\label{eq:needed01}
\Xi_t(a) \geq \frac{1}{4K^2} p_t^A(a) \,.
\end{eqnarray}
We note that thanks to \eqref{eq:needed01}, the expected number of random communication rounds (i.e., times at which Alice switches action) is bounded from above by:
\begin{equation} \label{eq:numberofcomsteps}
\E \sum_{t=1}^T \sum_{a=1}^K p_t^A(a) \left( \eta L + \frac{\eta}{\Xi_t(a)} \right) \leq \eta (4 K^3 + L )T \,.
\end{equation}
Moreover we have the following simple (but crucial) lemma:
\begin{lemma} \label{lem:simplebutkey}
Assuming \eqref{eq:needed0}, \eqref{eq:seeyouyesterday}, and \eqref{eq:needed1} one has that Algorithm \ref{alg:sampling} is consistent in expectation, meaning that for every $a \not= b$ and every $t \in [T]$, one has
\[
\P(A_t = a, B_t = b) = \E[P_t((a, b))] \,,
\]
where the probability and expectation are with respect to the whole history $H_t$.
\end{lemma}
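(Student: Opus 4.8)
The plan is to reduce the statement to a claim about Alice's marginal alone, and then prove that claim by conditioning on a carefully chosen sigma-algebra. At $t=1$, and more generally at a fixed communication round, the pair $(A_t,B_t)$ is drawn directly from $P_t$ conditionally on $H_{t-1}$ (no filtering is involved, since the players have just synced), so the identity is immediate from the tower property; so fix $a\neq b$ and a non-communication time $t$. First I would integrate out Bob using \eqref{eq:needed0}: conditionally on $A_t$ and $H_{t-1}$ we have $B_t\sim p_t^B(\cdot\mid A_t)$, and $p_t^B(b\mid a)$ is $H_{t-1}$-measurable, whence
\[
\P(A_t=a,\,B_t=b)=\E\!\left[\IND\{A_t=a\}\,p_t^B(b\mid a)\right]
\quad\text{and}\quad
\E[P_t((a,b))]=\E\!\left[p_t^A(a)\,p_t^B(b\mid a)\right].
\]
So it is enough to show $\E\!\left[(\IND\{A_t=a\}-p_t^A(a))\,p_t^B(b\mid a)\right]=0$; the analogous identity for the unordered distribution $Q_t$ then follows by summing this over the two assignments of $\{a,b\}$ to Alice and Bob.

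Next I would introduce a sigma-algebra $\cG$, generated by the losses observed by the two players through round $t-1$, the bits exchanged at communication rounds, and the randomness that feeds Bob's update, but deliberately excluding the internal randomness that resolves Alice's current position. The two properties I want are (i) $p_{t-1}^A$, $p_t^A$ and Bob's kernel $p_t^B(\cdot\mid\cdot)$ are $\cG$-measurable, and (ii) $\P(A_{t-1}=a\mid\cG)=p_{t-1}^A(a)$. Given these, the claim is quick: $A_t$ is the output of the round-$t$ filtering step of Algorithm~\ref{alg:sampling} applied to $A_{t-1}$ with current distribution $p_{t-1}^A$ and target $p_t^A$ — valid because \eqref{eq:needed1} gives $p_t^A(i)\geq(1-\epsilon_i)p_{t-1}^A(i)$ with $\epsilon_i=\eta L+\eta/\Xi_{t-1}(i)$ — and since this filtering randomness is independent of $\cG$, taking $\E[\cdot\mid\cG]$ and using (ii) together with the defining formula of $r_t$ from Lemma~\ref{lem:shrinking} yields $\E[\IND\{A_t=a\}\mid\cG]=(1-\epsilon_a)p_{t-1}^A(a)+\big(\sum_j p_{t-1}^A(j)\epsilon_j\big)r_t(a)=p_t^A(a)$. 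Since $p_t^A(a)$ and $p_t^B(b\mid a)$ are $\cG$-measurable, conditioning the displayed quantity on $\cG$ gives $0$.

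The delicate point — and the one I expect to be the main obstacle — is establishing property (ii) simultaneously with (i). By itself, \eqref{eq:seeyouyesterday} (applied at time $t-1$) says only that $\P(A_{t-1}=a\mid p_{t-1}^A,p_t^A)=p_{t-1}^A(a)$, i.e.\ that forming $p_t^A$ does not leak $A_{t-1}$; but $\cG$ also contains everything Bob uses to build his kernel $p_t^B(\cdot\mid\cdot)$, and since Bob learns Alice's position at each communication round, that kernel is a priori a function of Alice's whole trajectory. One therefore has to argue that conditioning on Bob's update information reveals nothing further about Alice's position — equivalently, that Bob's kernel depends on Alice's trajectory only through quantities already measurable with respect to the sequence of distributions $p_s^A$. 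This is exactly where the joint design of the two strategies must be invoked: that there are never collisions, that $p_{t+1}^B(\cdot\mid A_t)$ is built so as not to depend on $\ell_t(A_t)$, and that the shared-bit unbiased estimator behind \eqref{eq:seeyouyesterday} keeps Alice's played arm hidden. Once $\cG$ is pinned down and (ii) is verified (if necessary by an induction on $t$ that propagates the strengthened statement $\P(A_t=a\mid\cG)=p_t^A(a)$), the rest is bookkeeping with the tower property and the explicit formula for $r_t$.
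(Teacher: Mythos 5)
Your strategy --- reduce to Alice's marginal via \eqref{eq:needed0} and the factorization $P_t((a,b))=p_t^A(a)\,p_t^B(b\mid a)$, then condition on a sigma-algebra $\cG$ that makes $p_{t-1}^A$, $p_t^A$ and Bob's kernel measurable while leaving $\P(A_{t-1}=\cdot\mid\cG)=p_{t-1}^A$, and finish with Lemma \ref{lem:shrinking} --- is exactly the ``simple exercise in conditioning'' the paper intends, and you have correctly put your finger on the crux: \eqref{eq:seeyouyesterday} as literally stated conditions on too little for $p_t^B(b\mid a)$ to be measurable, so a strengthened conditional statement is needed.

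However, the $\cG$ you propose does not work as described. If $\cG$ contains ``the losses observed by the two players through round $t-1$,'' i.e.\ the values $\ell_s(A_s)$, then your property (ii) fails: since the adversary is oblivious, the observed value $\ell_s(A_s)$ restricts $A_s$ to the set $\{i:\ell_s(i)=\ell_s(A_s)\}$, which can be a single arm, so conditioning on it destroys the law $p_s^A$. The correct choice is $\cG=\sigma(\tilde{\ell}_1,\dots,\tilde{\ell}_{t-1})$, the unbiased estimators (together with the fixed-communication-round data). Property (i) then holds because the standing assumption of Section \ref{sec:outofideafornames} is that $P_t$ --- hence $p_t^A$, $p_t^B(\cdot\mid\cdot)$ and $\Xi_{t-1}$, and therefore also the switching probabilities $\epsilon_i$ and the resampling distribution $r_t$ --- is a function of $\tilde{\ell}_1,\dots,\tilde{\ell}_{t-1}$ alone; and property (ii) is precisely the strengthened claim \eqref{eq:toprovekey}, $\P(A_s=a\mid\tilde{\ell}_1,\dots,\tilde{\ell}_s)=p_s^A(a)$, which is what the induction in Lemma \ref{lem:critical} actually establishes (the estimators are built from the coupled bits $a_s,b_s$ precisely so that observing $\tilde{\ell}_s$ does not reveal which player recorded the loss). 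So the induction you defer to ``if necessary'' is indeed necessary, and it is supplied by Lemma \ref{lem:critical}; with $\cG$ so chosen, the rest of your argument goes through as written.
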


\begin{proof}
The proof is a simple exercise in conditioning and applying Lemma \ref{lem:shrinking}.
\end{proof}

\begin{algorithm} 
    \caption{Two players filtering strategy}\label{alg:sampling}

    \begin{algorithmic}[1]
    \State At a fixed communication round, Alice and Bob communicate their history, then Alice computes $p_t^A(\cdot)$ and picks an arm according to this distribution. Alice tells Bob which arm $A_t$ she picks, Bob then computes $p_t^B(\cdot | A_t) $ and samples $B_t$ according to this distribution. \Comment{Fixed communication round}

    \State At other rounds, with probability $1 - \frac{2 \eta}{\Xi_{t - 1}(A_{t - 1})}$, Alice picks $A_t = A_{t - 1}$, Bob computes $p_t^B(\cdot | A_t) $ and samples $B_t$ according to this distribution. 
    
     \Comment{No communication}
     \State With probability $\frac{2 \eta}{\Xi_{t - 1}(A_{t - 1})}$, Alice and Bob communicate their history, then Alice computes $(p_t^A(\cdot), p_{t - 1}^A(\cdot) )$ and picks each arm according to distribution $r_t$ such that for every $i \in [K]$:
     \[ r_t(i) = \frac{p_t^A(i) - (1 - \eta L - \frac{\eta}{\Xi_{t - 1}(i)}) p_{t - 1}^A(i)}{\sum_{j \in [K]} \left(\eta L + \frac{\eta}{\Xi_{t - 1}(j)} \right) p_{t - 1}^A(j)} \] \Comment{Random communication round}
\State In the second case, Alice then tells Bob which arm $A_t$ she picks, Bob computes $p_t^B(\cdot | A_t) $ and samples $B_t$ according to this distribution.

    \end{algorithmic}
\end{algorithm}

\subsection{Outline of the rest of the proof} \label{sec:outlinerest}
First, in Section \ref{sec:newunbiased}, we propose a new unbiased loss estimator that allows us to ensure \eqref{eq:seeyouyesterday} when $Q_t$ is derived from exponential weights on those estimators, and $P_t$ only depends on those estimators. This result, actually proved in Section \ref{sec:outofideafornames}, will be based on more assumptions on the (currently mysterious for the reader) parameters $\Xi_t(i)$. In Section \ref{sec:outofideafornames} we also work out the variance term for the exponential weights based on these new estimators. Next in Section \ref{sec:variance} we carefully design $P_t$ so as to control this variance term (while also being such that one can satisfy \eqref{eq:needed0}). In particular the parameters $\Xi_t(i)$ are also designed in that section (Section \ref{sec:Xi}). Once the algorithm is fully specified, we prove in Section \ref{sec:verification} the various assumptions we made on the way. A summary of the proof is given in Section \ref{sec:summary}. Finally in Section \ref{sec:PRG} we show how to remove the shared randomness assumption, and in Section \ref{sec:reduction} we show how to relate the communication framework to the collision framework.

\subsection{A new unbiased estimator} \label{sec:newunbiased}
Our basic idea to ensure sufficient randomness in Alice's action at time $t$, even given its distribution at time $t+1$ (in the hope to satisfy \eqref{eq:seeyouyesterday}), is to decide at random whether Alice or Bob records its observed loss at time $t$, leveraging also the shared randomness assumption. Namely $a_t$ and $b_t$ are two random bits such that $a_t + b_t \leq 1$, and our proposed loss estimator $\tilde{\ell}_t$ will have the property that $\tilde{\ell}_t(i) \neq 0$ if and only if: either $i=A_t$ and $a_t=1$, or $i=B_t$ and $b_t=1$. The mean of $a_t$ will be known to both players, namely it is $\Xi_t(A_t)$ (which by construction will only depend on the information exchanged at the fixed communication rounds), while the mean of $b_t$ will be set to ensure \eqref{eq:seeyouyesterday} (see Lemma \ref{lem:critical} below). More precisely, assuming that (yet another constraint on $\Xi_t$ to be verified once it is defined in Section \ref{sec:Xi}, just as \eqref{eq:needed01})
\begin{equation} \label{eq:needed02}
\Xi_t(b) \leq \frac{p_t^B(b | a)}{2}, \forall a \neq b \,,
\end{equation}
we set $a_t \sim \mathrm{Ber}(\Xi_t(A_t))$ and $b_t \sim  \mathrm{Ber}\left(\frac{\Xi_t(B_t)}{p_t^B(B_t | A_t)} \right)$, two \emph{dependent} random variables such that $a_t + b_t \leq 1$. Our unbiased loss estimator is then defined by:
\begin{equation} \label{eq:defestimator}
\tilde{\ell}_t = \frac{\ell_t(A_t) }{\Xi_t(A_t)} e_{A_t} a_t + \frac{\ell_t(B_t)}{\Xi_t(B_t)} e_{B_t}b_t \,.
\end{equation}
One can easily verify the unbiasedness (recall \eqref{eq:needed0}):
\begin{equation} \label{eq:unbiased}
\E[\tilde{\ell}_t| H_{t-1}] = \E_{A_t \sim p_t^A} \left[ \E[\tilde{\ell}_t | A_t, H_{t-1}] \right] = \E_{A_t \sim p_t^A} \left[ \E \left[\ell_t(A_t) e_{A_t} + \frac{\ell_t(B_t)}{p_t^B(B_t |A_t)} e_{B_t} \bigg| A_t, H_{t-1}\right] \right] = \ell_t \,.
\end{equation}
As we explain next, we will want Alice and Bob to play according to the exponential weights distribution on the above loss estimators, while only doing the communication mentioned in the previous section (Algorithm \ref{alg:sampling}).

\subsection{Two players filtering and exponential weights} \label{sec:outofideafornames}
Let us denote $w_t(a) = \exp\left( - \eta \sum_{s <t} \tilde{\ell}_s(a) \right)$. We will design strategies such that for any time $t$ and any actions $a \neq b$, one has:
\begin{equation} \label{eq:Qt}
Q_t(\{a,b\}) \propto w_t(a) w_t(b) \,.
\end{equation}
Note that at this point we still have the flexibility of the {\em assignment procedure}, that is the design of the distribution $P_t$ such that for any $a, b$, $P_t((a,b))+P_t((b,a)) = Q_t(\{a,b\})$. We will design $P_t$ in Section \ref{sec:variance}, based on the variance calculation (Lemma \ref{lem:reg1} below) for the strategy $Q_t$ described by \eqref{eq:Qt}. The important point for us will be that $P_t$ can be calculated with only the knowledge of $\tilde{\ell}_1,\hdots, \tilde{\ell}_{t-1}$. This is quite a non-trivial assumption, in the sense that $P_t$ (and thus presumably also $\Xi_t$) should not depend on the actions actually played by the players (except through their implicit effect on the unbiased loss estimators), and also should not depend on things such as $\tau_c(t)$ (the last random communication round).

\begin{lemma}  \label{lem:critical}
Let us assume \eqref{eq:needed0}, \eqref{eq:needed1}, \eqref{eq:needed02}, and that $P_t$ only depends on $\tilde{\ell}_1,\hdots, \tilde{\ell}_{t-1}$. Then one has \eqref{eq:seeyouyesterday}.
\end{lemma}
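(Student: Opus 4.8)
The plan is to verify \eqref{eq:seeyouyesterday} by conditioning carefully on what is revealed between consecutive rounds and tracking exactly which random quantities $p_{t+1}^A$ depends on. First I would unwind the definition: $p_{t+1}^A$ is the exponential-weights marginal derived from $Q_{t+1}$, which by \eqref{eq:Qt} is determined by the weights $w_{t+1}(\cdot) = \exp(-\eta\sum_{s\le t}\tilde\ell_s(\cdot))$, hence by $\tilde\ell_1,\dots,\tilde\ell_t$ (the hypothesis that $P_t$ depends only on $\tilde\ell_1,\dots,\tilde\ell_{t-1}$ makes the normalization and assignment step at time $t+1$ measurable with respect to these estimators as well). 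So the task reduces to showing
\[
\P\bigl(A_t = a \,\big|\, \tilde\ell_1,\dots,\tilde\ell_{t-1}, \tilde\ell_t\bigr) = p_t^A(a),
\]
since $p_t^A$ itself is a function of $\tilde\ell_1,\dots,\tilde\ell_{t-1}$. Equivalently, conditioning further on $H_{t-1}$, I must show that $\tilde\ell_t$ carries no information about $A_t$ beyond what $p_t^A$ already encodes — i.e. that $A_t$ and $\tilde\ell_t$ are conditionally independent given $H_{t-1}$.

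Next I would isolate the event $\{\tilde\ell_t \neq 0\}$. By construction \eqref{eq:defestimator}, $\tilde\ell_t$ is nonzero exactly when one of the two recording bits $a_t, b_t$ fires, and on that event it reveals the identity of the recorded arm and a rescaled loss. The key observation — this is where the design of the estimator pays off — is that the bit $a_t \sim \mathrm{Ber}(\Xi_t(A_t))$, which governs whether Alice's own loss is recorded, has a mean that depends on $A_t$; but crucially $\Xi_t(A_t)$ is (by the standing assumption, to be guaranteed later) a quantity known before $A_t$ is drawn, and more importantly the \emph{compensating} bit $b_t \sim \mathrm{Ber}(\Xi_t(B_t)/p_t^B(B_t\mid A_t))$ is tuned so that, marginally over $B_t \sim p_t^B(\cdot \mid A_t)$, the chance Bob records arm $i$ is exactly $\Xi_t(i)$ regardless of $A_t$. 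So I would compute, for $A_t = a$: the probability that $\tilde\ell_t$ records arm $i$ as its nonzero coordinate is $\Xi_t(i)$ whether $i=a$ (via $a_t$) or $i \neq a$ (via $b_t$, after averaging over $B_t$), and given that it records arm $i$, the value is $\ell_t(i)/\Xi_t(i) \cdot e_i$. In other words the conditional law of $\tilde\ell_t$ given $A_t = a$ and $H_{t-1}$ does not depend on $a$. The constraint \eqref{eq:needed02}, $\Xi_t(b) \le p_t^B(b\mid a)/2$, is what makes $\Xi_t(B_t)/p_t^B(B_t\mid A_t)$ a legitimate probability (in $[0,1/2]$, leaving room to couple $a_t$ and $b_t$ with $a_t+b_t\le 1$), and \eqref{eq:needed0} is what lets me assert $B_t \sim p_t^B(\cdot\mid A_t)$ when doing this averaging.

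From there the conclusion is a short formal argument: since $\mathrm{Law}(\tilde\ell_t \mid A_t = a, H_{t-1})$ is the same for every $a$, Bayes' rule gives $\P(A_t = a \mid \tilde\ell_t, H_{t-1}) = \P(A_t = a \mid H_{t-1}) = p_t^A(a)$; then integrating out $H_{t-1}$ (and recalling that $p_t^A$, hence $p_{t+1}^A$, is $\sigma(\tilde\ell_1,\dots,\tilde\ell_t)$-measurable) yields \eqref{eq:seeyouyesterday}. The role of \eqref{eq:needed1} here is mild — it is needed so that the filtering step in Algorithm \ref{alg:sampling} is well-defined (the resampling distribution $r_t$ has nonnegative entries), which is what makes ``$A_t \sim p_t^A$'' a coherent statement in the first place; I would invoke it together with Lemma \ref{lem:simplebutkey} for consistency. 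I expect the main obstacle to be bookkeeping rather than a genuine difficulty: one must be scrupulous about \emph{which} $\sigma$-algebra each object is measurable with respect to — in particular checking that the assignment map producing $P_{t+1}$ from $Q_{t+1}$, and the normalizing constant, genuinely depend only on $\tilde\ell_{1:t}$ and not surreptitiously on $A_t$, $\tau_c(t)$, or the communication pattern — since the whole point of \eqref{eq:seeyouyesterday} is that residual uncertainty about $A_t$ survives conditioning on $p_{t+1}^A$, and that fails the moment any downstream quantity leaks $A_t$.
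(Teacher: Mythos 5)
Your central computation is the right one and is exactly the engine of the paper's proof: you show that the conditional law of $\tilde{\ell}_t$ given $A_t=a$ does not depend on $a$ (recording arm $i$ with probability $\Xi_t(i)$ whether $i=a$ via $a_t$ or $i\neq a$ via $b_t$ averaged over $B_t\sim p_t^B(\cdot\,|\,a)$, and recording nothing with probability $1-\sum_i\Xi_t(i)$ thanks to the coupling $a_t+b_t\leq 1$). This is precisely what the paper's three cases ($\tilde{\ell}_t(a)\neq 0$, $\tilde{\ell}_t(b)\neq 0$ for $b\neq a$, and $\tilde{\ell}_t=0$) verify via Bayes' rule.

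However, there is a genuine gap in how you set up the conditioning. You condition on $H_{t-1}$ and assert $\P(A_t=a\,|\,H_{t-1})=p_t^A(a)$; this is false. In the paper's notation $H_{t-1}$ is the full randomness $(R_s^A,R_s^B)_{s\leq t-1}$, which determines $A_{t-1}$, and conditionally on $A_{t-1}$ Alice's filtering step puts mass roughly $1-2\eta/\Xi_{t-1}(A_{t-1})$ on staying put --- nothing like $p_t^A$. The statement $A_t\sim p_t^A$ only holds after averaging over $A_{t-1}\sim p_{t-1}^A$, i.e.\ conditionally on the coarser $\sigma$-algebra generated by $\tilde{\ell}_1,\hdots,\tilde{\ell}_{t-1}$, and establishing \emph{that} is itself the non-trivial part: the paper proves by induction on $t$ the stronger claim $\P(A_t=a\,|\,\tilde{\ell}_1,\hdots,\tilde{\ell}_t)=p_t^A(a)$, where the induction hypothesis at $t-1$ combined with Lemma \ref{lem:shrinking} and \eqref{eq:needed1} yields the prior $\P(A_t=a\,|\,\tilde{\ell}_1,\hdots,\tilde{\ell}_{t-1})=p_t^A(a)$, and your likelihood argument then closes the induction step. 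Your proposal omits this induction entirely, and the patch you gesture at --- invoking Lemma \ref{lem:simplebutkey} ``for consistency'' --- would be circular, since that lemma takes \eqref{eq:seeyouyesterday} as a hypothesis. (Note also that conditional independence of $A_t$ and $\tilde{\ell}_t$ given the fine $\sigma$-algebra $H_{t-1}$ does not transfer to the coarser one, so one cannot simply ``integrate out $H_{t-1}$'' at the end; the computation must be carried out conditionally on $\tilde{\ell}_1,\hdots,\tilde{\ell}_{t-1}$ from the start.)
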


\begin{proof}
We will in fact prove by induction that 
\begin{equation} \label{eq:toprovekey}
\P(A_{t} = a | \tilde{\ell}_1, \hdots \tilde{\ell}_{t}) = p_{t}^A(a) \,.
\end{equation}
First note that this is a stronger claim than \eqref{eq:seeyouyesterday}, since by assumption using $\tilde{\ell}_1, \hdots, \tilde{\ell}_{t}$ one can build $P_t$ and $P_{t+1}$, and thus also $p_t^A$ and $p_{t+1}^A$. Moreover note that \eqref{eq:toprovekey} implies $\P(A_{t+1} = a | \tilde{\ell}_1, \hdots, \tilde{\ell}_{t}) = p_t^A(a)$, since Alice is implementing the filtering strategy from Lemma \ref{lem:shrinking}, and conditioning on $\tilde{\ell}_1, \hdots, \tilde{\ell}_t$ fixes both $p_t^A$ and $p_{t+1}^A$. 
We now distinguish three cases to prove \eqref{eq:toprovekey}: $\tilde{\ell}_t(a) \neq 0$, $\tilde{\ell}_t(b) \neq 0$ for some $b \neq a$, and finally $\tilde{\ell}_t = 0$.

\paragraph{Case 1: $\tilde{\ell}_t(a) \neq 0$.} One then has (the second equality is true by induction and \eqref{eq:needed0}):
\begin{align*}
& \P(A_t = a | \tilde{\ell}_1, \hdots \tilde{\ell}_{t}) \\
& = \frac{\P(A_t = a \text{ and } a_t = 1 | \tilde{\ell}_1, \hdots \tilde{\ell}_{t-1})}{\P(A_t = a \text{ and } a_t = 1 | \tilde{\ell}_1, \hdots \tilde{\ell}_{t-1}) + \P(B_t = a \text{ and } b_t = 1 | \tilde{\ell}_1, \hdots \tilde{\ell}_{t-1})} \\
& = \frac{p_t^A(a) \xi_t(a)}{p_t^A(a) \Xi_t(a) + \sum_{a'\neq a} p_t^A(a') p_t^B(a | a') \frac{\Xi_t(a)}{p_t^B(a|a')}} \\
& = p_t^A(a) \,.
\end{align*}

\paragraph{Case 2: $\tilde{\ell}_t(b) \neq 0$ for some $b \neq a$.} One then has (the second equality is true by induction and \eqref{eq:needed0}):
\begin{align*}
& \P(A_t = a | \tilde{\ell}_1, \hdots \tilde{\ell}_{t}) \\
& = \frac{\P(A_t = a \text{ and } B_t = b \text{ and } b_t =1 | \tilde{\ell}_1, \hdots \tilde{\ell}_{t-1})}{\sum_{a' \neq b} \P(A_t = a' \text{ and } B_t = b \text{ and } b_t =1 | \tilde{\ell}_1, \hdots \tilde{\ell}_{t-1}) + \P(A_t = b \text{ and } a_t = 1 | \tilde{\ell}_1, \hdots \tilde{\ell}_{t-1})} \\
& = \frac{p_t^A(a) p_t^B(b |a) \frac{\Xi_t(b)}{p_t^B(b|a)}}{\sum_{a' \neq b} p_t^A(a') p_t^B(b |a') \frac{\Xi_t(b)}{p_t^B(b|a')} + p_t^A(b) \Xi_t(b)} \\
& = p_t^A(a) \,.
\end{align*}

\paragraph{Case 3: $\tilde{\ell}_t = 0$.} For sake of simplicity one can assume that this case only happens when $a_t = b_t = 0$ (indeed one can artifically add $\epsilon$ to all loss values without changing anything). Thus one has (the second equality is true by induction, \eqref{eq:needed0}, and crucially the fact that $a_t$ and $b_t$ are coupled to never be one together):
\begin{align*}
& \P(A_t = a | \tilde{\ell}_1, \hdots \tilde{\ell}_{t}) \\
& = \frac{\sum_{b \neq a} \P(A_t = a \text{ and } B_t = b \text{ and } a_t = 0 \text{ and } b_t = 0 | \tilde{\ell}_1, \hdots \tilde{\ell}_{t-1})}{\sum_{a'} \sum_{b \neq a'} \P(A_t = a' \text{ and } B_t = b \text{ and } a_t = 0 \text{ and } b_t = 0 | \tilde{\ell}_1, \hdots \tilde{\ell}_{t-1})} \\
& = \frac{\sum_{b \neq a} p_t^A(a) p_t^B(b|a) \left(1 - \Xi_t(a) - \frac{\Xi_t(b)}{p_t^B(b|a)} \right)}{\sum_{a'} \sum_{b \neq a'} p_t^A(a') p_t^B(b |a') \left(1 - \Xi_t(a') - \frac{\Xi_t(b)}{p_t^B(b|a')} \right)} \\
& = \frac{p_t^A(a) \left(1 - \sum_b \Xi_t(b) \right)}{\sum_{a'} p_t^A(a') \left(1 - \sum_b \Xi_t(b) \right)} \\
& = p_t^A(a) \,.
\end{align*}
\end{proof}

Next we give the core regret bound for our strategy (of particular importance is the form of the variance term, which will guide us in the construction of $P_t$ in the next section):

\begin{lemma}\label{lem:reg1}
Let $L = \max_{a \neq b, t \in [T]} \frac{p_t^B(b | a) }{\Xi_t(b)} $. Then one has for any actions $a \neq b$,
\begin{align*}
& \E \left[\sum_{t=1}^T \ell_t(A_t)+ \ell_t(B_t) - (\ell_t(a) + \ell_t(b)) \right] \\
& \leq  \frac{2 \log(K)}{\eta} + 8\eta L^2 \sum_{t=1}^T\E_{a \sim p_t^A} \left[  \sum_{b=1, b \neq a}^K  \sum_{a',b'=1}^K \frac{Q_t(\{a',b\} P_t((a,b'))}{P_t((a,b))} \right] \,.
\end{align*}
\end{lemma}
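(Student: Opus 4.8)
The plan is to view the unordered pair $\{A_t,B_t\}$ as a single meta-action ranging over the $\binom{K}{2}$ pairs of distinct arms, with loss $\mathrm{pl}_t(\{c,d\}):=\tilde\ell_t(c)+\tilde\ell_t(d)$ for the pair $\{c,d\}$; by \eqref{eq:Qt}, $Q_t$ is exactly the exponential-weights distribution with learning rate $\eta$ run on these ``pair-losses'', and the argument is then the standard exponential-weights potential estimate, followed by passing to expectations and controlling the variance. Concretely, set $\Phi_t=\frac1\eta\log\bigl(\binom{K}{2}^{-1}\sum_{\{c,d\}}w_t(c)w_t(d)\bigr)$ and $\langle Q_t,f\rangle=\sum_{\{c,d\}}Q_t(\{c,d\})f(\{c,d\})$, so that $\Phi_1=0$ and $\Phi_{t+1}-\Phi_t=\frac1\eta\log\E_{\{c,d\}\sim Q_t}\bigl[e^{-\eta\,\mathrm{pl}_t(\{c,d\})}\bigr]$. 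Since $\tilde\ell_t\ge0$ by \eqref{eq:defestimator}, the inequalities $e^{-x}\le1-x+\tfrac{x^2}{2}$ (for $x\ge0$) and $\log(1+u)\le u$ give $\Phi_{t+1}-\Phi_t\le-\langle Q_t,\mathrm{pl}_t\rangle+\tfrac\eta2\langle Q_t,\mathrm{pl}_t^2\rangle$, while lower-bounding $\sum_{\{c,d\}}w_{T+1}(c)w_{T+1}(d)$ by the single term of the comparator pair $\{a,b\}$ yields $\Phi_{T+1}\ge-\sum_t\mathrm{pl}_t(\{a,b\})-\tfrac{2\log K}{\eta}$ (using $\binom{K}{2}\le K^2$). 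Telescoping and rearranging gives, for every realization of the loss sequence,
\[
\sum_{t=1}^T\langle Q_t,\mathrm{pl}_t\rangle-\sum_{t=1}^T\bigl(\tilde\ell_t(a)+\tilde\ell_t(b)\bigr)\;\le\;\frac{2\log K}{\eta}+\frac\eta2\sum_{t=1}^T\langle Q_t,\mathrm{pl}_t^2\rangle.
\]

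Next I would take expectations over the whole history. Unbiasedness \eqref{eq:unbiased} and the oblivious adversary turn the comparator term into $\sum_t(\ell_t(a)+\ell_t(b))$. For the algorithm term, conditioning on $H_{t-1}$ (which makes $Q_t$ deterministic) gives $\E[\langle Q_t,\mathrm{pl}_t\rangle\mid H_{t-1}]=\sum_{\{c,d\}}Q_t(\{c,d\})(\ell_t(c)+\ell_t(d))$ by unbiasedness, and then the consistency Lemma~\ref{lem:simplebutkey} says the unordered pair played at time $t$ has unconditional law $\E[Q_t]$, so $\E\langle Q_t,\mathrm{pl}_t\rangle=\E[\ell_t(A_t)+\ell_t(B_t)]$ — which equals the players' true expected loss at round $t$ since $P_t$ never puts mass on a collision. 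Thus the left-hand side above becomes exactly the regret, and it remains to bound $\E\langle Q_t,\mathrm{pl}_t^2\rangle$ by $16L^2$ times the per-round right-hand side.

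For the variance term, the structural observation is that $a_t+b_t\le1$ forces $\tilde\ell_t$ to have at most one nonzero coordinate (the ``recorded arm''), so $\mathrm{pl}_t(\{c,d\})^2=\tilde\ell_t(c)^2+\tilde\ell_t(d)^2$ and $\langle Q_t,\mathrm{pl}_t^2\rangle=\sum_c q_t(c)\,\tilde\ell_t(c)^2$, where $q_t(c):=\sum_{d\neq c}Q_t(\{c,d\})$ is the ($H_{t-1}$-measurable) $Q_t$-marginal. A short computation — summing the disjoint events $\{A_t=c,a_t=1\}$ and $\{B_t=c,b_t=1\}$, using \eqref{eq:needed0} together with the prescribed means $\Xi_t(A_t)$ and $\Xi_t(B_t)/p_t^B(B_t|A_t)$ of $a_t$ and $b_t$ — shows that, given $H_{t-1}$, the probability that $c$ is the recorded arm is exactly $\Xi_t(c)$; since $\tilde\ell_t(c)=\ell_t(c)/\Xi_t(c)$ on that event, $\E[\tilde\ell_t(c)^2\mid H_{t-1}]=\ell_t(c)^2/\Xi_t(c)$, hence $\E[\langle Q_t,\mathrm{pl}_t^2\rangle\mid H_{t-1}]=\sum_c q_t(c)\ell_t(c)^2/\Xi_t(c)\le\sum_c q_t(c)/\Xi_t(c)$ using $\ell_t\le1$. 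After collapsing $\sum_{a'}Q_t(\{a',b\})=q_t(b)$ and $\sum_{b'}P_t((a,b'))=p_t^A(a)$, the per-round right-hand side of the lemma (divided by $\eta/2$) is $16L^2\sum_{a\neq b}p_t^A(a)^2q_t(b)/P_t((a,b))$, so it suffices to show $\sum_c q_t(c)/\Xi_t(c)\le16L^2\sum_{a\neq b}p_t^A(a)^2q_t(b)/P_t((a,b))$, the factor $2$ over the stated $8L^2$ being the slack that absorbs crude constants.

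Using $P_t((a,b))=p_t^A(a)\,p_t^B(b|a)$, this last inequality reduces to proving, for each arm $b$, that $1/\Xi_t(b)\le16L^2\sum_{a\neq b}p_t^A(a)/p_t^B(b|a)$. By the definition of $L$ one has $p_t^B(b|a)\le L\,\Xi_t(b)$ for all $a$, whence $\sum_{a\neq b}p_t^A(a)p_t^B(b|a)\le L\Xi_t(b)$, and Cauchy--Schwarz gives $\sum_{a\neq b}p_t^A(a)/p_t^B(b|a)\ge(1-p_t^A(b))^2/(L\Xi_t(b))$; so the bound holds as soon as $(1-p_t^A(b))^2\ge1/(16L)$. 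There is at most one arm violating this (one on which Alice's marginal is within $O(1/\sqrt L)$ of $1$), and for it I would instead bound $q_t(b)/\Xi_t(b)\le1/\Xi_t(b)+L$ and use the lower bound \eqref{eq:needed01} on $\Xi_t$ — together with $L$ being of order $K$, as will be the case for the eventual construction — against the part of the right-hand side coming from the rounds where Alice plays $b$. I expect this last case — controlling the variance produced by an arm to which Alice is essentially committed — to be the only genuinely delicate point; everything else is the routine exponential-weights-on-pairs estimate combined with the two elementary identities about the recorded arm.
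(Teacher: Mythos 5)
Your overall architecture matches the paper's: exponential weights on the $\binom{K}{2}$ unordered pairs, then consistency (Lemma \ref{lem:simplebutkey}) plus unbiasedness \eqref{eq:unbiased} to identify the left-hand side with the true regret, then a second-moment bound. Those first two stages are correct. Your variance computation is in one respect cleaner than the paper's: you integrate out the recording bits $a_t,b_t$ to get the exact conditional second moment $\E[\tilde\ell_t(c)^2\mid H_{t-1}]=\ell_t(c)^2/\Xi_t(c)$, hence $\E[\langle Q_t,\mathrm{pl}_t^2\rangle\mid H_{t-1}]\le\sum_c q_t(c)/\Xi_t(c)$. The paper instead applies the bound $1/\Xi_t(b)\le L/p_t^B(b\mid a)$ at the level of the realized pair $(A_t,B_t)$ \emph{before} integrating out Alice's arm, which turns the ``Bob records'' contribution directly into $L\cdot\E_{a\sim p_t^A}\sum_{b\neq a}q_t(b)/p_t^B(b\mid a)$, i.e.\ $L V_t$, with no case analysis.

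The genuine gap is in your final reduction, $\sum_c q_t(c)/\Xi_t(c)\le 16L^2 V_t$. Your Cauchy--Schwarz step correctly handles every arm $b$ with $(1-p_t^A(b))^2\ge 1/(16L)$, but the patch you propose for the single exceptional arm $b^*$ does not close. First, ``$L$ being of order $K$'' is not available: the paper only proves $L\le 8K$ (Section \ref{sec:Xiassumption}), and $L$ can be $\Theta(1)$ (e.g.\ when all non-dominant weights are comparable, all ratios $p_t^B(b\mid a)/\Xi_t(b)$ are $O(1)$). Second, \eqref{eq:needed01} only gives $\Xi_t(b^*)\ge p_t^A(b^*)/(4K^2)$, so $q_t(b^*)/\Xi_t(b^*)$ is a priori of order $K^2$, which cannot be absorbed into $16L^2V_t$ using only $L=O(1)$ and the trivial bound $V_t=\Omega(1)$. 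Indeed the inequality you need is \emph{not} a consequence of the hypotheses in force at the level of this lemma (the definition of $L$ together with \eqref{eq:needed01}--\eqref{eq:needed02}): one can exhibit admissible $(Q_t,P_t,\Xi_t)$ with $p_t^A(b^*)$ close to $1$, $\max_a p_t^B(b^*\mid a)\ll 1/K$ and $\Xi_t(b^*)=\tfrac12\min_a p_t^B(b^*\mid a)$, for which $L=O(1)$ and $q_t(b^*)/\Xi_t(b^*)\gg L^2 V_t$. What saves the day for the \emph{actual} construction is that $\Xi_t(1)=1/(8K)$ (so $q_t(1)/\Xi_t(1)\le 8K$) together with the structural fact that $V_t\ge p_t^A(1)\sum_{b\ne 1}q_t(b)/p_t^B(b\mid 1)\ge (K-1)\,p_t^A(1)\,q_t(1)=\Omega(K)$ in the dominating-arm regime; but those facts live in Sections \ref{sec:variance}--\ref{sec:Xi}, so importing them makes your proof of this lemma depend on the specific choices of $P_t$ and $\Xi_t$ rather than only on $L$. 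You would need either to supply that construction-specific argument explicitly, or to restructure as the paper does and keep the conditioning on the realized pair when invoking the definition of $L$.
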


\begin{proof}
Denote $x=e_a + e_b$ and $x_t = \E_{\{A,B\} \sim Q_t} (e_A + e_B)$. The classical exponential weights analysis yields:
\[
\sum_{t = 1}^T (x_t - x) \cdot \tilde{\ell}_t \leq \frac{2 \log(K)}{\eta} + \eta \sum_{t=1}^T \sum_{a', b' =1}^K Q_t(\{a',b'\}) (\tilde{\ell}_t \cdot (e_{a'} + e_{b'}))^2 \,.
\]
On the other hand one has, since $\{A_t, B_t\} \sim Q_t$ in expectation by Lemma~\ref{lem:critical}, it holds:
\begin{eqnarray*}
 \E \left[\sum_{t=1}^T \ell_t(A_t)+ \ell_t(B_t) - (\ell_t(a) + \ell_t(b)) \right] & = &  \E \left[\sum_{t=1}^T (x_t - x) \cdot \ell_t \right] \\
& = &   \E \left[\sum_{t=1}^T (x_t - x) \cdot \tilde{\ell}_t \right] \,,
\end{eqnarray*}
where the second equality uses the tower rule, \eqref{eq:unbiased}, and the fact that $x_t$ is measurable with respect to $H_{t-1}$. Thus we see that it only remains to control:
\begin{eqnarray*}
&&\E [ Q_t(\{a',b'\}) (\tilde{\ell}_t \cdot (e_{a'} + e_{b'}))^2 ] \leq  2 \E [ Q_t(\{a',b'\}) (\tilde{\ell}_t(a')^2+ \tilde{\ell}_t(b')^2) ]\\
& \leq & 4 \left(\E \left[Q_t(\{a',b'\}) \left( \frac{\ds1\{A_t, B_t \in \{a',b'\}\}}{p_t^B(B_t | A_t) \Xi_t(B_t)} \right) \right]+ \E\left[Q_t(\{a',b'\}) \left( \frac{\ds1\{A_t, B_t \in \{a',b'\}\}}{\Xi_t(A_t)^2} \right) \right] \right) \\
& \leq & 4 L^2 \left( \E \left[Q_t(\{a',b'\}) \left( \frac{\ds1\{A_t, B_t \in \{a',b'\}\}}{(p_t^B(B_t | A_t))^2} \right) + Q_t(\{a',b'\}) \left( \frac{\ds1\{A_t, B_t \in \{a',b'\}\}}{(p_t^B(A_t | B_t))^2} \right) \right] \right)
\\
& = & 8 L^2 \E \left[Q_t(\{a',b'\}) \left( \frac{\ds1\{A_t, B_t \in \{a',b'\}\}}{(p_t^B(B_t | A_t))^2} \right) \right] \,.
\end{eqnarray*}
Thus 
\[
\sum_{a',b'=1}^K \E [ Q_t(\{a',b'\}) (\tilde{\ell}_t \cdot (e_{a'} + e_{b'}))^2 ] \leq  8L^2 \E \left[ \frac{Q_t(\{\{a',B_t\}, a' \in [K]\})}{(p_t^B(B_t | A_t))^2} \right] \,.
\]
Now it only remains to see that:
\begin{eqnarray*}
\E_{B_t \sim p_t^B(\cdot | A_t)} \left[ \frac{Q_t(\{\{a',B_t\}, a' \in [K]\})}{(p_t^B(B_t | A_t))^2} \right] & = & \sum_{b \neq A_t} \frac{Q_t(\{\{a',b\}, a' \in [K]\})}{p_t^B(b | A_t)} \\
& = &\sum_{b \neq A_t} \frac{Q_t(\{\{a',b\}, a' \in [K]\}) \cdot p_t^A(A_t)}{P_t((A_t, b))} \\
& = &  \sum_{b \neq A_t} \sum_{a',b'} \frac{Q_t(\{a',b\}) \cdot P_t((A_t,b'))}{P_t((A_t, b))} \,.
\end{eqnarray*}
\end{proof}

\subsection{Controlling the variance} \label{sec:variance}
Our objective is now to design $P_t$ such that (i) $P_t((a,b))+P_t((b,a)) = Q_t(\{a,b\}) \propto w_t(a) w_t(b)$, (ii) one can control
\[
V_t : = \E_{a \sim p_t^A} \left[  \sum_{b=1, b \neq a}^K  \sum_{a',b'=1}^K \frac{Q_t(\{a',b\}) P_t((a,b'))}{P_t((a,b))} \right] \,,
\]
(so that one controls the regret bound from Lemma \ref{lem:reg1}), and (iii) one can verify \eqref{eq:needed0} as well as the assumptions from Lemma \ref{lem:critical} that $P_t$ only depends on $\tilde{\ell}_1, \hdots, \tilde{\ell}_t$.
In this section we focus (ii), and we defer (iii) (which depends on the construction of $\Xi_t$ to Section \ref{sec:Xi}.
\newline

We will denote $Z_t$ for twice the normalization constant of $Q_t$, that is:
\[
Z_t := \sum_{a =1}^K \sum_{b =1, b \neq a}^K w_t(a) w_t(b) \,.
\]

\subsubsection{Naive assignment}
The most basic assignment rule is to simply assign uniformly at random, that is set $P_t((a,b)) = \frac{1}{2} Q_t(\{a,b\}$, or in other words $P_t((a,b)) = \frac{w_t(a) w_t(b)}{Z_t}$. Unfortunately it is easy to see that in this case $V_t$ can be unbounded. Indeed consider a case with $3$ actions, where $w_t(1) \gg w_t(2) = w_t(3)$, and consider the term in $V_t$ with $a'=b'=1, a=2, b=3$, that is: $Q_t(\{1,3\}) P_t((2,1)) / P_t((2,3))$. This term appears with probability $p_t^A(1)$ which is constant, the numerator is also constant, but the denominator is tiny. In fact this issue (the largest weight being much larger than the second largest weight) is the only obstacle to bound the variance, indeed one has with the naive assignment:
\[
\sum_{a',b'=1}^K \frac{Q_t(\{a',b\} )P_t((a,b'))}{P_t((a,b))} = 2 \sum_{a' \neq a, b'\neq b} \frac{w_t(a') w_t(b')}{Z_t} = 2 \frac{\sum_{a' \neq a, b'\neq b} w_t(a') w_t(b')}{\sum_{a, b \neq a} w_t(a) w_t(b)} \,.
\]
Assuming that the weights are ordered ($w_t(1) \geq w_t(2) \geq \hdots$), the largest term in the numerator could be $w_t(1)^2$ (if $a, b \neq 1$) while the largest term in the denominator is $w_t(1) w_t(2)$, so the ratio could be as large as $w_t(1)/w_t(2)$.

\subsubsection{A modified assignment with a dominating arm}
Imagine that we draw at random $\{I_t, J_t\}$ from $Q_t$. As we discussed before, if $1 \not\in \{I_t, J_t\}$ (where again for sake of discussion let us assume that the weights are ordered) then there will be essentially no problem in doing a uniformly random allocation (i.e., set $(A_t, B_t) = (I_t, J_t)$ with probability $1/2$, and $(A_t, B_t) = (J_t, I_t)$ with probability $1/2$). On the other hand if $1$ has been sampled we need to be more careful, and owing to the intuition from the naive assignment calculation we want to assign action $1$ to Alice with higher probability. We simply propose in this case to assign $1$ to Bob with probability $\epsilon \simeq \frac{w_t(2)}{2 w_t(1)}$.
\newline

The above description is only to give intuition, as it ignores the fact that Alice and Bob do not have full knowledge of the weights (in particular they might not know which arm actually has the largest weight). The next lemma describes our actual strategy, based on some assumptions that will need to be verified (just like \eqref{eq:needed0} and \eqref{eq:needed1} still need to be verified).
\begin{lemma} \label{lem:variancecontrol}
Let us assume that we have ordered the arms so that 
\begin{equation} \label{eq:needed2}
w_t(1) \geq \frac{1}{2} \max_{i \in [K]} w_t(i) \,.
\end{equation} 
Let $\epsilon_t$ be such that:
\begin{equation} \label{eq:needed3}
\epsilon_t \in \left[ \frac{1}{4} \frac{\max_{i \neq 1} w_t(i)}{w_t(1)} , \min\left( \frac{\max_{i \neq 1} w_t(i)}{w_t(1)}, \frac{1}{2} \right) \right] \,.
\end{equation}
Then consider the assignment rule defined by $P_t((i,j)) = \frac{1}{2} Q_t(\{i,j\})$ if $1 \not\in \{i,j\}$, $P_t((1,i)) = (1 - \epsilon_t )Q_t(\{1,i\})$, and $P_t((i,1)) = \epsilon_t Q_t(\{1,i\})$. One has:

\[
V_t \leq 64 K \,.
\]
\end{lemma}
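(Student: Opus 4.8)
# Proof Proposal for Lemma \ref{lem:variancecontrol}

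The plan is to bound $V_t$ by splitting the sum over $b \neq a$ according to whether $b = 1$, and the inner sum over $(a',b')$ according to how many of $a',b'$ equal $1$. Fix $a \sim p_t^A$. For the inner sum $\sum_{a',b'} \frac{Q_t(\{a',b\})P_t((a,b'))}{P_t((a,b))}$ we use $\sum_{b'} P_t((a,b')) = p_t^A(a)$ and $\sum_{a'} Q_t(\{a',b\}) \le \frac{2 w_t(b)}{Z_t}\sum_{a'} w_t(a') =: \frac{2 w_t(b)}{Z_t} W_t$, so the numerator sum factors and the whole inner sum equals $\frac{p_t^A(a)\cdot (2w_t(b)/Z_t) W_t}{P_t((a,b))}$ up to the exact value of $\sum_{a'}Q_t(\{a',b\})$ (which is $\frac{2w_t(b)(W_t - w_t(b))}{Z_t}$). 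So everything reduces to controlling $\sum_{b \neq a} \frac{p_t^A(a) w_t(b) W_t / Z_t}{P_t((a,b))}$, i.e.\ to lower-bounding $P_t((a,b))$ relative to $p_t^A(a) w_t(b)/Z_t$. First I would record the crude global bounds implied by \eqref{eq:needed2}: since $w_t(1)\ge \tfrac12\max_i w_t(i)$, we have $W_t \le (1 + 2(K-1))w_t(1) \le 2K\, w_t(1)$ and $Z_t \ge 2 w_t(1)\max_{i\neq 1} w_t(i) = 2w_t(1)^2\cdot\frac{\max_{i\neq 1}w_t(i)}{w_t(1)}$, which by \eqref{eq:needed3} is at least $2 w_t(1)^2 \epsilon_t$ (using the lower end $\epsilon_t \le \max_{i\neq1}w_t(i)/w_t(1)$... actually we want $\epsilon_t \le$ that, giving $Z_t \ge 2 w_t(1)^2 \epsilon_t$). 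I will use these to absorb $W_t/Z_t$-type ratios.

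Next I would do the case analysis on $(a,b)$, using the explicit assignment rule. \textbf{Case A: $1 \notin \{a,b\}$.} Then $P_t((a,b)) = \tfrac12 Q_t(\{a,b\}) = \frac{w_t(a)w_t(b)}{Z_t}$, while $p_t^A(a) = \sum_{j\neq a} P_t((a,j)) \le \sum_{j\neq a}Q_t(\{a,j\}) = \frac{2w_t(a)(W_t - w_t(a))}{Z_t} \le \frac{2 w_t(a) W_t}{Z_t}$. Hence $\frac{p_t^A(a) w_t(b) W_t/Z_t}{P_t((a,b))} \le \frac{2 w_t(a)W_t/Z_t \cdot w_t(b) W_t/Z_t}{w_t(a)w_t(b)/Z_t} = \frac{2 W_t^2}{Z_t}$. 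Summing over the $\le K$ values of $b$ gives $\le \frac{2K W_t^2}{Z_t}$. Now $\frac{W_t^2}{Z_t} \le \frac{(2K w_t(1))^2}{2 w_t(1)^2 \epsilon_t} = \frac{2K^2}{\epsilon_t}$, which is \emph{not} bounded — so this crude route fails and one must instead use that for $a \neq 1$ we have $p_t^A(a)$ itself small. Specifically $p_t^A(a) \le \sum_{j}Q_t(\{a,j\}) = \frac{2w_t(a)(W_t-w_t(a))}{Z_t}$; since $a\neq 1$, $w_t(a)\le \max_{i\neq 1}w_t(i) = \epsilon'_t w_t(1)$ with $\epsilon'_t := \max_{i\neq1}w_t(i)/w_t(1) \in [\epsilon_t, 4\epsilon_t]$ by \eqref{eq:needed3}. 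This forces $p_t^A(a) \le \frac{2 w_t(a) W_t}{Z_t} \le \frac{4 \epsilon'_t w_t(1) K w_t(1)}{2 w_t(1)^2 \epsilon_t} = \frac{2 K \epsilon'_t}{\epsilon_t} \le 8K$ — still only a constant-times-$K$ bound, not smallness. The genuinely necessary observation is that the term $\frac{p_t^A(a)w_t(b)W_t/Z_t}{P_t((a,b))}$, when summed \emph{and} weighted by the $p_t^A(a)$ that already sits outside in the definition of $V_t$, telescopes: $\sum_a p_t^A(a)\cdot(\text{inner}) = \sum_a \sum_{b\neq a}\frac{p_t^A(a)\,(\sum_{a'}Q_t(\{a',b\}))(\sum_{b'}P_t((a,b')))}{P_t((a,b))} = \sum_a\sum_{b\neq a}\frac{p_t^A(a)^2(\sum_{a'}Q_t(\{a',b\}))}{P_t((a,b))}$.

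So the cleanest route, which I would actually follow, is: write $V_t = \sum_a \sum_{b\neq a} \frac{p_t^A(a)^2}{P_t((a,b))}\Big(\sum_{a'}Q_t(\{a',b\})\Big)$ and bound $\sum_{a'}Q_t(\{a',b\}) \le \frac{2w_t(b)W_t}{Z_t}$. \textbf{Case A ($b\neq 1$, $a\neq 1$):} $P_t((a,b)) = \frac{w_t(a)w_t(b)}{Z_t}$ and $p_t^A(a)\le \frac{2w_t(a)W_t}{Z_t}$, so the summand is $\le \frac{(2w_t(a)W_t/Z_t)^2}{w_t(a)w_t(b)/Z_t}\cdot\frac{2w_t(b)W_t}{Z_t} = \frac{8 w_t(a)W_t^3}{Z_t^2}$; summing over $b$ ($\le K$ terms) gives $\le \frac{8K w_t(a)W_t^3}{Z_t^2}$, then over $a\neq 1$ gives $\le \frac{8K W_t^4}{Z_t^2}\le 8K(2K)^4 w_t(1)^4/(2w_t(1)^2\epsilon_t)^2$ — again $\epsilon_t^{-2}$. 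This confirms the crude factoring is lossy; the resolution must be that Bob's conditional $p_t^B(\cdot|a)$ is \emph{spread out} — which is exactly why $\Xi_t$ and \eqref{eq:needed02} are set up — and more precisely that the case $a\neq 1, b\neq 1$ contributes the $Q_t(\{a',b'\})$ mass with $a',b'$ ranging over \emph{all} arms including $1$, and the $1$-heavy terms $w_t(1)^2$ in the numerator are matched by $P_t((a,b))$ in the denominator which, crucially, is $\Theta(w_t(a)w_t(b)/Z_t)$ with $Z_t\asymp w_t(1)w_t(2)$, so $\frac{w_t(1)^2}{w_t(a)w_t(b)/Z_t}\asymp \frac{w_t(1)^3}{w_t(a)w_t(b)}$ which is large unless $a$ or $b$ is $\asymp w_t(1)$. \textbf{The main obstacle}, and where I would spend the bulk of the effort, is precisely this: handling the ``off-diagonal'' terms where $a,b$ are both small but $a',b'$ are large. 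The key idea (mirroring the design of $P_t$) is that when $1\in\{a',b'\}$ the relevant $Q_t$ mass is $Q_t(\{1,b\}) \le \frac{2w_t(1)w_t(b)}{Z_t}$ and one must pair it against $P_t((a,1)) = \epsilon_t Q_t(\{1,a\})$ (Case $b'=1$) — and since $\epsilon_t \asymp w_t(2)/w_t(1)$, we get $\frac{Q_t(\{1,b\})P_t((a,1))}{P_t((a,b))} = \frac{Q_t(\{1,b\})\epsilon_t Q_t(\{1,a\})}{\tfrac12 Q_t(\{a,b\})} \asymp \frac{(w_t(1)w_t(b)/Z_t)\epsilon_t (w_t(1)w_t(a)/Z_t)}{w_t(a)w_t(b)/Z_t} = \frac{\epsilon_t w_t(1)^2}{Z_t} \asymp \frac{w_t(2)w_t(1)}{Z_t}\le 1$, which is bounded — this is the whole point of choosing $\epsilon_t$ this way. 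I would organize the proof as: (1) fix the ordering and note \eqref{eq:needed2}–\eqref{eq:needed3}; (2) expand $V_t$ into four groups by whether $1\in\{a,b\}$ and by $|\{a',b'\}\cap\{1\}|$; (3) in each group substitute the explicit $P_t$ and the bound $w_t(1)\ge\tfrac12 w_t(i)$, $\epsilon_t\asymp\max_{i\neq1}w_t(i)/w_t(1)$, $Z_t\ge 2w_t(1)\max_{i\neq1}w_t(i)$, checking that every ratio of weights that appears is $O(1)$ after these substitutions; (4) collect at most $O(K)$ terms in the sum over $b$ and use $\sum_a p_t^A(a)=1$, $\sum_b Q_t(\{a,b\})=O(1)$ to collapse the remaining sums, arriving at the constant $64K$. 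The term-by-term verification in step (3) is routine once the matching of numerator and denominator weights is set up correctly; the only real content is identifying, in each of the four groups, which $P_t$-entry in the denominator ``pays for'' the largest numerator term, and this is dictated exactly by the asymmetric assignment $P_t((1,i))=(1-\epsilon_t)Q_t(\{1,i\})$ vs.\ $P_t((i,1))=\epsilon_t Q_t(\{1,i\})$.
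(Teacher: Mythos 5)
Your proposal has the right skeleton (a case analysis on whether $1\in\{a,b\}$, with the asymmetric assignment $P_t((i,1))=\epsilon_t Q_t(\{1,i\})$ doing the work), and your computation for the group $a,b\neq 1$ with $b'=1$ in the numerator --- namely $\frac{Q_t(\{1,b\})P_t((a,1))}{P_t((a,b))}\asymp \epsilon_t w_t(1)^2/Z_t=O(1)$ --- is exactly the paper's Case 3. But there is a genuine gap in the group $b=1$, $a\neq 1$, where the \emph{denominator} is the suppressed entry $P_t((a,1))=\epsilon_t Q_t(\{1,a\})$. There the inner sum $V_t(a,1)=\sum_{a',b'}Q_t(\{a',1\})P_t((a,b'))/P_t((a,1))$ is genuinely of order $1/\epsilon_t$ (the paper bounds it by $8/\epsilon_t$), so your step (3) --- ``checking that every ratio of weights that appears is $O(1)$ after these substitutions'' --- fails for this group: no pairing of numerator against denominator weights makes these terms $O(1)$ individually. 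The rescue is not a weight cancellation inside $V_t(a,1)$ but the outer expectation: one needs $\sum_{a\neq 1}p_t^A(a)=O(K\epsilon_t)$, equivalently $p_t^A(1)\geq 1-5K\epsilon_t$ (the paper's Lemma \ref{lem:easyprobacalculation}), which gives $\E_{a\sim p_t^A}\left[V_t(a,1)\,\ds1\{a\neq 1\}\right]\leq \frac{8}{\epsilon_t}\cdot 5K\epsilon_t=40K$. This is where the $O(K)$ in the final bound actually comes from.

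Your own exploratory calculations miss precisely this ingredient: you bound $p_t^A(a)\leq\sum_j Q_t(\{a,j\})$, which is lossy because it replaces $P_t((a,1))=\epsilon_t Q_t(\{a,1\})$ by $Q_t(\{a,1\})$ --- the dominant term of that sum --- and you then correctly observe that the result is ``still only a constant-times-$K$ bound, not smallness.'' Using the actual assignment rule instead, $p_t^A(a)=\epsilon_t Q_t(\{1,a\})+\tfrac12\sum_{b\neq 1,a}Q_t(\{a,b\})$, and \eqref{eq:needed3} forces every term to carry a factor comparable to $\max_{i\neq 1}w_t(i)/w_t(1)\asymp\epsilon_t$, yielding $p_t^A(a)=O(K\epsilon_t)$ for $a\neq 1$. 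Without this observation the $b=1$ group contributes an amount that is unbounded as $\epsilon_t\to 0$ under your accounting, and the claimed bound $V_t\leq 64K$ does not follow from the plan as written.
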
 

\begin{proof}

For each $a$, $b\neq a$, we can compute that 
\begin{eqnarray*}
p_t^B(b | a) &=& \frac{P_t((a, b))}{\sum_{b' \not= a} P_t((a, b'))}
\end{eqnarray*}

To bound the variance, we have that for each $a$, $b\neq a$, we want to control $V_t(a,b) := \sum_{a',b'=1}^K \frac{Q_t(\{a',b\} P_t((a,b'))}{P_t((a,b))}$ (note that in $V_t$ this term is reweighted by $p_t^A(a)$). We consider three cases. Recall that twice the normalization constant for $Q_t(\{a,b\}) \propto w_t(a) w_t(b)$ is denoted $Z_t := \sum_{a', b' \neq a'} w_t(a') w_t(b')$.

\paragraph{Case $1$: $a=1$.} Note that $P_t((1,b)) \geq \frac{1}{2} Q_t(\{1,b\})$ so that:
\[
V_t(1,b) \leq 2 \sum_{a',b'=1}^K \frac{Q_t(\{a',b\}) Q_t(\{1,b'\})}{Q_t(\{1,b\})} = 4 \sum_{a' \neq b, b' \neq 1} \frac{w_t(a') w_t(b')}{Z_t} \leq 8 \,,
\]
where the inequality follows from the fact that for any $a'$ one has (thanks to \eqref{eq:needed2}):
\begin{equation} \label{eq:case1}
\sum_{b' \neq 1} w_t(b') \leq 2 \sum_{b' \neq a'} w_t(b') \,.
\end{equation}

\paragraph{Case $2$: $b=1$.}
One has:
\[
V_t(a,1) \leq 2 \sum_{a',b'=1}^K \frac{Q_t(\{a',1\}) Q_t(\{a,b'\})}{P_t(\{a,1\})} = 8 \sum_{a' \neq 1, b' \neq a} \frac{w_t(a') w_t(b')}{\epsilon_t \cdot Z_t} \leq \frac{8}{\epsilon_t} \,.
\]
Now observe that in $V_t$ the terms $V_t(a,1)$ only appear with probability $p_t^A(a)$. Invoking Lemma \ref{lem:easyprobacalculation} we obtain:
\[
\E_{a \sim p^A_t} [ V_t(a,1) \ds1\{a \neq 1\} ] \leq 40 K \,.
\]

\paragraph{Case $3$: $a \neq 1, b \neq 1$.}
One has:
\begin{eqnarray*}
\sum_{a',b'=1}^K \frac{Q_t(\{a',b\} P_t((a,b'))}{P_t((a,b))} & = & 4 \sum_{a' \neq b, b' \neq a} \frac{w_t(a')}{w_t(a)} P_t((a,b')) \\
& \leq & 4 \sum_{a' \neq b} \frac{w_t(a')}{w_t(a)} P_t((a,1)) + 4 \sum_{a' \neq b, b' \neq 1} \frac{w_t(a') w_t(b')}{Z_t} \,.
\end{eqnarray*}
The second term in the last display is bounded by $8$ (using \eqref{eq:case1}). On the other hand the first term is upper bounded by (using \eqref{eq:needed3} for the first inequality, and \eqref{eq:case1} for the second inequality):
\[
4 \sum_{a' \neq b} \frac{\epsilon_t \cdot w_t(a') w_t(1)}{Z_t} \leq 4 \sum_{a' \neq b} \frac{w_t(a') \max_{i \neq 1} w_t(i)}{Z_t} \leq 8\,.
\]
\end{proof}

\begin{lemma} \label{lem:easyprobacalculation} 
With the assignment rule described in Lemma \ref{lem:variancecontrol} and assuming \eqref{eq:needed2}, \eqref{eq:needed3}, one has
\[
p_t^A(1) \geq 1- 5K \epsilon_t \,.
\]
\end{lemma}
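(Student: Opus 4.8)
The plan is to compute $p_t^A(1)$ directly from the assignment rule and bound the mass that Alice puts on arms other than $1$. By construction $Q_t(\{i,j\}) \propto w_t(i)w_t(j)$ with twice the normalization being $Z_t = \sum_{a'}\sum_{b'\neq a'} w_t(a')w_t(b')$, and the only place where Alice receives a non-arm-$1$ action with positive probability is: (a) a pair $\{1,i\}$ is drawn and arm $1$ is assigned to Bob (probability $\epsilon_t$ per such pair), or (b) a pair $\{i,j\}$ with $1\notin\{i,j\}$ is drawn and $i$ is assigned to Alice (probability $1/2$). So I would write
\[
p_t^A(a)\;=\;\epsilon_t\,Q_t(\{1,a\})\;+\;\sum_{b\neq 1,\,b\neq a}\tfrac{1}{2}Q_t(\{a,b\})\qquad(a\neq 1),
\]
and then sum over $a\neq 1$ to get $1-p_t^A(1)=\sum_{a\neq 1}p_t^A(a)$.

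The first sum contributes $\epsilon_t \sum_{a\neq 1} Q_t(\{1,a\}) = \epsilon_t \cdot \frac{2 w_t(1)\sum_{a\neq 1}w_t(a)}{Z_t} \le \epsilon_t \cdot \frac{2 w_t(1)\sum_{a\neq 1}w_t(a)}{Z_t}$; using $\epsilon_t \le \frac{\max_{i\neq 1}w_t(i)}{w_t(1)}$ from \eqref{eq:needed3} this is at most $\frac{2\,(\max_{i\neq 1}w_t(i))\sum_{a\neq 1}w_t(a)}{Z_t}$. For the second sum, $\sum_{a\neq 1}\sum_{b\neq 1,b\neq a}\tfrac12 Q_t(\{a,b\}) = \tfrac12\cdot\frac{\sum_{a\neq 1}\sum_{b\neq 1,b\neq a}w_t(a)w_t(b)}{Z_t} \le \frac{(\sum_{a\neq 1}w_t(a))^2}{Z_t}$. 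The key structural fact I need is a lower bound $Z_t \ge \tfrac{1}{c}\,w_t(1)\sum_{b\neq 1}w_t(b)$ for a small constant $c$: since $Z_t \ge \sum_{b\neq 1}w_t(1)w_t(b) + \sum_{b\neq 1}w_t(b)w_t(1) = 2 w_t(1)\sum_{b\neq 1}w_t(b)$, we actually get $Z_t \ge 2 w_t(1)\sum_{b\neq 1}w_t(b) =: 2 w_t(1)\,W$ where $W := \sum_{b\neq 1}w_t(b)$. Plugging in, the first sum is $\le \frac{2 (\max_{i\neq1}w_t(i)) W}{2 w_t(1) W} = \frac{\max_{i\neq 1}w_t(i)}{w_t(1)} \le 2\epsilon_t \le 4K\epsilon_t$ (using \eqref{eq:needed3}), and the second sum is $\le \frac{W^2}{2 w_t(1) W} = \frac{W}{2 w_t(1)}$.

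The remaining obstacle — which I expect is the only real content — is bounding $\frac{W}{2 w_t(1)} = \frac{\sum_{b\neq 1}w_t(b)}{2 w_t(1)}$ by something like $K\epsilon_t$. This does not follow from \eqref{eq:needed2}–\eqref{eq:needed3} alone: $w_t(1)\ge \tfrac12\max_i w_t(i)$ only controls the \emph{largest} of the other weights, not their sum, and indeed if many arms tie with $w_t(1)$ then $W/w_t(1)$ is of order $K$ while $\epsilon_t$ is of order $1$, so the bound $W/(2w_t(1)) \le K\epsilon_t$ would need $\epsilon_t \gtrsim 1$, consistent with \eqref{eq:needed3} capping $\epsilon_t$ at $1/2$ but requiring $\epsilon_t \ge \tfrac14 \max_{i\neq1}w_t(i)/w_t(1)$, which in the tie case is $\ge 1/8$. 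More carefully: each $w_t(b)\le \max_{i\neq 1}w_t(i) \le 4\epsilon_t w_t(1)$ by the lower end of \eqref{eq:needed3}, so $W = \sum_{b\neq 1}w_t(b) \le (K-1)\cdot 4\epsilon_t w_t(1) \le 4K\epsilon_t w_t(1)$, hence $\frac{W}{2w_t(1)} \le 2K\epsilon_t$. So I would combine: $1 - p_t^A(1) \le 4K\epsilon_t + 2K\epsilon_t < 5K\epsilon_t$ once one is slightly more careful with the constants in the first sum (in fact $\max_{i\neq1}w_t(i)/w_t(1) \le 4\epsilon_t$ gives the first sum $\le \epsilon_t \cdot 2 \cdot 4\epsilon_t w_t(1) W /(2 w_t(1) W)\cdot(1/\epsilon_t)$ — I should instead just bound $\sum_{a\neq 1}Q_t(\{1,a\}) \le 1$ trivially so the first sum is $\le \epsilon_t(K-1)\cdot\max_a Q_t(\{1,a\})$, or more cleanly observe $\sum_{a\neq 1} Q_t(\{1,a\}) = \frac{2w_t(1)W}{Z_t} \le 1$, so the first sum is $\le \epsilon_t$). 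Reconciling these constants to land exactly at $5K\epsilon_t$ is the routine bookkeeping; the one genuine idea is the per-weight bound $w_t(b) \le 4\epsilon_t w_t(1)$ coming from the lower end of the $\epsilon_t$ interval in \eqref{eq:needed3}.
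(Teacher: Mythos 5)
Your argument is correct and is essentially the paper's own proof viewed from the complementary side: the paper lower-bounds $p_t^A(1)=(1-\epsilon_t)\sum_{b\neq1}Q_t(\{1,b\})$ while you upper-bound $1-p_t^A(1)$, and both reduce to the same key fact that $w_t(a)\leq 4\epsilon_t\,w_t(1)$ for $a\neq1$ (the lower end of \eqref{eq:needed3}) forces the pairs not containing arm $1$ to carry total $Q_t$-mass $O(K\epsilon_t)$. Your final accounting ($\epsilon_t+2K\epsilon_t\leq 3K\epsilon_t$, once you replace the loose $4K\epsilon_t$ bound on the first sum by the trivial bound $\epsilon_t$, as you note) comfortably meets the stated $5K\epsilon_t$.
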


\begin{proof}
By definition of $p_t^A$ and our choice of $P_t((1, b))$, we know that (by $\sum_{b \not= a} P_t((a, b)) = 1$)
\[
p_t^A(1) = \sum_{b \not= 1} P_t((1, b)) = \sum_{b \not= 1} (1 - \epsilon_t) Q_t(\{1, b\})\,.
\]
Next observe that
\[
\sum_{b \neq 1} Q_t(\{1,b\}) = \frac{\sum_{b \neq 1} w_t(1) w_t(b)}{\sum_{a'} \sum_{b' >a'} w_t(a') w_t(b')} = \frac{1}{1+\sum_{a'\neq 1} \sum_{b' >a'} \frac{w_t(a')}{w_t(1)} w_t(b') / \sum_{b \neq 1} w_t(b)} \geq \frac{1}{1+ 4 \epsilon_t K} \,,
\]
where the inequality follows from \eqref{eq:needed3}. Thus we obtain
\[
p_t^A(1) \geq \frac{1- \epsilon_t}{1+4 \epsilon_t K} \geq 1 - 5 K \epsilon_t \,.
\]
%
%

\end{proof}

\subsubsection{The $\epsilon_t$ and $\Xi_t(i)$ parameters} \label{sec:Xi}
To fully specify our algorithm it only remains to define the parameter $\epsilon_t$ for the assignement rule described in Lemma \ref{lem:variancecontrol}, as well as the parameters $\Xi_t(i)$ that were used crucially in the definition of the loss estimators. First, to simplify notation, we reorder the arms at every fixed communication round so that arm $1$ has the largest weight $w_t$, and arm $2$ has the second largest weight. In other words at any time $t$ we have $w_{\tau(t)}(1) = \max_{i \in [K]} w_{\tau(t)}(i)$ and $w_{\tau(t)}(2) = \max_{i \neq 1} w_{\tau(t)}(i)$  (recall that $\tau(t)$ denotes the last fixed communication round before time $t$). We will now use the following formulas:
\begin{equation} \label{eq:defepsilon}
\epsilon_t := \frac{w_{\tau(t)}(2)}{2 w_{\tau(t)}(1)} \,,
\end{equation}
and
\begin{equation} \label{eq:defXi}
\Xi_t(i) := \frac{w_t(i)}{2 K w_{\tau(t)}(2)} \text{ for } i \not= 1 \text{ and } \Xi_t(1) = \frac{1}{8K} \,.
\end{equation}

\subsection{Verifying all the assumptions} \label{sec:verification} 
Now that we have a complete description of the players' strategies we will verify all the assumptions made in the previous sections. It will be useful to first work out the formulas for $p_t^B(b|a)$. 

\begin{lemma} \label{lem:ptBformulas}
One has $p_t^B(b|1) \propto w_t(b)$ for any $b \neq 1$. For any $a  \neq 1$ one has 
\[
p_t^B(b | a) \propto w_t(b) \ds1\{b \neq 1\}  + \epsilon_t w_t(1) \ds1\{b = 1\} \,,
\]
for $b \neq a$.
\end{lemma}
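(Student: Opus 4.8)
The plan is to derive the conditional distribution $p_t^B(b|a)$ directly from the assignment rule in Lemma \ref{lem:variancecontrol} and the identity $p_t^B(b|a) = P_t((a,b)) / \sum_{b' \neq a} P_t((a,b'))$, which is recorded at the very start of the proof of Lemma \ref{lem:variancecontrol}. Since the right-hand side of each claimed proportionality is stated only up to the normalizing constant $\sum_{b' \neq a} P_t((a,b'))$, it suffices to compute the numerator $P_t((a,b))$ for each relevant $(a,b)$ and check that it has the claimed functional form in $b$.

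First I would treat the case $a = 1$. By the rule in Lemma \ref{lem:variancecontrol}, $P_t((1,b)) = (1-\epsilon_t) Q_t(\{1,b\})$ for every $b \neq 1$, and $Q_t(\{1,b\}) \propto w_t(1) w_t(b)$. Since $(1-\epsilon_t) w_t(1)$ does not depend on $b$, this gives $P_t((1,b)) \propto w_t(b)$, hence $p_t^B(b|1) \propto w_t(b)$ for $b \neq 1$, as claimed. Next, for $a \neq 1$, I would split on whether $b = 1$ or $b \neq 1$. If $b \neq 1$ (and $b \neq a$), then $1 \notin \{a,b\}$, so the rule gives $P_t((a,b)) = \tfrac12 Q_t(\{a,b\}) \propto w_t(a) w_t(b) \propto w_t(b)$ (the factor $w_t(a)$ and the $\tfrac12$ are absorbed into the normalization). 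If $b = 1$, then $P_t((a,1)) = \epsilon_t Q_t(\{1,a\}) \propto \epsilon_t w_t(1) w_t(a) \propto \epsilon_t w_t(1)$. Combining the two sub-cases, $P_t((a,b)) \propto w_t(b)\,\ds1\{b \neq 1\} + \epsilon_t w_t(1)\,\ds1\{b=1\}$ for $b \neq a$, and dividing by the ($b$-independent) normalizing sum yields the stated formula for $p_t^B(b|a)$.

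There is no real obstacle here: the lemma is a purely mechanical unpacking of the definition of $P_t$ from Lemma \ref{lem:variancecontrol}. The only point requiring a modicum of care is bookkeeping of which multiplicative factors are $b$-dependent versus $b$-independent — in particular keeping track that when $a \neq 1$ the common factor $w_t(a)$ drops out of the conditional distribution, and that the constant $\tfrac12$ versus $\epsilon_t$ weighting is exactly what produces the indicator structure distinguishing $b = 1$ from $b \neq 1$. One should also note implicitly that $\epsilon_t$ depends only on $\tau(t)$ (via \eqref{eq:defepsilon}), so these formulas are well-defined given the information available, though that observation is really needed later rather than for the proof of this lemma itself.
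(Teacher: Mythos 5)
Your proof is correct and follows essentially the same route as the paper: both simply substitute the assignment rule $P_t((1,b))=(1-\epsilon_t)Q_t(\{1,b\})$, $P_t((a,1))=\epsilon_t Q_t(\{a,1\})$, $P_t((a,b))=\tfrac12 Q_t(\{a,b\})$ into the identity $p_t^B(b\,|\,a)=P_t((a,b))/\sum_{b'\neq a}P_t((a,b'))$ and cancel the $b$-independent factors. The one small blemish --- your remark that the $\tfrac12$ is ``absorbed into the normalization'' is not literally right, since the normalizing sum is shared between the $b=1$ and $b\neq 1$ sub-cases and so the stated proportionality really holds only up to replacing $\epsilon_t$ by $2\epsilon_t$ --- is equally present in the paper's own statement of the lemma and is harmless for its later uses, which only require these formulas up to constant factors.
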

\begin{proof}
First we have:
\[
p_t^B(b|1) = \frac{P_t((1,b))}{\sum_{b' \neq 1} P_t((1,b'))} = \frac{Q_t(\{1,b\})}{\sum_{b' \neq 1} Q_t(\{1,b'\})} = \frac{w_t(b)}{\sum_{b' \neq 1} w_t(b')} \,.
\]
Next we have:
\[
p_t^B(1|a) = \frac{P_t((a,1))}{\sum_{b' \neq a} P_t((a,b'))} = \frac{\epsilon_t Q_t(\{a,1\})}{\epsilon_t Q_t(\{a,1\}) + \sum_{b' \neq a, 1} \frac{1}{2} Q_t(\{a,b'\})} = \frac{\epsilon_t w_t(1)}{\epsilon_t w_t(1) + \frac{1}{2} \sum_{b' \neq a, 1} w_t(b')} \,.
\]
Finally if both $a$ and $b$ are distinct from $1$:
\[
p_t^B(b|a) = \frac{P_t((a,b))}{\sum_{b' \neq a} P_t((a,b'))} = \frac{\frac{1}{2} Q_t(\{a,b\})}{\epsilon_t Q_t(\{a,1\}) + \sum_{b' \neq a, 1} \frac{1}{2} Q_t(\{a,b'\})} = \frac{\frac{1}{2} w_t(b)}{\epsilon_t w_t(1) + \frac{1}{2} \sum_{b' \neq a, 1} w_t(b')} \,.
\]
\end{proof}

\subsubsection{Sampling assumptions} 
We start with \eqref{eq:needed0}, namely that Bob can sample from $p_t^B( \cdot |A_t)$ using only the information received at communication rounds (both fixed and random), as well as his own feedback (in other words $p_t^B( \cdot |A_t)$ should not depend on $\ell_s(A_t)$ for $s \in [\tau_c(t), t]$, where we recall that $\tau_c(t)$ is the last communication round). We also verify that Alice can be implement the filtering by showing that $\Xi_t(A_t)$ similarly only depends on the information available to Alice at round $t$.
\begin{lemma}\label{lem:Bobisfine}
For every $t \in [T]$, we have that $p_t^B( \cdot |A_t)$ 
only depends on $A_{t}, H_{\tau_c(t) }$ and $(B_s, \ell_s(B_s) )$ for every $s \in (\tau_c(t), t)$, but not $\ell_s(A_s)$ for any $s \in (\tau_c(t), t)$. Moreover $\Xi_t(A_t)$ only depends on $H_{\tau_c(t)}$ and $\ell_s(A_t)$ for every $s \in (\tau_c(t), t)$, but not on $(B_s, \ell_s(B_s))$ for any $s \in (\tau_c(t), t)$
%
\end{lemma}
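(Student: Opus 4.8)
The plan is to trace through the explicit formulas for $p_t^B(\cdot\mid A_t)$ and $\Xi_t(\cdot)$ established in the preceding sections and check, term by term, which pieces of the history each quantity actually depends on. By Lemma \ref{lem:ptBformulas}, for any fixed value of $A_t = a$ the distribution $p_t^B(\cdot\mid a)$ is, up to normalization, a function of $\epsilon_t$, of $w_t(1)$, and of $w_t(b)$ for $b \neq a$. So the first task is to identify the information content of each of these. The parameter $\epsilon_t$ is defined in \eqref{eq:defepsilon} purely in terms of $w_{\tau(t)}(1)$ and $w_{\tau(t)}(2)$, i.e. weights frozen at the last \emph{fixed} communication round $\tau(t) \leq \tau_c(t)$, hence it is $H_{\tau_c(t)}$-measurable. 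For the weights $w_t(c) = \exp(-\eta \sum_{s<t}\tilde\ell_s(c))$, I would split the sum over $s$ at $\tau_c(t)$: the part $\sum_{s<\tau_c(t)}\tilde\ell_s(c)$ is $H_{\tau_c(t)}$-measurable, so it remains to examine $\sum_{\tau_c(t)\le s<t}\tilde\ell_s(c)$. Here the key structural fact — which is exactly why the estimator \eqref{eq:defestimator} was designed the way it was — is that $A_s = A_{\tau_c(t)} = A_t$ is constant on the no-communication stretch $(\tau_c(t), t)$, so $\tilde\ell_s(c)$ for $c \neq A_t$ can only be nonzero via the $B_s$ term, giving $\tilde\ell_s(c) = \frac{\ell_s(B_s)}{\Xi_s(B_s)}\ds1\{B_s=c\}b_s$, which depends on $(B_s,\ell_s(B_s))$ and on shared randomness but never on $\ell_s(A_s)$. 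Since $p_t^B(\cdot\mid a)$ only ever uses $w_t(c)$ for $c\neq a$ (for $c=1$ this still holds when $a\neq 1$, and when $a=1$ only the $b\neq 1$ weights enter), this establishes the first claim — with the one subtlety that $\Xi_s(B_s)$ appearing in the denominator must itself be checked to be available, which follows since $\Xi_s(i)$ for $i\neq 1$ depends on $w_s(i)$ (the $B$-stream, $i=B_s$) and on $w_{\tau(s)}(2)$, both fine, and $\Xi_s(1) = 1/(8K)$ is constant.

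For the second claim, I would simply read off \eqref{eq:defXi}: $\Xi_t(A_t)$ equals $1/(8K)$ if $A_t = 1$ (trivially measurable), and otherwise equals $\frac{w_t(A_t)}{2K\,w_{\tau(t)}(2)}$. The denominator is $H_{\tau_c(t)}$-measurable as before, and the numerator $w_t(A_t) = \exp(-\eta\sum_{s<t}\tilde\ell_s(A_t))$ is handled by the same split at $\tau_c(t)$; but now on the stretch $(\tau_c(t),t)$ the relevant index is $c = A_t = A_s$, so $\tilde\ell_s(A_t)$ can only be nonzero via the $A_s$ term, $\tilde\ell_s(A_t) = \frac{\ell_s(A_s)}{\Xi_s(A_s)}a_s$ with $\Xi_s(A_s) = \Xi_s(A_t)$ depending only on $w_{\tau(s)}(2)$ and — circularly but consistently, resolved by induction on $s$ — on $w_s(A_t)$ itself. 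Thus $\Xi_t(A_t)$ depends on $H_{\tau_c(t)}$ and on $\ell_s(A_t) = \ell_s(A_s)$ for $s\in(\tau_c(t),t)$, and on the shared bits $a_s$, but not on $(B_s,\ell_s(B_s))$. That is exactly the asserted decomposition.

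The step I expect to require the most care is the bookkeeping around $a_t$ and $b_t$: these are \emph{shared} random bits, so Bob does observe $b_t$ (he must, in order to know whether to record his loss), and the mean of $b_t$ is $\Xi_t(B_t)/p_t^B(B_t\mid A_t)$, which at first glance reintroduces a dependence on $p_t^B(\cdot\mid A_t)$ and hence threatens circularity. The resolution is that this is a statement about what $p_{t+1}^B(\cdot\mid A_{t+1})$ depends on, and the $p_t^B$ used to define $b_t$ is already available to Bob at round $t$ by the induction hypothesis, so no genuine loop arises — but this should be spelled out as a clean induction on $t$ (simultaneously with the analogous statement for $\Xi_t(A_t)$), rather than left implicit. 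A secondary point is that the lemma as stated asserts $p_t^B(\cdot\mid A_t)$ does not depend on $\ell_s(A_s)$; one must be slightly careful because $\Xi_s(B_s)$ in the denominator of the $B$-term of $\tilde\ell_s$ does depend on $w_{\tau(s)}(2)$ and on $w_s(B_s)$ — never on any $A$-stream loss on the current stretch — so this goes through, but it is worth stating explicitly that no hidden $\ell_s(A_s)$ leaks in through the normalizing constants.
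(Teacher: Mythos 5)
Your proposal is correct and follows essentially the same route as the paper's proof, which argues by induction that it suffices (via Lemma \ref{lem:ptBformulas} and the definitions \eqref{eq:defestimator}, \eqref{eq:defXi}) to check that the weights $w_t(b)$ for $b\neq A_t$ (respectively $w_t(A_t)$) are computable from the stated information; the paper leaves the details at ``clearly true by induction,'' whereas you spell out the split of the weight sums at $\tau_c(t)$, the constancy of $A_s$ on the no-communication stretch, and the simultaneous induction needed to handle $\Xi_s(B_s)$ and the mean of $b_s$. No gap.
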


\begin{proof}
Let us prove the first claim by induction. Given Lemma \ref{lem:ptBformulas} it clearly suffices to show that $w_t(b)$ for any $b \neq A_t$ can be computed with such limited information, which in turn only requires $\tilde{\ell}_t(b)$ to be computable with such information. This in turn is clearly true by induction (recall the formulas \eqref{eq:defestimator} and \eqref{eq:defXi}). The second claim is proved similarly.
\end{proof}

Next we also show that Alice's sampling satisfies the bounded multiplicative update given in \eqref{eq:needed1}

\begin{lemma} \label{lem:Aliceisfine}
\eqref{eq:needed1} holds true between each fixed communication rounds.
\end{lemma}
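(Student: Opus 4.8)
The plan is to express everything in terms of the exponential weights $w_t$ and then bound, for a single step $t\to t+1$ inside a phase (so $\tau(t+1)=\tau(t)$, hence $\epsilon_{t+1}=\epsilon_t$ and the distinguished labels $1,2$ are frozen), how much $p_t^A(a)$ can shrink. First I would record a closed form for $p_t^A$: using the assignment rule of Lemma~\ref{lem:variancecontrol} and $Q_t(\{x,y\})=2w_t(x)w_t(y)/Z_t$, summing $P_t((a,\cdot))$ over the second coordinate gives $p_t^A(a)=w_t(a)M_t(a)/Z_t$, where $M_t(a)=2\epsilon_t w_t(1)+\sum_{b\neq 1,a}w_t(b)$ for $a\neq 1$ and $M_t(1)=2(1-\epsilon_t)\sum_{b\neq 1}w_t(b)$. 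The key observation — essentially a rereading of Lemma~\ref{lem:ptBformulas} — is that $M_t(a)$ is twice the normalizing constant of $p_t^B(\cdot\mid a)$, so that $p_t^B(b\mid a)=w_t(b)/M_t(a)$ for $b\neq 1,a$, $p_t^B(1\mid a)=2\epsilon_t w_t(1)/M_t(a)$ when $a\neq1$, and $p_t^B(b\mid1)=2(1-\epsilon_t)w_t(b)/M_t(1)$. This is what will make the constant $L=\max_{a\neq b}p_t^B(b\mid a)/\Xi_t(b)$ surface on the right-hand side of \eqref{eq:needed1}.

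Next I would factor $\frac{p_{t+1}^A(a)}{p_t^A(a)}=\frac{w_{t+1}(a)}{w_t(a)}\cdot\frac{M_{t+1}(a)}{M_t(a)}\cdot\frac{Z_t}{Z_{t+1}}$. Since $w_{t+1}(i)=w_t(i)e^{-\eta\tilde\ell_t(i)}\le w_t(i)$ for every $i$, we get $Z_{t+1}\le Z_t$, so the last factor is $\ge1$ and can be dropped. I would also use two elementary facts about $\tilde\ell_t$: it has at most one nonzero coordinate (because $A_t\neq B_t$ and $a_t+b_t\le1$), and on that coordinate $\tilde\ell_t(i)=\ell_t(i)/\Xi_t(i)\le 1/\Xi_t(i)$ as $\ell_t\in[0,1]$. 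If the nonzero coordinate is $a$ itself then $M_{t+1}(a)=M_t(a)$ and the ratio is at least $e^{-\eta\tilde\ell_t(a)}\ge 1-\eta/\Xi_t(a)$. Otherwise $w_{t+1}(a)=w_t(a)$ and only $M_t(a)$ can shrink: at most one of its summands, the one carrying some weight $w_t(b)$ with coefficient $c_b\in\{1,2\epsilon_t,2(1-\epsilon_t)\}$, drops by a factor $e^{-\eta\tilde\ell_t(b)}\ge 1-\eta\tilde\ell_t(b)$, so the relative decrease of $M_t(a)$ is at most $\eta\,c_b\,w_t(b)\,\tilde\ell_t(b)/M_t(a)$; bounding $\tilde\ell_t(b)\le 1/\Xi_t(b)$ and plugging in the identities of the first paragraph, this equals $\eta\,p_t^B(b\mid a)/\Xi_t(b)\le\eta L$ in each of the three relevant sub-cases ($a\neq1,b\neq1$; $a\neq1,b=1$; $a=1,b\neq1$). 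Combining, $\frac{p_{t+1}^A(a)}{p_t^A(a)}\ge(1-\eta/\Xi_t(a))(1-\eta L)\ge 1-\eta L-\eta/\Xi_t(a)$, which is \eqref{eq:needed1}.

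The main obstacle is the bookkeeping in the last step — keeping track of which single coordinate of $\tilde\ell_t$ is nonzero and charging the corresponding shrinking weight to the correct term ($\eta/\Xi_t(a)$ versus $\eta L$). The conceptual crux, however, is the identification of $M_t(a)$ with the normalizer of $p_t^B(\cdot\mid a)$ in the first paragraph; with that in hand every relative decrease is literally of the form $\eta\,p_t^B(b\mid a)/\Xi_t(b)$, so the bound by $\eta L$ is immediate. I note that this argument uses only $\ell_t\in[0,1]$ together with the explicit formulas \eqref{eq:defepsilon}–\eqref{eq:defXi}, and not \eqref{eq:needed2}–\eqref{eq:needed3}, which are verified separately.
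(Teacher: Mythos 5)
Your proof is correct and follows the same basic route as the paper's: write $p_t^A(a)=w_t(a)M_t(a)/Z_t$ (the paper writes this as $\sum_{b\neq a}c_{a,b}w_t(a)w_t(b)/Z_t$), use that all weights are non-increasing so $Z_{t+1}\leq Z_t$, bound each weight's per-step decay via $\tilde\ell_t(i)\leq 1/\Xi_t(i)$, and convert the coefficients $c_{a,b}w_t(b)/M_t(a)$ into $p_t^B(b\mid a)$ so that $L$ appears. The one genuine difference is your use of the fact that $\tilde\ell_t$ has at most one nonzero coordinate (since $a_t+b_t\leq 1$ and $A_t\neq B_t$), so that only one summand of $M_t(a)$ can shrink at a time and the loss is a single term $\eta\,p_t^B(b\mid a)/\Xi_t(b)\leq\eta L$. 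The paper instead discounts every weight simultaneously and bounds $\sum_{b\neq a}p_t^B(b\mid a)/\Xi_t(b)$ by $L$; with $L$ defined as a maximum of individual ratios that sum is a priori only $\leq (K-1)L$, so your case analysis is actually the tighter (and, strictly speaking, more accurate) way to land exactly on the constant in \eqref{eq:needed1} — the paper's version would cost an extra factor of $K$ in the $\eta L$ term, which is harmless for the final rates but worth being aware of. Both arguments correctly use only $\ell_t\in[0,1]$ and the explicit formulas, not \eqref{eq:needed2}--\eqref{eq:needed3}.
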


\begin{proof}
By definition, we know that 
\[
p_{t + 1}^A( a) = \sum_{b \not= a} P_{t + 1} ((a, b)) = \frac{\sum_{b \not= a} c_{a, b} w_{t + 1}(a) w_{t + 1}(b) }{Z_{t + 1}} \,,
\]
where $c_{a, b} \in \left\{ \epsilon, 1 - \epsilon, \frac{1}{2} \right\}$ as given in Lemma~\ref{lem:variancecontrol}. 

Recall that $\tilde{\ell}_t$ is non-negative, thus $w_t(a) = \exp\left( - \eta \sum_{s <t} \tilde{\ell}_s(a) \right) $ is non-increasing at every iteration. Which implies that $Z_{t + 1} \leq Z_t$. 
Thus,
\begin{eqnarray*}
p_{t + 1}^A( a) &\geq& \frac{\sum_{b \not= a} c_{a, b} w_{t + 1}(a) w_{t + 1}(b) }{Z_{t}}
\\
&\geq&\frac{\sum_{b \not= a}  e^{- \eta \left( \frac{1}{\Xi_t(a)} +\frac{1}{\Xi_t(b)} \right) }c_{a, b} w_{t }(a) w_{t }(b) }{Z_{t}}
\\
& = &e^{- \frac{\eta}{\Xi_t(a)} } \sum_{b \not= a}p_{t }^A(a) \sum_{b \not= a}p_{t}^B(b | a)e^{- \frac{\eta}{\Xi_t(b)} }
\\
&\geq& e^{- \frac{\eta}{\Xi_t(a)} } p_{t }^A(a) \sum_{b \not= a}p_{t}^B(b | a)\left( 1 - \frac{\eta}{\Xi_t(b)} \right)
\\
&\geq& (1 - \eta L) e^{- \frac{\eta}{\Xi_t(a)} } p_{t }^A(a)
%
%
\end{eqnarray*}
\end{proof}

\subsubsection{Assumptions on $\epsilon_t$} \label{sec:epsilonassumption}
Next we prove that we weights do not change too rapidly. In particular the following lemma easily implies that \eqref{eq:needed2} and \eqref{eq:needed3} holds true.

\begin{lemma} \label{lem:weightsnotmoving}
Assume that $\eta \leq \frac{1}{8 L \sqrt{K T}}$. Then one has
\begin{eqnarray*}
\frac{w_t(1)}{w_{\tau(t)}(1)} \geq \frac{1}{2} \quad \text{and} \quad \frac{w_t(2)}{w_{\tau(t)}(2)} \geq \frac{1}{2} 
\end{eqnarray*}
\end{lemma}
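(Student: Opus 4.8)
The plan is to bound the multiplicative decrease of the weights $w_t(i)$ over a single block (the interval between two consecutive fixed communication rounds), whose length is $\lfloor \sqrt{T/K}\rfloor$. Recall that $w_t(i) = \exp\left(-\eta \sum_{s<t}\tilde\ell_s(i)\right)$, so the ratio $w_t(i)/w_{\tau(t)}(i) = \exp\left(-\eta \sum_{s=\tau(t)}^{t-1}\tilde\ell_s(i)\right)$, and it suffices to show that the exponent is at least $-\log 2$ for $i\in\{1,2\}$, i.e. that $\eta\sum_{s=\tau(t)}^{t-1}\tilde\ell_s(i)\le \log 2$. Since the block length is at most $\sqrt{T/K}$, it is enough to bound each summand $\tilde\ell_s(i)$ by something like $L\sqrt{K}$ up to constants, and then invoke the hypothesis $\eta\le \frac{1}{8L\sqrt{KT}}$: indeed $\eta \cdot \sqrt{T/K}\cdot (8L\sqrt{K}) = \frac{8L\sqrt{T/K}\sqrt{K}}{8L\sqrt{KT}} = 1 \ge \log 2$, roughly. (Constants to be checked; the factor $8$ in the hypothesis is presumably tuned for exactly this.)

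So the heart of the matter is an \emph{a priori} pointwise bound on the loss estimator $\tilde\ell_s(i)$ from \eqref{eq:defestimator}. From that formula, $\tilde\ell_s(i)$ is nonzero only if $i\in\{A_s,B_s\}$, and its value is at most $\max\left(\frac{1}{\Xi_s(A_s)},\frac{1}{\Xi_s(B_s)}\right)\le \frac{1}{\min_j \Xi_s(j)}$. Using the definition \eqref{eq:defXi}, for $j\neq 1$ we have $\Xi_s(j) = \frac{w_s(j)}{2K w_{\tau(s)}(2)}$ and $\Xi_s(1) = \frac{1}{8K}$, so a lower bound on $\Xi_s(j)$ requires a lower bound on $w_s(j)/w_{\tau(s)}(2)$. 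Here one has to be careful about circularity: the bound we want to prove is precisely a statement about how $w_s(j)$ compares to its value at the last fixed communication round. I would resolve this by an \textbf{induction on $s$ within the block}: assuming inductively that $w_{s'}(i)/w_{\tau(s')}(i)\ge 1/2$ for $i\in\{1,2\}$ and all $s'<s$ in the current block (and that the ordering puts the top-two weights at arms $1,2$), derive the pointwise estimate on $\tilde\ell_{s-1}$, hence a bound on a single multiplicative step, and close the induction. Actually the cleanest route may be to prove the stronger inductive statement that $\Xi_s(A_s)\ge \frac{1}{4K^2}p_s^A(A_s)$ and the analogous lower bounds are of order $1/\poly(K)$, which is what \eqref{eq:needed01} asserts; combined with $L = \max \frac{p_t^B(b|a)}{\Xi_t(b)}$ being the relevant scale, each increment $\eta\tilde\ell_s(i)$ is $O(\eta L \sqrt{K})$ or so per step, and there are $O(\sqrt{T/K})$ steps.

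The main obstacle I expect is exactly this potential circularity and bookkeeping of constants: $\Xi_s$ depends on $w_{\tau(s)}(2)$ and on $w_s(\cdot)$, the bound on $w_s(\cdot)$ is what we are proving, and $L$ itself (via Lemma \ref{lem:ptBformulas}, where $p_t^B(b|a)\propto w_t(b)$ or $\epsilon_t w_t(1)$) is controlled only once we know the weights haven't moved much — so the argument must be set up as a simultaneous induction that bounds $w_s(i)/w_{\tau(s)}(i)$, the estimator $\tilde\ell_s$, and implicitly $L$ over the block, using $\eta\le \frac{1}{8L\sqrt{KT}}$ and block length $\le\sqrt{T/K}$ to absorb everything. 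Once the induction is correctly stated, each individual step is a routine estimate: bound $\tilde\ell_s$ by $O(K^2)$ (or whatever the actual polynomial in $K$ is, coming from $1/\min_j\Xi_s(j)$), sum over $\le\sqrt{T/K}$ steps, multiply by $\eta$, and check the result is below $\log 2$. I would also note that once $w_t(1)/w_{\tau(t)}(1)\ge 1/2$ and $w_t(2)/w_{\tau(t)}(2)\ge 1/2$ hold, together with $w_t(i)\le w_{\tau(t)}(i)$ for all $i$ (monotonicity, since $\tilde\ell\ge 0$) and the fact that at the fixed communication round arms were reordered so $w_{\tau(t)}(1)\ge w_{\tau(t)}(i)$ for all $i$, one immediately gets $w_t(1)\ge \frac12 w_{\tau(t)}(1)\ge \frac12 w_{\tau(t)}(i)\ge \frac12 w_t(i)$, which is \eqref{eq:needed2}; and $\epsilon_t = \frac{w_{\tau(t)}(2)}{2w_{\tau(t)}(1)}$ lies in the window \eqref{eq:needed3} because $\frac{\max_{i\neq 1}w_t(i)}{w_t(1)}$ is pinned between $\frac{w_{\tau(t)}(2)/2}{w_{\tau(t)}(1)}$ and $\frac{w_{\tau(t)}(2)}{w_{\tau(t)}(1)/2}$, i.e. within a factor $2$ of $2\epsilon_t$ on either side — so this last deduction is the easy part.
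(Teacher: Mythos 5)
Your plan is essentially the paper's proof: an induction within each block to resolve the circularity you identify, a pointwise bound on the per-step increment $\eta\tilde{\ell}_s(i)$ for $i\in\{1,2\}$, and a sum over the at most $\sqrt{T/K}$ rounds between fixed communication rounds, absorbed by the hypothesis on $\eta$. The paper routes the per-step bound through $L$: from $\tilde{\ell}_s(i)\le 1/\Xi_s(i)\le L/p_s^B(i\,|\,a)$ it suffices to show, via the induction hypothesis and Lemma \ref{lem:ptBformulas}, that $p_s^B(2\,|\,1)$ and $p_s^B(1\,|\,2)$ are both at least $1/(4K)$, giving a per-step factor of at most $\exp(-4\eta L K)$ and hence $\exp(-4\eta L\sqrt{TK})\ge e^{-1/2}\ge 1/2$ over a block; your alternative of lower-bounding $\Xi_s(1)=1/(8K)$ and $\Xi_s(2)\ge 1/(4K)$ directly from \eqref{eq:defXi} and the induction hypothesis is an equivalent route. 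The one step that would fail as literally written is bounding $\tilde{\ell}_s(i)$ by $1/\min_j\Xi_s(j)$: for a persistently bad arm $j$ the weight $w_s(j)$, hence $\Xi_s(j)$, can be exponentially small in $T$, and the lemma gives no control over it. You need the coordinate-wise bound $\tilde{\ell}_s(i)\le 1/\Xi_s(i)$ (immediate from \eqref{eq:defestimator}, since the $A_s$-term only charges coordinate $A_s$ and the $B_s$-term only coordinate $B_s$), applied only to $i\in\{1,2\}$, which is precisely where the induction hypothesis supplies the lower bound on $\Xi_s(i)$. With that fix the arithmetic closes (the per-step cost is $O(\eta K)$, not $O(\eta L\sqrt{K})$ as you tentatively wrote, and the hypothesis $\eta\le\frac{1}{8L\sqrt{KT}}$ together with $L\ge 2$ from \eqref{eq:needed02} yields a total multiplicative loss of at most $e^{-1/2}$), and your closing deduction of \eqref{eq:needed2} and \eqref{eq:needed3} matches what the paper does in Section \ref{sec:epsilonassumption}.
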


\begin{proof}
We are going to prove this by induction.  Note first that, by definition of $L$ (see Lemma \ref{lem:reg1}) one has for any $a \neq b$, $w_{t+1}(b) \geq w_{t}(b) \exp\left(- \eta L \frac{1}{p_t^B(b | a)} \right)$. We will now show (using the induction hypothesis) that $p_t^B(2 | 1) \geq 1/(4K)$ and $p_t^B(1 | 2) \geq 1/(4K)$ which easily concludes the proof. Indeed the multiplicative change on say $w_t(1)$ compared to $w_{\tau(t)}(1)$ is at most $\exp\left(-(t-\tau(t)) 4 \eta L K \right)$, and $t- \tau(t) \leq \sqrt{T/K}$ by definition of the fixed communication rounds.
\newline

Using Lemma \ref{lem:ptBformulas}, the induction hypothesis, and the definition of the arm ordering (recall Section \ref{sec:Xi}) we have:
\[
p_t^B(2|1) = \frac{w_t(2)}{\sum_{b \neq 1} w_t(b)} \geq \frac{\frac{1}2 w_{\tau(t)}(2)}{\sum_{b \neq 1} w_{\tau(t)}(b)} \geq \frac{1}{2 K} \,.
\]
Similarly we get (recall the definition of $\epsilon_t$ \eqref{eq:defepsilon}):
\begin{equation} \label{eq:reused}
p_t^B(1|2) = \frac{\epsilon_t w_t(1)}{\epsilon_t w_t(1) + \sum_{b \neq 1,2} w_t(b)} \geq \frac{\frac{1}{2} \epsilon_t w_{\tau(t)}(1)}{\epsilon_t w_{\tau(t)}(1) + \sum_{b \neq 1,2} w_{\tau(t)}(b)} \geq \frac{1}{4 K} \,,
\end{equation}
which concludes the proof.
\end{proof}

\subsubsection{Assumptions on $\Xi_t(a)$} \label{sec:Xiassumption}
Finally we conclude the proof by proving the assumptions \eqref{eq:needed01} and \eqref{eq:needed02} on $\Xi_i(t)$, as well as showing that $L=O(K)$. We start with the following result, which directly shows \eqref{eq:needed02} as well as $L \leq 8 K$.

\begin{lemma}\label{lem:var1}
For any $a \neq 1$ we have $p_t^B(1|a) \geq \frac{1}{4K}$. Moreover for any $a$ and $b \not\in \{1,a\}$ we have:
\[
p_t^B(b| a) \in \left[ \frac{w_t(b)}{K w_{\tau(t)}(2)},  \frac{4 w_t(b)}{w_{\tau(t)}(2)}\right] \,.
\]
\end{lemma}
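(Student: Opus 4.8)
The plan is to prove both claims directly from the formulas for $p_t^B(b|a)$ in Lemma \ref{lem:ptBformulas}, combined with the slow-movement estimate of Lemma \ref{lem:weightsnotmoving} and the definition \eqref{eq:defepsilon} of $\epsilon_t$. The whole computation amounts to inserting upper and lower bounds on the ratios $w_t(i)/w_{\tau(t)}(i)$ into the explicit proportionality constants. Throughout, I will use that $w_s$ is non-increasing in $s$ (since $\tilde\ell_s \geq 0$), so $w_t(i) \leq w_{\tau(t)}(i)$ for every arm $i$, together with the two reverse bounds $w_t(1) \geq \tfrac12 w_{\tau(t)}(1)$ and $w_t(2) \geq \tfrac12 w_{\tau(t)}(2)$ supplied by Lemma \ref{lem:weightsnotmoving}. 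Note also that, by the arm ordering fixed at the last fixed communication round (Section \ref{sec:Xi}), one has $w_{\tau(t)}(b) \leq w_{\tau(t)}(2)$ for every $b \neq 1$.

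First I would handle $p_t^B(1|a)$ for $a \neq 1$. By Lemma \ref{lem:ptBformulas} this equals $\frac{\epsilon_t w_t(1)}{\epsilon_t w_t(1) + \frac12 \sum_{b \neq 1,a} w_t(b)}$. For the lower bound, bound the numerator below using $w_t(1) \geq \tfrac12 w_{\tau(t)}(1)$ and the denominator above using $w_t(1) \leq w_{\tau(t)}(1)$ and $w_t(b) \leq w_{\tau(t)}(b) \leq w_{\tau(t)}(2)$ for the at most $K$ terms in the sum; plugging in $\epsilon_t = w_{\tau(t)}(2)/(2 w_{\tau(t)}(1))$ makes both numerator and denominator scalar multiples of $w_{\tau(t)}(2)$, and the ratio is at least $\tfrac{1}{4K}$ after the arithmetic. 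This is essentially the same computation already carried out in \eqref{eq:reused}, so it is routine.

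Next, for $b \notin \{1,a\}$, Lemma \ref{lem:ptBformulas} gives $p_t^B(b|a) = \frac{\frac12 w_t(b)}{\epsilon_t w_t(1) + \frac12 \sum_{b' \neq 1,a} w_t(b')}$. The denominator $D_t := \epsilon_t w_t(1) + \frac12 \sum_{b' \neq 1,a} w_t(b')$ is the same for all such $b$, so it suffices to show $D_t \in [\tfrac18 w_{\tau(t)}(2), 2 w_{\tau(t)}(2)]$, from which the stated two-sided bound $p_t^B(b|a) \in [w_t(b)/(Kw_{\tau(t)}(2)), 4w_t(b)/w_{\tau(t)}(2)]$ follows (adjusting constants; the $K$ in the lower bound comes from $D_t \leq \tfrac12 \cdot K \cdot w_{\tau(t)}(2) + \epsilon_t w_{\tau(t)}(1) \le$ a constant times $K w_{\tau(t)}(2)$). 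For the upper bound on $D_t$: $\epsilon_t w_t(1) \leq \epsilon_t w_{\tau(t)}(1) = \tfrac12 w_{\tau(t)}(2)$, and each $w_t(b') \leq w_{\tau(t)}(2)$, giving $D_t \leq \tfrac12 w_{\tau(t)}(2) + \tfrac{K}{2} w_{\tau(t)}(2)$, which accounts for the $1/(Kw_{\tau(t)}(2))$ scaling in the claimed lower bound on $p_t^B(b|a)$. For the lower bound on $D_t$: just keep the first term, $D_t \geq \epsilon_t w_t(1) \geq \epsilon_t \cdot \tfrac12 w_{\tau(t)}(1) = \tfrac14 w_{\tau(t)}(2)$, which after the $\tfrac12 w_t(b)$ in the numerator yields $p_t^B(b|a) \leq 2 w_t(b)/w_{\tau(t)}(2) \leq 4 w_t(b)/w_{\tau(t)}(2)$.

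I do not expect a genuine obstacle here; the only mild care needed is bookkeeping the constants so that they land inside the advertised interval $[w_t(b)/(Kw_{\tau(t)}(2)), 4w_t(b)/w_{\tau(t)}(2)]$, and making sure the case $a=1$ of the second claim (where $p_t^B(b|1) = w_t(b)/\sum_{b'\neq 1}w_t(b')$) is also covered — there the denominator lies in $[\tfrac12 w_{\tau(t)}(2), K w_{\tau(t)}(2)]$ by the same slow-movement and ordering facts, which again gives the claimed interval. Once this lemma is in hand, \eqref{eq:needed02} follows since $\Xi_t(b) = w_t(b)/(2Kw_{\tau(t)}(2)) \leq \tfrac12 p_t^B(b|a)$ for $b \neq 1$ directly from the lower bound, and $L = \max_{a\neq b} p_t^B(b|a)/\Xi_t(b) \leq 8K$ from the matching upper bound (with the $b=1$ case handled using $\Xi_t(1) = 1/(8K)$ and $p_t^B(1|a) \leq 1$).
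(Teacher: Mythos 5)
Your proposal is correct and follows essentially the same route as the paper: both plug the explicit formulas of Lemma \ref{lem:ptBformulas} into the two-sided bounds, using monotonicity of the weights, the ordering $w_{\tau(t)}(b) \leq w_{\tau(t)}(2)$ for $b \neq 1$, the identity $\epsilon_t w_{\tau(t)}(1) = \tfrac12 w_{\tau(t)}(2)$, and Lemma \ref{lem:weightsnotmoving}, with the first claim reduced to the computation in \eqref{eq:reused} and the second split into the cases $a=1$ and $a\neq 1$. The constants all check out, so no changes are needed.
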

\begin{proof}
The first inequality is proved exactly as \eqref{eq:reused}. For the second statement we distinguish two cases, whether $a=1$ or not.

\paragraph{Case 1: $a=1$.} By Lemma \ref{lem:ptBformulas} we have:
\[
p_t^B(b|1) = \frac{w_t(b)}{\sum_{b' \neq 1} w_t(b')} \geq \frac{w_t(b)}{K w_{\tau(t)}(2)} \,.
\]
For the upper bound we use that $\sum_{b' \neq 1} w_t(b') \geq w_t(2) \geq \frac{1}{2} w_{\tau(t)}(2)$ by Lemma \ref{lem:weightsnotmoving}.

\paragraph{Case 2: $a \neq 1$.} By Lemma \ref{lem:ptBformulas} we have (for $b \neq 1$)
\[
p_t^B(b | a) = \frac{w_t(b)}{\epsilon_t w_t(1) + \sum_{b' \neq 1, a} w_t(b')} \geq \frac{w_t(b)}{K w_{\tau(t)}(2)} \,.
\]
For the upper bound we use that, by Lemma \ref{lem:weightsnotmoving}, $\epsilon_t w_t(1) + \sum_{b' \neq 1} w_t(b') \geq \frac{1}{2} \epsilon_t w_{\tau(t)}(1) = \frac{1}{4} w_{\tau(t)}(2)$.

\end{proof}

\begin{lemma}
For every action $a \in [K]$, $\Xi_t(a)  \geq \frac{1}{4 K^2} p_t^A(a)$ (that is \eqref{eq:needed01} holds true). 
\end{lemma}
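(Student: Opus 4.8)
The plan is to split on whether $a=1$ (the designated dominating arm) or $a\neq 1$, since $\Xi_t$ is defined by the two different formulas in \eqref{eq:defXi} in those cases. For $a=1$ the bound is immediate: $\Xi_t(1)=\frac{1}{8K}$ while $p_t^A(1)\le 1$ trivially, so $\frac{1}{4K^2}p_t^A(1)\le \frac{1}{4K^2}\le \frac{1}{8K}=\Xi_t(1)$ as soon as $K\ge 2$ (which we may assume, as the arms have been reordered so that arms $1$ and $2$ are well defined). So the work is entirely in the case $a\neq 1$, where $\Xi_t(a)=\frac{w_t(a)}{2Kw_{\tau(t)}(2)}$ and it suffices to show $p_t^A(a)\le \frac{2Kw_t(a)}{w_{\tau(t)}(2)}$.

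The first step is to write $p_t^A(a)$ explicitly from the assignment rule of Lemma~\ref{lem:variancecontrol}: for $a\neq 1$ we have $P_t((a,1))=\epsilon_t Q_t(\{a,1\})$ and $P_t((a,b))=\tfrac12 Q_t(\{a,b\})$ for $b\notin\{1,a\}$, and using $Q_t(\{a,b\})=\frac{2w_t(a)w_t(b)}{Z_t}$ this gives
\[
p_t^A(a)=\sum_{b\neq a}P_t((a,b))=\frac{w_t(a)}{Z_t}\Bigl(2\epsilon_t w_t(1)+\sum_{b\notin\{1,a\}}w_t(b)\Bigr)\,.
\]
The second step bounds the bracket. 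Since $\tilde\ell$ is non-negative the weights are non-increasing, so $w_t(b)\le w_{\tau(t)}(b)$; and by the arm-ordering convention of Section~\ref{sec:Xi}, $w_{\tau(t)}(b)\le w_{\tau(t)}(2)$ for every $b\neq 1$. Summing over the at most $K-2$ indices $b\notin\{1,a\}$ yields $\sum_{b\notin\{1,a\}}w_t(b)\le (K-2)\,w_{\tau(t)}(2)$, and the definition \eqref{eq:defepsilon} of $\epsilon_t$ together with monotonicity gives $2\epsilon_t w_t(1)\le 2\epsilon_t w_{\tau(t)}(1)=w_{\tau(t)}(2)$; hence the bracket is at most $(K-1)\,w_{\tau(t)}(2)$. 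The third step is a lower bound on $Z_t$: keeping only the two ordered pairs $(1,2)$ and $(2,1)$ gives $Z_t\ge 2w_t(1)w_t(2)$, and Lemma~\ref{lem:weightsnotmoving} then yields $Z_t\ge 2\cdot\tfrac12 w_{\tau(t)}(1)\cdot\tfrac12 w_{\tau(t)}(2)=\tfrac12 w_{\tau(t)}(1)w_{\tau(t)}(2)$.

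Combining the three steps,
\[
p_t^A(a)\le \frac{w_t(a)\,(K-1)\,w_{\tau(t)}(2)}{\tfrac12\,w_{\tau(t)}(1)\,w_{\tau(t)}(2)}=\frac{2(K-1)\,w_t(a)}{w_{\tau(t)}(1)}\le \frac{2K\,w_t(a)}{w_{\tau(t)}(2)}=4K^2\,\Xi_t(a)\,,
\]
where the penultimate inequality uses $w_{\tau(t)}(1)\ge w_{\tau(t)}(2)$ (arm $1$ has the largest weight at round $\tau(t)$), which is exactly \eqref{eq:needed01}. There is no genuine conceptual difficulty here; the only thing to be careful about is the constants, and in particular one must bound each individual weight $w_t(b)$ for $b\neq 1$ by $w_{\tau(t)}(2)$ rather than crudely bounding $\sum_{b\notin\{1,a\}}w_t(b)$ by $Kw_t(1)$ — the latter would lose a factor $w_{\tau(t)}(1)/w_{\tau(t)}(2)$, which can be arbitrarily large. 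Once that observation is in place, the three estimates above plus Lemma~\ref{lem:weightsnotmoving} close the argument.
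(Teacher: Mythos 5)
Your proof is correct and follows the same route as the paper's: the paper's argument is exactly the one-line chain $p_t^A(a)=\sum_{b\neq a}P_t((a,b))\leq \frac{2Kw_t(1)w_t(a)}{w_t(1)w_t(2)}\leq \frac{2Kw_t(a)}{w_{\tau(t)}(2)}=4K^2\Xi_t(a)$ for $a\neq 1,2$ (with $a=1,2$ dismissed as trivial), and you have simply filled in the intermediate estimates via Lemma~\ref{lem:weightsnotmoving} and the definitions of $P_t$, $\epsilon_t$, and $\Xi_t$. One small quibble: your closing remark that bounding the bracket crudely by $Kw_t(1)$ would be fatally lossy is not accurate, since that factor of $w_t(1)$ cancels against the $w_t(1)$ in the lower bound $Z_t\geq 2w_t(1)w_t(2)$ (which is essentially how the paper's one-liner proceeds) --- but this side remark does not affect the validity of your argument.
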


\begin{proof}
For $a = 1, 2$ this claim is trivially true. For $a \not= 1, 2$, we have that 
\begin{eqnarray*}
p_t^A(a) &=& \sum_{b \not= a} P_t((a, b)) \leq \frac{2 K w_t(1) w_t(a)}{w_t(1) w_t(2)} \leq \frac{2 K w_t(a)}{w_{\tau(t)}(2)} = 4K^2 \Xi_t(a) \,.
\end{eqnarray*}
\end{proof}

\subsection{Proof summary} \label{sec:summary}
We detail here how to put together the previous sections to obtain Theorem \ref{thm:withcom}. First of all, as indicated by Lemma \ref{lem:Bobisfine}, we know that Alice can compute $\Xi_{t}(A_t)$ and Bob can compute $p_t^B$ (and $\Xi_t(a)$ for $a \neq A_t$) between each communication rounds, so the proposed strategy can indeed be implemented (moreover the assumptions needed on $\Xi_t$ are verified in Section \ref{sec:Xiassumption}). When Alice and Bob Communicates, Alice send Bob $ \sum_{s \in (\tau_c(t), t)}\frac{\ell_s(A_s) }{\Xi_s(A_s)} e_{A_s} a_s $ and Bob sends Alice $\sum_{s \in (\tau_c(t), t)} \frac{\ell_s(B_s)}{\Xi_s(B_s)} e_{B_s}b_s $, each requiring $O(K \log(T))$ bits\footnote{To be more precise Alice and Bob communicates an approximation to these numbers at the $1/\mathrm{poly}(T)$ scale. This does not have any effect on the bounds, so we ignore this minor point.}. Alice also communicates her new action to Bob.
\newline

Since we can take $L=8K$ (see Section \ref{sec:Xiassumption}) we get from \eqref{eq:numberofcomsteps} that the expected number of communication rounds is less than $5 K^3 \eta T$. We also have that the number of fixed communication rounds is less than $\sqrt{T K}$.
\newline

Next we invoke Lemma~\ref{lem:reg1} with Lemma~\ref{lem:variancecontrol} (note that the assumptions in the latter lemma are proved in Section \ref{sec:epsilonassumption}) to obtain that the regret of Alice and Bob is bounded from above by:
\[
\frac{\log(K)}{\eta} + 2^{15} K^3 \eta T \,,
\]
where we have the constraint that $\eta \leq \frac{1}{2^6 \sqrt{K^3 T}}$ from Lemma \ref{lem:weightsnotmoving}.
\newline

Finally, taking $\eta = \frac{1}{2^{7} \sqrt{K^3 T}}$ one obtains a regret of $2^9 K^{3/2} \log(K) \sqrt{T}$ and a total number of communication rounds of $K^{3/2} \sqrt{T}$.

\subsection{Removing the shared randomness} \label{sec:PRG}
For $\ell \in \{0,1\}^{K T}$ and $s \in \{0,1\}^T$, let us denote $\cR_{\ell}(s)$ for the regret suffered by Alice and Bob against the loss sequence $\ell$ when using the bit string $s$ as their shared randomness (recall that the strategy described above only needs one shared random bit per step, to decide who will record their observed loss). More precisely $\cR_{\ell}(s)$ denotes the {\em expected} regret, where the expectation is taken with respect to everything except the shared random bit string $s$. Our proof so far showed that:
\[
\forall \ell \in \{0,1\}^{K T}, \ \E_{s \sim \mathrm{unif}(\{0,1\}^T)} \cR_{\ell}(s) = O_K(\sqrt{T}) \,.
\]
If the shared bit string $s$ was of smaller length, say $O(\sqrt{T})$, then one could remove the shared randomness assumption since $s$ could simply be sampled by say Alice, and then communicated to Bob. Viewing random bits as a resource is the appanage of the theory of pseudorandom generators (see e.g., \citep{Gol10}). Instantiated in our framework, we would like to use a much shorter bit string $s' \in \{0,1\}^{O(\log(T))}$, together with an appropriate map $G: \{0,1\}^{O(\log(T))} \rightarrow \{0,1\}^T$, such that for all $\ell \in \{0,1\}^{K T}$ and all $t \in [0,T] \cap \N$ one has 
\begin{equation} \label{eq:PRG}
 \P_{s' \sim \mathrm{unif}(\{0,1\}^{O(\log(T))}} (\cR_{\ell}(G(s')) \in [t, t+1]) \leq \frac{1}{T}+\P_{s \sim \mathrm{unif}(\{0,1\}^T)} (\cR_{\ell}(s) \in [t,t+1]) \,.
\end{equation}
Note that the above condition directly implies that replacing a truly random $T$-bit string $s$ by $G(s')$ in the algorithm only cost an additive constant $2$ in the regret. Moreover one can assume access to $G(s')$ for both players without assuming shared randomness. Indeed $G$ is a fixed map built once and for all (more on that below), and $s'$ is small enough that it can be communicated at the start of the game. Thus proving \eqref{eq:PRG} is enough to remove the shared randomness assumption (note that we assume here that the losses are taking value in $\{0,1\}$ instead of $[0,1]$, but it is well-known how to reduce the latter to the former).

The map $G$ is usually referred to as a PRG (pseudorandom generator) that {\em fools} the boolean test functions $s \mapsto \ds1\{\cR_{\ell}(s) \in [t,t+1]\}$, for $\ell \in \{0,1\}^{K T}$ and $t \in [0,T] \cap \N$. It is well-known that one can fool $N$ test functions, up to a uniform error of $\epsilon$ in the probabilities, using $s'$ of length only $O(\log\log(N) + \log(1/\epsilon))$ (note that we take $\epsilon = 1/T$ and $N = 2^{KT}$, so we indeed obtain that $s'$ is of length $O(\log(T))$). In fact with a simple Hoeffding's inequality one can show that a random map $G$ works with high probability and in expectation (see e.g., [Exercise 1.3, \citep{Gol10}]). Note that the random map $G$ can be known to the oblivious adversary, so we do not need to communicate $G$ during the game.

\subsection{From communication to collision} \label{sec:reduction}

Finally we describe in Algorithm~\ref{alg:com} the reduction from Algorithm~\ref{alg:sampling} to an algorithm that uses only collision information instead of explicit communication (note that there is an overhead of $O(K)$, namely to communicate one bit there will be $O(K)$ collisions in expectation). This (together with Section \ref{sec:PRG}) completes the proof of Theorem~\ref{thm:with}. Indeed, in Algorithm~\ref{alg:sampling} there is no collision at all between Alice and Bob. Thus, when Bob (Alice) finds a collision, he (she) knows that Alice (Bob) wants to communicate. The expected regret caused by this protocol is $O(K)$ times the expected number of bits of communication. We obtain the claimed dependency on $K$ in Theorem \ref{thm:with} by doing a slightly different optimization on $\eta$ from the one in Section \ref{sec:summary}. 

\begin{algorithm}
    \caption{Communication to Collision}
        \label{alg:com}
    \begin{algorithmic}[1]
    \Require Alice wants to send a bit $s \in \{1, 2\}$ to Bob:
    \While{No Collision}
     \State Alice pick an arm uniformly at random from $K$.
     \EndWhile
     \State Alice pick arm $s$. 
     \State Bob pick arm $1$. 
     \State If no collision, then Bob knows that the bit is $2$, otherwise the bit is $1$. 
    \end{algorithmic}
    \end{algorithm}

\section{Proof of Theorem \ref{thm:without}} \label{sec:without}
Our strategy relies on the notion of swap regret, which we recall in Section \ref{sec:swap}. We then explain both Alice (Section \ref{sec:Alicenocollision}) and Bob's algorithm (Section \ref{sec:Bobnocollision}), and conclude the section with the proof of Theorem \ref{thm:without} (Section \ref{sec:proofnocollision}).

We denote $\hat{\ell}_t(i) = \max(\ell_t(i), \ds1\{A_t = B_t\})$, that is the effective loss functions for the players at round $t$. We also partition $[T]$ into $R $ blocks $\mathcal{B}_1, \cdots, \mathcal{B}_{R}$, where each $\mathcal{B}_r$ is given as: 
\[ 
\mathcal{B}_r= \left\{ \frac{T}{R}(r - 1) + 1,\frac{T}{R}(r - 1) + 2, \cdots, \frac{T}{R}r \right\} \,.
\]

\subsection{Swap regret} \label{sec:swap}
The swap regret of a single-player multi-armed bandit strategy is defined as follows, \citep{Sto05, BM07}, 
\[
\max_{\Phi : [K] \rightarrow [K]} \E \sum_{t=1}^T  \big( \ell_t(A_t) - \ell_t(\Phi(A_t)) \big) \,.
\]
We will not need the full power of swap regret, and in fact it is enough for us to compete against strategies of the form: $\Phi_{a,b}(i) = a$ if $i \neq b$ and $\Phi_{a,b}(b) = b$. However for sake of clarity of exposition we stick with the general swap regret. 
\begin{theorem}[\cite{Sto05}] \label{thm:swap}
There exists a single-player multi-armed bandit strategy with swap regret $O(K \sqrt{T \log(K)})$.
\end{theorem}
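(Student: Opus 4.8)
The plan is to use the Blum--Mansour reduction from swap regret to external regret, instantiated with bandit-feedback exponential weights as the base learners. Concretely, I would maintain $K$ copies of the exponential-weights algorithm over $[K]$, one copy indexed by each ``source'' action $i \in [K]$; write $q_t^i \in \Delta_{[K]}$ for the distribution produced by copy $i$ at round $t$. At round $t$ the master forms the row-stochastic matrix whose $i$-th row is $q_t^i$, lets $p_t$ be a stationary distribution of this matrix (which always exists for a row-stochastic matrix, and is computed by solving a linear system), plays $A_t \sim p_t$, observes $\ell_t(A_t)$, and builds the usual importance-weighted estimator $\tilde{\ell}_t(j) = \frac{\ell_t(A_t)}{p_t(A_t)}\ds1\{j = A_t\}$. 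The key design choice is that copy $i$ is updated with the loss vector $p_t(i)\,\tilde{\ell}_t$ (so $w_{t+1}^i(j) = w_t^i(j)\exp(-\eta\, p_t(i)\tilde{\ell}_t(j))$); the factor $p_t(i)$ is exactly what makes the aggregated variance across the $K$ copies collapse from $O(K^2)$ to $O(K)$.

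Then I would carry out three steps. (1) A one-line ``unbounded nonnegative losses'' exponential-weights bound: for any copy $i$, using $e^{-x} \le 1 - x + x^2/2$ for all $x \ge 0$, one gets $\sum_t \langle q_t^i, g_t^i\rangle - \min_j \sum_t g_t^i(j) \le \frac{\log K}{\eta} + \frac{\eta}{2}\sum_t \sum_j q_t^i(j)\, g_t^i(j)^2$ with $g_t^i := p_t(i)\tilde{\ell}_t$, no boundedness of the estimator needed. (2) Sum this over $i \in [K]$ and use stationarity: $\sum_i p_t(i)\langle q_t^i,\tilde{\ell}_t\rangle = \langle p_t, \tilde{\ell}_t\rangle$, while for any $\Phi:[K]\to[K]$ one has $\sum_i \min_j \sum_t g_t^i(j) \ge \sum_t \sum_i p_t(i)\tilde{\ell}_t(\Phi(i))$. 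Taking expectations and using $\E[\tilde{\ell}_t \mid \cF_{t-1}] = \ell_t$ together with the $\cF_{t-1}$-measurability of $p_t$ and the $q_t^i$, the left-hand side becomes exactly $\E\sum_t\big(\ell_t(A_t) - \ell_t(\Phi(A_t))\big)$. (3) Bound the aggregated variance: $\E\big[\sum_j q_t^i(j) g_t^i(j)^2 \mid \cF_{t-1}\big] \le p_t(i)^2 \sum_a \frac{q_t^i(a)}{p_t(a)}$, and since stationarity forces $p_t(a) = \sum_{i'} p_t(i')q_t^{i'}(a) \ge p_t(i) q_t^i(a)$ for every $i,a$, this is at most $p_t(i)^2 \cdot \frac{K}{p_t(i)} = K\, p_t(i)$; summing over $i$ gives $K$ per round, hence $\le KT$ over the horizon.

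Putting the pieces together yields swap regret at most $\frac{K\log K}{\eta} + \frac{\eta KT}{2}$, and optimizing $\eta = \sqrt{2\log K / T}$ gives the claimed $O(K\sqrt{T\log K})$. I expect the only genuine subtlety to be Step (3): one must simultaneously exploit that the per-copy losses are scaled by $p_t(i)$ and that the stationarity inequality $p_t(a)\ge p_t(i)q_t^i(a)$ provides precisely the cancellation needed, so that the $K$ copies together contribute only $O(K)$ rather than $O(K^2)$ to the variance term; the remaining ingredients (existence/computability of the stationary distribution, unbiasedness of $\tilde{\ell}_t$, the elementary exponential-weights estimate) are routine.
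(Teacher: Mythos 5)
The paper does not prove this statement---it imports it directly from the cited reference---so the only meaningful comparison is with the standard argument of \citet{Sto05} and \citet{BM07}, and your proposal is exactly that argument: the Blum--Mansour reduction with $K$ exponential-weights copies, the stationary distribution $p_t$ of the row-stochastic matrix $(q_t^i)_i$, the importance-weighted estimator, and crucially the scaling of copy $i$'s loss by $p_t(i)$ combined with the stationarity inequality $p_t(a) \geq p_t(i) q_t^i(a)$ to collapse the aggregate variance to $O(K)$ per round; all three steps check out and yield the claimed $O(K\sqrt{T\log K})$. The one slip is the direction of the inequality in Step (2): since $\min_j \sum_t g_t^i(j) \leq \sum_t g_t^i(\Phi(i))$, the correct statement is $\sum_i \min_j \sum_t g_t^i(j) \leq \sum_t \sum_i p_t(i)\tilde{\ell}_t(\Phi(i))$ (you wrote $\geq$), which is the direction your argument actually uses, so this is a typo rather than a gap.
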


\subsection{Alice's algorithm} \label{sec:Alicenocollision}
Alice will restrict her attention to actions $\{2,\hdots, K\}$. Moreover she will view a block $\cB_r$ as a single round (in other words she plays a constant action on a block $\cB_r$). The strategy she uses over those $R$ rounds and $K-1$ actions is the no-swap regret algorithm from Theorem \ref{thm:swap}. We thus obtain the following guarantee: 
\begin{lemma} \label{lem:Alicenocollision}
Alice satisfies $A_t \neq 1$ for all $t$, and:
\[
\max_{\Phi : \{2, \hdots, K\} \rightarrow \{2, \hdots, K\}} \E \sum_{t=1}^T \big( \hat{\ell}_t(A_t) - \hat{\ell}_t(\Phi(A_t)) \big) = O\left(K T \sqrt{\frac{\log(K)}{R}} \right) \,.
\]
\end{lemma}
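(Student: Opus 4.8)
The plan is to unpack the definitions and then directly invoke Theorem~\ref{thm:swap}. Alice's strategy is explicitly defined as running the no-swap-regret algorithm of Theorem~\ref{thm:swap} over a ``meta-game'' with $R$ rounds (one per block $\cB_r$) and $K-1$ actions (namely $\{2,\dots,K\}$). The first claim, $A_t \neq 1$ for all $t$, is immediate since Alice never considers arm $1$ by construction. For the regret claim, the key conceptual point is to pass from the per-round losses $\hat{\ell}_t$ to the per-block losses that the meta-game actually sees.

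First I would define, for each block $r \in [R]$ and each action $a \in \{2,\dots,K\}$, the averaged loss $\bar{\ell}_r(a) := \frac{R}{T}\sum_{t \in \cB_r} \hat{\ell}_t(a)$, which lies in $[0,1]$ since $\hat{\ell}_t \in [0,1]$ and $|\cB_r| = T/R$. Since Alice plays a fixed action on each block, her realized loss on block $r$ is $\frac{T}{R}\bar{\ell}_r(\tilde A_r)$ where $\tilde A_r$ is the meta-action she chose for block $r$. Applying Theorem~\ref{thm:swap} to the meta-game (with $T$ replaced by $R$ rounds and $K$ replaced by $K-1$ actions, and noting the losses $\bar{\ell}_r$ are adversarially generated given the oblivious adversary's fixed loss sequence and Bob's collision pattern), we get swap regret $O(K\sqrt{R\log K})$ against any $\Phi : \{2,\dots,K\} \to \{2,\dots,K\}$ in the meta-game:
\[
\E \sum_{r=1}^R \big( \bar{\ell}_r(\tilde A_r) - \bar{\ell}_r(\Phi(\tilde A_r)) \big) = O\left(K\sqrt{R\log K}\right) \,.
\]
Multiplying through by $T/R$ and using $\sum_{t \in \cB_r} \hat{\ell}_t(A_t) = \frac{T}{R}\bar{\ell}_r(\tilde A_r)$ (and similarly for $\Phi$ applied to the constant block action), the left side becomes exactly $\E\sum_{t=1}^T(\hat{\ell}_t(A_t) - \hat{\ell}_t(\Phi(A_t)))$ and the right side becomes $O\left(\frac{T}{R} K\sqrt{R\log K}\right) = O\left(KT\sqrt{\frac{\log K}{R}}\right)$, as claimed.

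The only subtlety — and the step I would be most careful about — is checking that Theorem~\ref{thm:swap} genuinely applies to the meta-game, i.e., that the meta-losses $\bar{\ell}_r$ form a legitimate loss sequence for a single-player bandit with the feedback Alice actually receives. The losses are bounded in $[0,1]$ (rescaling is harmless), and although $\hat{\ell}_t(A_t)$ depends on Bob's action through the collision indicator $\ds1\{A_t = B_t\}$, from Alice's perspective this is just part of the (now possibly adaptive, but that is fine for swap-regret bounds which hold against adaptive loss generation) loss she observes; she observes $\hat{\ell}_t(A_t)$ at each round, hence observes $\bar{\ell}_r(\tilde A_r)$ at the end of block $r$, which is exactly bandit feedback in the meta-game. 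So Theorem~\ref{thm:swap} applies verbatim and the lemma follows. There is essentially no real obstacle here; the lemma is a bookkeeping consequence of the construction plus Theorem~\ref{thm:swap}.
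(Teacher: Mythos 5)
Your proposal is correct and follows exactly the paper's (one-line) argument: apply Theorem~\ref{thm:swap} to the meta-game with $R$ rounds and rescale the loss range by $T/R$. The extra care you take about the meta-losses being adaptively generated (through the collision indicator depending on Bob) is a legitimate point the paper glosses over, and your resolution — that the swap-regret guarantee holds against adaptive loss sequences — is the right one.
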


\begin{proof}
One can apply Theorem \ref{thm:swap} with $T$ replaced by $R$, and the range of the losses is $T/R$ instead of $1$.
\end{proof}

\subsection{Bob's algorithm} \label{sec:Bobnocollision}
Bob restarts his algorithm at the beginning of each block $\cB_r$. During a block, Bob keeps an active set of arms, and plays anytime-Exp3 restricted to these arms. The active set is initialized to $\{1\}$. During the block, with probability $\sqrt{K R / T}$, Bob selects a random action outside of the current active set. If on such exploration rounds Bob observes a loss $<1$, then he adds the explored arm to his active set, and starts a new instance of anytime-Exp3 on this set of active arms. 

\begin{lemma} \label{lem:Bobnocollision}
Let $A_r$ be the action that Alice plays during block $\cB_r$. Then Bob satisfies $B_t = A_r$ with probability at most $\sqrt{K R / T}$. Furthermore for any $b^*_r \neq A_r$,
\[
\E \sum_{t \in \cB_r} \big( \hat{\ell}_t(B_t) - \hat{\ell}_t(b^*_r) \big) \leq O \left( K \sqrt{\frac{T  \log(K)}{R}} \right) \,.
\]
\end{lemma}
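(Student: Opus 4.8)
Write $\gamma=\sqrt{KR/T}$ for Bob's exploration probability, $n=T/R=|\cB_r|$ for the block length, and $t_0$ for its first round; call a round of $\cB_r$ an \emph{exploration round} if Bob plays a uniformly random arm outside $\mathcal{S}_t$ (probability $\gamma$, independently of the past), and an \emph{exploitation round} otherwise, in which case $B_t\in\mathcal{S}_t$ is chosen by Bob's current anytime-Exp3 instance. I first record two structural facts. (i) Along the block $\mathcal{S}_t$ is nondecreasing, takes at most $K$ values, and \emph{never} contains $A_r$: $\mathcal{S}$ starts at $\{1\}$, $A_r\neq 1$ since Alice plays in $\{2,\dots,K\}$, and an arm enters $\mathcal{S}$ only after being explored with observed loss $<1$ — but exploring $A_r$ collides with Alice and yields observed loss $1$. (ii) On an exploitation round $B_t\in\mathcal{S}_t$, hence $B_t\neq A_r=A_t$, so there is no collision there and $\hat\ell_t(B_t)=\ell_t(B_t)$, $\hat\ell_t(b^*_r)=\ell_t(b^*_r)$. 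The first assertion of the lemma follows at once: $B_t=A_r$ can only occur on an exploration round, so $\P(B_t=A_r)\le\gamma$.

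For the regret bound I would split $\sum_{t\in\cB_r}\big(\hat\ell_t(B_t)-\hat\ell_t(b^*_r)\big)$ into exploration rounds and exploitation rounds, and the exploitation rounds further according to whether $b^*_r\in\mathcal{S}_t$. The exploration rounds contribute at most their number, with expectation $\gamma n=\sqrt{KT/R}$. For the exploitation rounds, let $\sigma$ be the first round with $b^*_r\in\mathcal{S}_t$ (set $\sigma:=t_0$ when $b^*_r=1$), so $b^*_r\in\mathcal{S}_t$ for all $t\ge\sigma$ by monotonicity of $\mathcal{S}$. Decompose the rounds of $\cB_r$ from $\sigma$ onward into the $\le K$ maximal sub-phases on which $\mathcal{S}_t$ is constant; on sub-phase $j$, anytime-Exp3 on $k_j\le K$ arms, run over its $N_j$ exploitation rounds, has expected regret $O(\sqrt{k_j N_j\log k_j})$ against any fixed arm of its set, in particular against $b^*_r$. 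Summing over these sub-phases, bounding $k_j$ and $\log k_j$ by $K$ and $\log K$, using $\sum_j N_j\le n$, and applying Cauchy--Schwarz over the $\le K$ sub-phases yields $O(\sqrt{K\log K}\cdot\sqrt{Kn})=O(K\sqrt{(T/R)\log K})$ for the exploitation rounds with $t\ge\sigma$. (Here one conditions on the exploration coins and explored-arm choices, which alone determine the sub-phase structure and the $N_j$ and are independent of the Exp3 instances' internal randomness, then takes the outer expectation.) On an exploitation round with $t<\sigma$ (so $b^*_r\notin\mathcal{S}_t$) we have $\hat\ell_t(B_t)-\hat\ell_t(b^*_r)=\ell_t(B_t)-\ell_t(b^*_r)\le 1-\ell_t(b^*_r)$, which is $\le 0$ unless $\ell_t(b^*_r)<1$; hence these rounds contribute at most $N_{\mathrm{bad}}:=|\{t\in\cB_r:\ b^*_r\notin\mathcal{S}_t,\ \ell_t(b^*_r)<1\}|$.

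The crux is $\E[N_{\mathrm{bad}}]$. Since the adversary is oblivious, $\mathcal{G}:=\{t\in\cB_r:\ell_t(b^*_r)<1\}$ is fixed in advance, and every round counted by $N_{\mathrm{bad}}$ lies in $\mathcal{G}$ and before $b^*_r$ enters $\mathcal{S}_t$. Enumerating $\mathcal{G}$ in increasing order, at each of its rounds at which $b^*_r$ is still absent from the active set the conditional probability (given the past) that Bob explores and picks $b^*_r$ — which, as $\ell_t(b^*_r)<1$, immediately adds $b^*_r$ — is $\gamma/|[K]\setminus\mathcal{S}_t|\ge\gamma/K$. Hence $N_{\mathrm{bad}}$ is stochastically dominated by a geometric random variable with parameter $\gamma/K$, so $\E[N_{\mathrm{bad}}]\le K/\gamma=\sqrt{KT/R}$. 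Adding the three pieces, $\E\sum_{t\in\cB_r}\big(\hat\ell_t(B_t)-\hat\ell_t(b^*_r)\big)\le \gamma n + K/\gamma + O\big(K\sqrt{(T/R)\log K}\big)=O\big(K\sqrt{(T/R)\log K}\big)$, using $\gamma n=K/\gamma=\sqrt{KT/R}\le K\sqrt{(T/R)\log K}$.

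The main obstacle is precisely this last step: before $b^*_r$ has been discovered, the oblivious adversary can hide it behind a loss of $1$ (costless against the comparator) yet keep it out of Bob's active set for a while; obliviousness is what lets us freeze the set $\mathcal{G}$ of ``useful'' rounds and run the clean geometric waiting-time argument giving $\E[N_{\mathrm{bad}}]\le K/\gamma$. The only other point needing care is the Exp3-restart accounting, where one uses that the evolution of $\mathcal{S}_t$ depends only on the exploration coins and the fixed losses, so that one may condition on it and invoke the standard anytime-Exp3 bound on each sub-phase before summing with Cauchy--Schwarz.
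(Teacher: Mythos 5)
Your proof is correct and follows essentially the same route as the paper's: the same observation that $A_r$ can never enter the active set (so collisions occur only on exploration rounds), the same geometric waiting-time bound $O(\sqrt{TK/R})$ for the rounds with $\ell_t(b^*_r)<1$ before $b^*_r$ is discovered (the paper's $\E[\mathcal{L}_0(t^*)]$ term), and the same decomposition into at most $K$ constant-active-set sub-phases with the anytime-Exp3 bound combined via Jensen/Cauchy--Schwarz. Your write-up is somewhat more explicit about the conditioning and the stochastic-domination step, but the argument is the same.
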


\begin{proof}
We first notice that $\hat{\ell}_t(A_r) = 1$ for all $t \in \mathcal{B}_r$, thus $A_r$ is never in the active set, which also implies that $B_t = A_r$ with probability at most $\sqrt{K R / T}$.

Let us now denote $\mathcal{L}_0$ to be the set of all $t \in \mathcal{B}_r$ such that $\ell_t(b^*_r) \not= 1$ and $\mathcal{L}_1$ be the set of all $t \in \mathcal{B}_r$ such that $\ell_t(b^*_r) = 1$. Let us define \[ \mathcal{L}_0(t) = \left|[t] \cap \mathcal{L}_0\right| \,.\]

Let $t^*$ be the time that $b^*_r$ is added to the active set. Note that the active set changes at most $K$ times during each block, hence partitions $\mathcal B_r$ into $\mathcal B_{r,1}\cup\mathcal B_{r,2}\cup\dots\cup\mathcal B_{r,K}$. Observe that if we condition on the set of times that Bob plays within the active set, the conditional law of his play on that subset of times is exactly anytime-Exp3 - the exploration steps occur independently of Bob's performance. Therefore the standard regret bound $O\left(\sqrt{|\mathcal B_{r,i}|K\log K}\right)$ applies on the non-exploration times. Summing over the $K$ Exp3 instances and including the loss from $\mathcal L_0(t^*)$ and from exploration rounds themselves, we have  
\begin{eqnarray*}
 \E\left[\sum_{t \in \mathcal{B}_r} \hat{\ell}_t(B_t) \right] - \sum_{t \in \mathcal{B}_r} \hat{\ell}_t(b^*_r) \leq \E[\mathcal{L}_0(t^*) ]+O\left(\sqrt{\frac{TK}{R}}\right)+ O\left(\mathbb E\left[  \sum_i \sqrt{|\mathcal B_{r,i}|K\log K} \right] \right) \,.
\end{eqnarray*}

Note that Bob samples each arm with probability at least $\sqrt{R / (T K)}$ at each iteration, which implies that $\E[\mathcal{L}_0(t^*) ] \leq O\left( \sqrt{T K / R} \right)$. We have $\sum_{i}|\mathcal B_{r,i}|=\frac{T}{R}$ so by Jensen's inequality, the last term is maximized when $|\mathcal B_{r,i}|=\frac{T}{RK}$ for all $i$. Hence the last term is at most $O\left(K\sqrt{\frac{T \log(K)}{R}}\right).$ This completes the proof. 


\end{proof}

\subsection{Proof} \label{sec:proofnocollision}
First we note that the regret bounds in Lemma \ref{lem:Alicenocollision} and Lemma \ref{lem:Bobnocollision} in fact hold with $\ell_t$ instead of $\hat{\ell}_t$, with an added term $O(\sqrt{K R  T})$ in the former case. Indeed by Lemma \ref{lem:Bobnocollision} the probability of collision is at most $\sqrt{K R / T}$ so the total number of collisions is $\sqrt{K R  T}$.

We now consider any two distinct actions $a , b \in [K]$ such that $a \not= 1$ and show that Alice and Bob achieve small regret against this pair. Let us define a function $f: \{2, 3, \cdots, K\} \to  \{2, 3, \cdots, K\} $ such that $f(i) = a$ for every $i \not= b$ and $f(b) = b$ if $b \in  \{2, 3, \cdots, K\} $. Then, the low swap regret property of Alice ensures that:

\[
\E \sum_{t=1}^T \big( \ell_t(A_t) - \ell_t(f(A_t)) \big) \leq O \left( K T \sqrt{\frac{\log(K)}{R}} + \sqrt{K R  T} \right) \,.
\]

Next, let $g: [K] \to \{ a, b\}$ be a function such that $g(i) = b$ if $i \not= b$ and $g(b) = a$. In particular $g(i) \not= i$ for every $i \in [K]$, then Lemma~\ref{lem:Bobnocollision} ensures that for every $r \in [R]$ and every $A_r$,
\[
\E \sum_{t \in \mathcal{B}_r} \big( \ell_t(B_t) - \ell_t(g(A_r)) \big) \leq O\left( K\sqrt{\frac{T \log K}{R}} \right) \,.
\]

Summing up the above two displays (the second being summed also over all $r\in [R]$) we have:
\begin{eqnarray*}
\E\left[ \sum_{r \in [R]}  \sum_{t \in \mathcal{B}_r} {\ell}_t(A_r) + {\ell}_t(B_t) \right] &\leq& \E\left[\sum_{r \in [R]}  \sum_{t \in \mathcal{B}_r} (\ell_t(f(A_r))  + \ell_t(g(A_r))) \right] 
\\
&&+O \left(K T \sqrt{\frac{\log(K)}{R}} + K\sqrt{T R  \log K} \right)  \,.
\end{eqnarray*}

Note that $\{f(A_r), g(A_r) \} = \{a, b\}$. Since $a$ and $b$ were arbitrary, the final RHS is an upper bound for the expected regret. It remains to optimize over $R$ to obtain $O\left(K T^{3/4} \log^{1/2} K\right).$

\begin{remark}

It is crucial to the above analysis that Alice has low swap regret, not just low regret. If Alice is only guaranteed low regret, even a perfect Bob player might not be able to obtain sublinear regret as a pair. Here we give a simple example to illustrate this point. Consider a game with three actions and $T$ rounds with losses

\[\vec\ell_t =
\begin{cases} 
(0,1,1), 0<t\leq T/3 \\ 
(1,0,1), T/3<t\leq 2T/3\\
(0,0,1), 2T/3<t\leq T
\end{cases}.\]

Suppose that Alice plays action $1$ for the first third, then action $2$ for the next third, and then action $3$ in the final third. Her total loss is $T/3$, so she has $0$ regret. However, given that Alice plays this way, there is no sequence of actions for Bob achieving less than $T/3$ regret. Indeed, Bob always has loss at least $2T/3$ for a total loss of $T$ between Alice and Bob, while the first two actions have total loss $2T/3$. 

\end{remark}

\section{Extension to Many Players with No Collision Information} \label{sec:withoutmany}
Here we extend our analysis without collision information to the case of $m>2$ players, showing a $T^{1-\frac{1}{2m}}$ type regret bound. 

\begin{theorem}
\label{thm:manyplayer}

Let $m\leq K$ and consider the $m$ player bandit game with neither collision information nor shared randomness. There exists an $m$ player strategy such that against any oblivious adversary one has \[R_T=\tilde O\left(mK^{3/2}T^{1-\frac{1}{2m}}\right).\]

\end{theorem}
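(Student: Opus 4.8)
We would obtain Theorem~\ref{thm:manyplayer} by iterating the two-player construction of Section~\ref{sec:without} over $m$ nested time-scales. Fix a branching parameter $R$ (to be set to $R=T^{1/m}$): split $[T]$ into $R$ \emph{level-$1$} blocks of length $T/R$, split each level-$i$ block into $R$ \emph{level-$(i+1)$} blocks for $i=1,\dots,m-2$, so a level-$i$ block has length $T/R^i$ and refines all coarser ones. Player~$1$ keeps a constant action on each level-$1$ block and runs, over those $R$ blocks (each a single round), the bandit swap-regret algorithm of Theorem~\ref{thm:swap} on the arms $\{2,\dots,K\}$; so arm~$1$ is reserved. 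For $2\le i\le m-1$, player~$i$ keeps a constant action on each level-$i$ block, restarts at every level-$(i-1)$ block boundary, uses the active-set mechanism of Section~\ref{sec:Bobnocollision} to probe $\{2,\dots,K\}$ for arms unoccupied by slower players (adding a probed arm when its observed loss is $<1$), and runs the swap-regret algorithm of Theorem~\ref{thm:swap} on that growing set over the $R$ level-$i$ sub-blocks of the current level-$(i-1)$ block; she too reserves arm~$1$. The fastest player~$m$ restarts at every level-$(m-1)$ block boundary and runs anytime-Exp3 on a growing active set initialised to $\{1\}$, with the same probing rule. Because the block structures are nested, within any level-$i$ block all of players $1,\dots,i-1$ are static (on $i-1$ distinct arms) and player~$i$ restarts whenever a slower player moves; hence player~$i$ never adds a slower player's arm to her active set, two non-exploring players never collide, and every collision round contains an exploring player, so the total number of collision rounds is $\tilde O$ of the sum of the per-player exploration budgets. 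As in the $\hat{\ell}$-vs-$\ell$ step of Section~\ref{sec:proofnocollision}, replacing observed by true losses costs only that many rounds.

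The remaining ``off-policy'' obstacle --- a slower player's observed losses are inflated by collisions, and faster players react to her --- is handled by carrying the two-player comparator trick through the recursion. Given a comparator set $C$ of $m$ distinct arms, player~$1$ is analysed against the swap map $\Phi_1$ with $\Phi_1(a)=a$ for $a\in C$ and $\Phi_1(a)=c^{\star}$ for $a\notin C$, where $c^{\star}\in C\setminus\{1\}$ is any fixed default; note $\Phi_1(a_1)\in C\setminus\{1\}$ for every action $a_1\in\{2,\dots,K\}$, and $\Phi_1(a_1)=a_1$ only when $a_1\in C$. On a level-$1$ block where player~$1$ sits at $a_1$, players $2,\dots,m$ are analysed recursively against the $(m-1)$-element comparator $C\setminus\{\Phi_1(a_1)\}$, which contains neither $a_1$ nor, one level down and inductively, the current arm of any slower player among $2,\dots,m$. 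The invariant \emph{``the comparator arm assigned to any player is distinct from every slower player's current arm and is never arm~$1$ for players $1,\dots,m-1$''} is exactly what lets the bandit swap-regret guarantee against observed losses transfer into a team regret bound against true losses: at that comparator arm no slower player forces a loss of~$1$, and collisions there with a faster player are again absorbed into the exploration/collision budget. For $m=2$ this is precisely the pair $f,g$ of Section~\ref{sec:proofnocollision}.

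Quantitatively, let $g_m(S)$ be the team regret on horizon $S$. Theorem~\ref{thm:swap} applied to player~$1$ ($R$ rounds, loss range $T/R$) gives $O(KT\sqrt{\log K/R})$; players $2,\dots,m$ inside each of the $R$ level-$1$ blocks are handled by the $(m-1)$-player bound, giving $R\cdot g_{m-1}(T/R)$; and the exploration losses plus collision rounds, with the rates of Lemma~\ref{lem:Bobnocollision} tuned at each level, give a term of the same order as the first. Thus $g_m(T)\le \tilde O\!\big(KT R^{-1/2}\big)+R\cdot g_{m-1}(T/R)$, with base case $g_1(S)=\tilde O(\sqrt{KS})$ (plain anytime-Exp3; $m=1$ has no collisions). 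Inserting the hypothesis $g_{m-1}(S)=\tilde O\!\big((m-1)K^{3/2}S^{1-\frac1{2(m-1)}}\big)$ makes the two terms $\tilde O(KTR^{-1/2})$ and $\tilde O\!\big((m-1)K^{3/2}T^{1-\frac1{2(m-1)}}R^{\frac1{2(m-1)}}\big)$; balancing yields $R=T^{1/m}$, for which both equal $\tilde O\!\big(mK^{3/2}T^{1-\frac1{2m}}\big)$, closing the induction. Since only a $K^{3/2}$ dependence is claimed we need not track $K$-powers carefully, which is what permits the crude collision accounting above.

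The hard part is the comparator recursion together with verifying the collision-avoidance structure: pushing the ``swap against $g(A_r)$'' device --- and the phenomenon of the Remark in Section~\ref{sec:proofnocollision}, that low \emph{swap} regret rather than merely low regret is essential --- through $m-1$ levels. One must (a) build the cascade of swap maps and shrinking comparator sets so the displayed invariant survives at every level against the (adaptive, because of faster players) loss sequences each slower player actually faces, and (b) check that the nested blocks really force every player to restart whenever \emph{any} slower player switches, so the slower players are genuinely static within a block and the active-set rule never locks onto an occupied arm. Granting (a) and (b), the rest --- bounding the collision count over the $\binom{m}{2}$ ordered player pairs by the summed per-level exploration budgets, and optimising the $R$'s and the $\log$/$K$ factors --- extends Sections~\ref{sec:Alicenocollision}--\ref{sec:proofnocollision} routinely.
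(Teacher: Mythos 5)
Your skeleton matches the paper's: nested blocks of geometrically decreasing length $T^{1-i/m}$, low \emph{swap} regret for the slower players, active-set probing to find unoccupied arms, a comparator-assignment device generalizing $f,g$, and a balance at $R=T^{1/m}$ giving $T^{1-\frac{1}{2m}}$. But there is one concrete structural gap. In your scheme only Player $m$ has a guaranteed collision-free fallback (arm $1$); Players $2,\dots,m-1$ ``reserve arm $1$'' and probe $\{2,\dots,K\}$, so their active sets start empty and you never say what they play before discovering an unoccupied arm. If an intermediate player must sit on an arbitrary arm while her active set is empty, she can collide persistently with a slower, non-exploring player, which breaks your claim that ``every collision round contains an exploring player'' and hence the entire $\hat\ell$-versus-$\ell$ accounting for the slower players. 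The paper's fix is to give each player a \emph{distinct} safe arm: Player $i$ is restricted to $\{m-i+1,\dots,K\}$ and initializes his active set to $\{m-i+1\}$, an arm no slower player is allowed to touch (this is where the hypothesis $m\leq K$ enters). This restriction in turn forces the comparator assigned to Player $k$ to satisfy $\tilde A_k\geq m-k+1$, and the paper's combinatorial lemma needs a genuine counting argument (intersecting $\{a_j\}$ with $\{m-k+1,\dots,K\}$ and removing the $k-1$ already-assigned values leaves at least one candidate) to show such an assignment always exists while also avoiding all slower players' current arms. Your recursive construction $C\mapsto C\setminus\{\Phi_1(a_1)\}$ handles the ``avoid slower players'' half but never engages with the ``stay inside each player's allowed action set'' half, and you explicitly defer both items (a) and (b) — which is exactly where the content of the paper's proof lies (its Lemma on assigning a top-$m$ action to each player, and Lemma~\ref{lem:Playeriswapregret}). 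The quantitative recursion you set up does close correctly once those two points are supplied, and the paper proves the same bound by summing the per-player, per-block swap-regret lemma directly rather than by induction on $m$.
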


The proof is given in subsection~\ref{subsec:manyplayerproof}. In preparation, we first describe the algorithm and then give two lemmas. The first lemma essentially controls the swap regret of each player. The second lemma generalizes our functions $f,g$ in the previous section to the $m$ player case.

\subsection{Algorithm description}

The algorithm is similar to the $m=2$ case, with each player using blocks of a different size. We label the players $1$ through $m$, with Player $1$ playing in the largest blocks. More precisely, Player $i$ plays a fixed action on each block $\mathcal B^i_r$ of length $|\mathcal B^i_r|=T^{1-\frac{i}{m}}$. We will denote by $\mathcal B^i_r$ the $r$th such block, for $r\leq T^{i/m}$. For $j\leq i$, we will denote by $A_{j,\mathcal B^i_r}$ the fixed action played by Player $j$ during $\mathcal B^i_r.$

Paralleling the $m=2$ case, Player $i$ only plays actions in the set $\{m-i+1,m-i+2,\dots,K\}$, and he starts each round with an active arm set consisting only of $\{m-i+1\}$. He performs random explorations $\alpha:=K^{1/2}T^{-1/2m}$ fraction of the time. While playing in the active arm set, he uses an anytime-low-swap-regret algorithm, which achieves expected regret $O(K|\mathcal B^i_1|\sqrt{S\log(K)})$ after $S$ consecutive $i$-blocks. Note that is easy to turn the low-swap-regret algorithm of \cite{Sto05} into an anytime algorithm with the same guarantee by shrinking the learning rate and restarting on a dyadic set of times. Player $i$ also resets his memory every $T^{1/m}$ blocks (or $T^{1-\frac{i-1}{m}}$ timesteps, or every time a new $\mathcal B^{i-1}_{r'}$ block begins).

\subsection{Swap regret of each player}

The lemma we need controls the swap regret of Player $i$ on each $\mathcal B^{i-1}_r$ block.

\begin{lemma} \label{lem:Playeriswapregret}
Player $i$ satisfies, for each $r\leq T^{\frac{i-1}{m}}$,
\[
\max_{\Phi : \{m-i+1, \hdots, K\} \rightarrow \{m-i+1, \hdots, K\}} \E \sum_{t\in \mathcal B^{i-1}_r} \big( \hat{\ell}_t(A_{i,t}) - {\ell}_t(\Phi(A_{i,t})) \big) = O\left(K^{3/2}T^{1-\frac{2i-1}{2m}} \sqrt{\log(K)} \right) \,.
\]

Moreover the expectation takes as given the actions $A_{j,\mathcal B^{i-1}_r}$ for $j\leq i-1$ (which are constant during $\mathcal B^{i-1}_r).$
\end{lemma}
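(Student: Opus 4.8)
The plan is to reduce Lemma~\ref{lem:Playeriswapregret} to the single-player anytime low-swap-regret guarantee, exactly paralleling the proof of Lemma~\ref{lem:Alicenocollision} and Lemma~\ref{lem:Bobnocollision} but tracking the block sizes and exploration rate carefully. Fix $i$ and $r\le T^{(i-1)/m}$, and condition on the (constant on $\mathcal B^{i-1}_r$) actions $A_{j,\mathcal B^{i-1}_r}$ for $j\le i-1$. Since Player~$i$ resets his memory at the start of each $\mathcal B^{i-1}_r$ block, his behaviour on this block is a fresh instance of the algorithm: anytime-low-swap-regret on a growing active arm set, punctuated by random explorations at rate $\alpha=K^{1/2}T^{-1/2m}$. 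The number of $i$-blocks inside one $\mathcal B^{i-1}_r$ block is $S=|\mathcal B^{i-1}_r|/|\mathcal B^i_1|=T^{1/m}$, and Player~$i$ treats each $i$-block as one round with losses of range $|\mathcal B^i_1|=T^{1-i/m}$.

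The key steps, in order. First, as in Lemma~\ref{lem:Bobnocollision}, observe that the active set changes at most $K$ times during $\mathcal B^{i-1}_r$, so this block is partitioned into at most $K$ segments on each of which Player~$i$ runs a single instance of anytime-low-swap-regret. Conditioning on the set of exploration times, the conditional law of the non-exploration plays is exactly that algorithm, so the swap-regret bound $O\big(K\sqrt{S_\kappa\log K}\big)$ applies on segment $\kappa$ of length $S_\kappa$ $i$-blocks, where the per-round loss range is $|\mathcal B^i_1|$. Second, sum over the at most $K$ segments using $\sum_\kappa S_\kappa=S=T^{1/m}$ and Jensen (worst case $S_\kappa=T^{1/m}/K$), giving $O\big(|\mathcal B^i_1|\cdot K\cdot\sqrt{(T^{1/m}/K)\log K}\cdot K\big)=O\big(K^{3/2}|\mathcal B^i_1|T^{1/2m}\sqrt{\log K}\big)=O\big(K^{3/2}T^{1-\frac{2i-1}{2m}}\sqrt{\log K}\big)$. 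Third, bound the cost of the exploration rounds themselves: there are $|\mathcal B^{i-1}_r|=T^{1-\frac{i-1}{m}}$ timesteps, an $\alpha$ fraction explored, costing at most $\alpha\,T^{1-\frac{i-1}{m}}=K^{1/2}T^{1-\frac{i-1}{m}-\frac{1}{2m}}=K^{1/2}T^{1-\frac{2i-1}{2m}}$, which is dominated by the previous term. Fourth, as in Lemma~\ref{lem:Bobnocollision}, account for the loss $\mathcal L_0(t^*)$ accumulated on arm $m-i+1$ before the comparator arm $\Phi(A_{i,t})$ (or rather the arm the swap map points to) enters the active set: since each arm is sampled with probability at least $\alpha/K=K^{-1/2}T^{-1/2m}$ per step, such an arm is added within $O(K^{1/2}T^{1/2m})$ $i$-blocks in expectation, i.e.\ within $O(K^{1/2}T^{1/2m}|\mathcal B^i_1|)=O(K^{1/2}T^{1-\frac{2i-1}{2m}})$ timesteps, again dominated. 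Fifth, note the swap maps $\Phi$ in the statement range over $\{m-i+1,\dots,K\}$, which matches the arm set of Player~$i$; arms $1,\dots,m-i$ are never played by Player~$i$ so there is nothing to prove for them, and the comparator $\Phi(A_{i,t})$ never coincides with the $m-1$ other players' fixed actions in a way that would force a collision — but in fact this point is handled at the next level (Lemma~\ref{lem:blahblah}/the proof of Theorem~\ref{thm:manyplayer}), so here we only need the raw swap-regret bound.

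The main obstacle is bookkeeping rather than a genuine mathematical difficulty: one must be careful that the anytime property of the low-swap-regret algorithm is genuinely needed because the active-set segments have data-dependent lengths $S_\kappa$ not known in advance, and that restarting a fresh Exp3-type swap-regret instance each time the active set grows does not blow up the $\sqrt{\log K}$ factor — which is exactly why summing over $K$ segments with Jensen costs only an extra $\sqrt{K}$. One should also double-check that replacing $\hat\ell_t$ by $\ell_t$ for the comparator term (the statement writes $\hat\ell_t(A_{i,t})$ but $\ell_t(\Phi(A_{i,t}))$) is legitimate: on the left we keep the effective loss because Player~$i$ actually suffers it, and on the right $\ell_t\le\hat\ell_t$ so the bound only gets easier; the collision cost itself is absorbed at the outer level as in Section~\ref{sec:proofnocollision}. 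Modulo these routine checks, the displayed bound $O(K^{3/2}T^{1-\frac{2i-1}{2m}}\sqrt{\log K})$ follows by collecting the four contributions above, the swap-regret term being the dominant one.
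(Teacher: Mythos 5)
Your main line of argument matches the paper's: decompose $\mathcal B^{i-1}_r$ into at most $K$ segments according to the growth of the active set, apply the anytime low-swap-regret guarantee with loss range $|\mathcal B^i_1|=T^{1-i/m}$ on each segment, use Jensen over the $K$ segments to get the dominant term $K^{3/2}T^{1-\frac{2i-1}{2m}}\sqrt{\log K}$, and separately charge the exploration rounds ($\alpha|\mathcal B^{i-1}_r|$) and the loss accumulated before the comparator arm enters the active set ($O(K/\alpha)$ favorable timesteps in expectation). All of that is correct and is exactly the paper's computation.

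However, there is a genuine error in your handling of the $\hat\ell$ versus $\ell$ discrepancy in the comparator term. You assert that since $\ell_t\leq\hat\ell_t$, ``the bound only gets easier,'' but the comparator enters with a minus sign: the algorithm's guarantee controls $\sum_t\bigl(\hat\ell_t(A_{i,t})-\hat\ell_t(\Phi(A_{i,t}))\bigr)$, whereas the lemma asks for $\sum_t\bigl(\hat\ell_t(A_{i,t})-\ell_t(\Phi(A_{i,t}))\bigr)$, which is \emph{larger} by
\[
\E\Bigl[\textstyle\sum_{t\in\mathcal B^{i-1}_r}\bigl(\hat\ell_t(\Phi(A_{i,t}))-\ell_t(\Phi(A_{i,t}))\bigr)\Bigr]\geq 0 \,.
\]
So the direction of the inequality is backwards, and this extra term cannot be ``absorbed at the outer level as in Section~\ref{sec:proofnocollision}'' because, unlike Lemma~\ref{lem:Bobnocollision} (which is stated with $\hat\ell_t$ on both sides), the present lemma is itself stated with the true loss $\ell_t$ on the comparator; the conversion must happen inside this proof. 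The paper closes this gap by observing that $\hat\ell_t(\Phi(A_{i,t}))\neq\ell_t(\Phi(A_{i,t}))$ only when some other player collides with the comparator arm, which (for the relevant comparators) happens only during the other players' explorations, contributing at most $m\alpha|\mathcal B^{i-1}_r|=O\bigl(K^{3/2}T^{1-\frac{2i-1}{2m}}\bigr)$ using $m\leq K$. Your fifth remark about the comparator not colliding with the slower players' fixed actions is in fact the ingredient needed here, but you did not connect it to this term, and as written your proof would only establish the bound with $\hat\ell_t(\Phi(A_{i,t}))$ in place of $\ell_t(\Phi(A_{i,t}))$.
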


\begin{proof} The proof is similar to that of Lemma~\ref{lem:Bobnocollision}. We first fix a function $\Phi$.

Again set $\mathcal{L}_0(i)$ to be the set of all $t \in \mathcal{B}^{i-1}_r$ such that $\ell_t(i) \not= 1$ and $\mathcal{L}_1(i)$ be the set of all $t \in \mathcal{B}^{i-1}_r$ such that $\ell_t(i) = 1$. Let us define \[ \mathcal{L}_0(i,t) = \left|[t] \cap \mathcal{L}_0(i)\right| \,.\]

Let $t^*(i)$ be the time that $i$ is added to the active set. Note that the active set changes for at most $K$ times during each block, hence partitions $\mathcal B^{i-1}_r$ into $\mathcal B^{i-1}_{r,1}\cup\mathcal B^{i-1}_{r,2}\cup\dots\cup\mathcal B^{i-1}_{r,K}$. So the swap-regret bound $O\left(K\sqrt{|\mathcal B^{i-1}_{r,j}|\log K}\right)$ applies to the set of non-exploration times in $\mathcal B^{i-1}_{r,j}$. Summing over the $K$ subblocks $\mathcal B^{i-1}_{r,i}$ and including the loss from $\mathcal L_0(i,t^*)$ and from exploration rounds themselves, we have  
\begin{eqnarray*}
& \E\left[\sum_{t \in \mathcal{B}^{i-1}_r} \hat{\ell}_t(A_{i,t})  - \sum_{t \in \mathcal{B}^{i-1}_r} \hat{\ell}_t(\Phi(A_{i,t})) \right]\\
 \leq &\sum_i\E[\mathcal{L}_0(i,t^*(i)) ]+O\left(\alpha T^{1-\frac{i-1}{m}}\right)+ O\left(\mathbb E\left[ KT^{1-\frac{i}{m}}\sqrt{\log K}\cdot \sum_{j\leq K} \sqrt{\frac{|\mathcal B^{i-1}_{r,j}|}{|\mathcal B^i_1|}} \right] \right) \,.
\end{eqnarray*}

Recall we set an exploration rate of $\alpha=K^{1/2}T^{-1/2m}$. To estimate the $\E[\mathcal{L}_0(i,t^*(i))]$ terms we observe that each $i$-block contributes at most $T^{1-\frac{i}{m}}$ and is added to the active set with probability at least $\frac{\alpha}{K}=K^{-1/2}T^{-1/2m}$ whenever it gives a positive contribution. Therefore both of the first two terms are $O(K^{3/2}T^{1-\frac{2i-1}{2m}})$ in total.

By Jensen's inequality the last term is maximized when $|\mathcal B^{i-1}_{r,j}|=\frac{T^{1-\frac{i-1}{m}}}{K}$ for all $j$. In this case the final sum has $K$ terms each of size $T^{-\frac{1}{2m}}K^{-1/2}$. Combining, we have

\begin{eqnarray*}
 \E\left[\sum_{t \in \mathcal{B}^{i-1}_r} \hat{\ell}_t(A_{i,t})  - \sum_{t \in \mathcal{B}^{i-1}_r} \hat{\ell}_t(\Phi(A_{i,t})) \right]
 \leq &O\left(K^{3/2}T^{1-\frac{2i-1}{2m}} \sqrt{\log(K)} \right) 
\end{eqnarray*}

This is almost what we need. We also need to control

\[\mathbb E\left[\sum_{t\in\mathcal B^{i-1}_r}\left(\hat\ell_t(\Phi(A_{i,t}))-\ell_t(\Phi(A_{i,t}))\right)\right].\]

This is simply bounded by $|\mathcal B_r^{i-1}|\alpha m$ for the explorations of the $m$ other players - note that the value of $\alpha$ is the same for all players. As $m\leq K$ we have 

\[\mathbb E\left[\sum_{t\in\mathcal B^{i-1}_r}\left(\hat\ell_t(\Phi(A_{i,t}))-\ell_t(\Phi(A_{i,t}))\right)\right]=O\left(K^{3/2}T^{1-\frac{2i-1}{2m}}\right).\]

Adding gives the claimed result.

\end{proof}

\subsection{Assigning a top-$m$ Action to each player}

We now generalize our functions $f,g$ from the two-player case. There, the point was to ensure that $\{f(A_r),g(A_r)\}=\{a,b\}$ for any given actions $\{a,b\}$, and apply this when $a,b$ are the best two actions. This allowed us to compare the regret of Alice against $f$ and the regret of Bob against $g$. Here we describe a more general construction. The construction takes as given a set of $m$ ``optimal" actions $\{a_1,a_2,\dots,a_m\}$ and a sequence of \emph{not necessarily distinct} actions $A_1, \dots, A_m\in [K]$ such that $A_i\geq m-i+1$ for all $i$; the actions $A_i$ represent the actions currently being played by the players. The construction assigns Player $i$ a distinct one of these actions $\tilde A_i=a_j$ for some $j$. Explicitly, for $k=1$ to $k=m$, we:




\begin{enumerate}

	\item Set $\tilde A_k= A_k$ if both $A_k\in \{a_j|j\leq m\}$ and $A_k\notin \{\tilde A_j|j<k\}$. Essentially, we define $\tilde A_k=A_k$ when possible.
	\item In the case that $\tilde A_k= A_k$ does not happen in the previous step, define $\tilde A_k=a_j$ for the smallest value of $j$ such that $a_j\notin \{\tilde A_j|j<k\}$ and $a_j\geq m-k+1$.

\end{enumerate}

\begin{lemma} In the setting above, with actions

\[\{a_1,a_2,\dots,a_m\}\]
\[A_i\geq m-i+1,\text{ for } i\leq m.\]

the following hold:

\begin{enumerate}[label=(\Alph*)]
	\item $\tilde A_k$ has a well-defined value. Furthermore:
	\begin{enumerate}
	    \item $\tilde A_k\in \{a_1,\dots, a_m\}.$
	    \item $\tilde A_k\geq m-k+1.$
	    \item $\tilde A_k\notin \{\tilde A_1,\dots,\tilde A_{k-1}\}.$
	    \item $\tilde A_k\notin \{A_1,\dots,A_{k-1}\}.$
	\end{enumerate}
	\item $\tilde A_k$ is a function of the set $\{a_j|j\leq m\}$ and the sequence $(A_1,\dots, A_k)$.
	\item The set $\{\tilde A_i|i\leq m\}$ is a permutation of the set  $\{a_i|i\leq m\}$.
\end{enumerate}

\end{lemma}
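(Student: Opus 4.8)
The plan is to argue by induction on $k$. At step $k$ I would first establish parts (A1), (A2), (A3) and (B) for the newly defined value $\tilde A_k$, then derive (A4) at step $k$ from the facts already established at steps $1,\dots,k$, and finally read off (C) once $k=m$. Throughout I would distinguish the two branches of the construction: \emph{Case 1}, where the Case-1 test passes and $\tilde A_k=A_k$, and \emph{Case 2}, where it fails and $\tilde A_k=a_j$ for the smallest admissible index $j$. The induction hypothesis I carry is that (A1)--(A3) and (B) hold at all indices $<k$; in particular $\tilde A_1,\dots,\tilde A_{k-1}$ are then distinct elements of the $m$-element set $\{a_1,\dots,a_m\}$.

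The only step that is more than bookkeeping is checking that Case 2 is well-defined, i.e.\ that the set of indices over which we minimize is nonempty: there is some $j$ with $a_j\geq m-k+1$ and $a_j\notin\{\tilde A_1,\dots,\tilde A_{k-1}\}$. Since $\{a_1,\dots,a_m\}$ has $m$ elements and $\tilde A_1,\dots,\tilde A_{k-1}$ are $k-1$ distinct elements of it, exactly $m-k+1$ of the $a_j$ remain, and they are distinct. If all of them were at most $m-k$, we would be placing $m-k+1$ distinct integers into $\{1,\dots,m-k\}$, which is impossible; hence at least one remaining $a_j$ is $\geq m-k+1$, which is exactly what Case 2 requires. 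I expect this pigeonhole argument (together with keeping the interdependence of (A3) and the well-definedness claim straight in the induction) to be the crux; everything else reduces to unwinding definitions.

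For the remaining local properties: in Case 1, $\tilde A_k=A_k$ lies in $\{a_j\}$ by the Case-1 test (so (A1)), satisfies $A_k\geq m-k+1$ by hypothesis (so (A2)), and avoids $\{\tilde A_1,\dots,\tilde A_{k-1}\}$ by the Case-1 test (so (A3)); in Case 2, $\tilde A_k=a_j$ with $a_j\geq m-k+1$ and $a_j\notin\{\tilde A_1,\dots,\tilde A_{k-1}\}$ hold directly by the selection rule, giving (A1)--(A3). Part (B) holds because $\tilde A_k$ is computed from $A_k$, the set $\{a_j\}$, and $\{\tilde A_1,\dots,\tilde A_{k-1}\}$, and the latter is a function of $\{a_j\}$ and $(A_1,\dots,A_{k-1})$ by the induction hypothesis.

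Finally, for (A4) I would argue by contradiction: suppose $\tilde A_k=A_i$ for some $i<k$. Since $\tilde A_k$ equals either $A_k$ or some $a_j$, in either situation $A_i$ is an element of $\{a_1,\dots,a_m\}$. Examining step $i$: if step $i$ was Case 1 then $\tilde A_i=A_i$; if it was Case 2 then the Case-1 test failed there, and since $A_i\in\{a_j\}$ the failure can only be because $A_i\in\{\tilde A_1,\dots,\tilde A_{i-1}\}$, so again $A_i=\tilde A_{i'}$ for some $i'<i$. Either way $\tilde A_k=\tilde A_{i'}$ with $i'<k$, contradicting (A3) at step $k$ (in Case 1 the Case-1 test gives $A_k\notin\{\tilde A_1,\dots,\tilde A_{k-1}\}$, in Case 2 the selection rule gives $a_j\notin\{\tilde A_1,\dots,\tilde A_{k-1}\}$). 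Part (C) is then immediate: at $k=m$, properties (A1) and (A3) say that $\{\tilde A_1,\dots,\tilde A_m\}$ consists of $m$ distinct elements of the $m$-element set $\{a_1,\dots,a_m\}$, so the two sets coincide.
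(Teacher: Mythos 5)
Your proof is correct and takes essentially the same route as the paper's: the same counting/pigeonhole argument establishes that an admissible value exists in Case 2, and your contradiction argument for (A4) rests on the same key observation as the paper (any $A_i$ with $i<k$ that lies in $\{a_1,\dots,a_m\}$ must already have been claimed as some $\tilde A_{i'}$ with $i'\leq i$, so it is excluded by (A3)).
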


\begin{proof} To see claim $(A)$, we think about choosing $\tilde A_k$. Points $(a),(b),(c)$ are all clear by construction assuming a suitable value of $\tilde A_k$ always exists. So the point is to show a value $\tilde A_k$ exists and is not equal to any $A_j$ for $j\leq k-1$.

To see this, we first observe that if $A_j\in \{a_1,\dots,a_m\}$ for $j\leq k-1$ then $\tilde A_i=A_j$ for some $i\leq k-1$. That is, $A_j$ is never actually available as a value of $\tilde A_k$. Indeed, by construction $\tilde A_j$ would equal $A_j$ if $A_j\in \{a_1,\dots,a_m\}$ and no previous $\tilde A_i$ equaled $A_j$. 

Therefore, any value \[\tilde A_k\in \left(\{a_j|j\leq m\}\cap \{m-k+1,\dots,K\} \right)\backslash \{\tilde A_j|j\leq k-1\}\] satisfying $(a),(b),(c)$ automaticaly satisfies $(d)$. Assuming one exists, step 2 of the algorithm will pick such a value for $\tilde A_k$. So we are left to prove that the above set of possible $\tilde A_k$ values is non-empty. 

We do this with a simple counting argument. Observe that of the three subsets above, the first $\{a_j|j\leq m\}$ has $m$ elements, the second has $K-m+k$, and the third has $k-1$. Intersecting the first two results in a set with at least $k$ elements, and removing $k-1$ leaves at least $1$. We conclude that a value of $\tilde A_k$ making $(a),(b),(c)$ true exists, hence is picked by the algorithm, and this value automatically satisfies $(d)$ as well.


Claim $(B)$ is true by induction, since $\tilde A_k$ is a function of the set $\{a_j|j\leq m\}$, the sequence $(A_1,\dots, A_k)$ and the sequence $(\tilde A_1,\dots,\tilde A_{k-1}).$ 

Claim $(C)$ is implied by claim $(A)$.

\end{proof}

We denote by $\Phi_k$ the functions obtained from the Lemma, which by $(B)$ take as input sets $\{a_j|j\leq m\}$ of actions, and a sequence $(A_1,\dots,A_k)$ of not-necessarily-distinct action. We use the following notation to suggest that $A_k$ is the actual argument, while the rest are (fixed) parameters:

\[\Phi_{k;(a_1,\dots,a_m);(A_1,\dots,A_{k-1})}(A_k)=\Phi_{k;(a_1,\dots,a_m);(A_1,\dots,A_{k})}.\]

Claim $(C)$ of the lemma says that these functions satisfy the property 

\[\{\Phi_{j;(a_1,\dots,a_m);(A_1,\dots,A_{j-1})}(A_j)|j\leq m\}=\{a_j|j\leq m\}\]

as long as $A_j\geq m-j+1$ for all $j$. 

To obtain the desired regret bound for the multiplayer bandit game, we use these functions $\Phi$ as our swap functions, where the actions $A_j$ are those of the slower players. 

\subsection{Proof of Theorem~\ref{thm:manyplayer}}

\label{subsec:manyplayerproof}

\begin{proof} For any distinct actions $(a_1,\dots,a_m)$ we show the regret bound

\[\mathbb E\left[\sum_{t\leq T}\sum_{i\leq m} \left(\hat\ell_t(A_{i,t})-\ell_t(a_i)\right)\right]=O\left(mK^{3/2}T^{1-\frac{1}{2m}}\sqrt{\log(K)}\right).\]

For player $i$, consider each block $\mathcal B^{i-1}_r$, and apply Lemma~\ref{lem:Playeriswapregret} with the $\Phi$ function above,

\[\Phi_i(A_i)=\Phi_{i;(a_1,\dots,a_m);(A_{1,\mathcal B^{i-1}_r},\dots,A_{i-1,\mathcal B^{i-1}_r})}(A_i).\]

Note that in constructing $\Phi$ we allowed the sequence $A_1,\dots,A_{i-1}$ to have repeats, which might happen here if some Player $j$ for $j\leq i-1$ is exploring outside his active arm set.

We let $\Phi_{i,t}$ be the function $\Phi_i$ during for the block containing time $t$. For each fixed $i$, summing over all $T^{\frac{i-1}{m}}$ blocks $\mathcal B^{i-1}_r$ for varying $r$ shows

\begin{eqnarray*}
 \E\left[\sum_{t \leq T} \hat{\ell}_t(A_{i,t})  - \sum_{t \leq T} {\ell}_t(\Phi_{i,t}(A_{i,t})) \right]
 \leq &O\left(K^{3/2}T^{1-\frac{1}{2m}} \sqrt{\log(K)} \right) 
\end{eqnarray*}

 By construction, at each time $t$ the functions $\{\Phi_{j,t}|j\leq m\}$ take all the values $\{a_j|j\leq m\}$ exactly once. Therefore summing the previous inequality over $i\leq m$ gives the claimed regret bound.

\end{proof}


\section*{Acknowledgement}
This work was supported in part by an NSF graduate fellowship and a Stanford graduate fellowship.

\bibliographystyle{plainnat}
\bibliography{newbib}
\end{document}